\newtheorem{assumption}{\bf Assumption}
\newtheorem{lemma}{\bf Lemma}
\newtheorem{theorem}{\bf Theorem}
\newtheorem{definition}{\bf Definition}
\newcommand{\Pa}{\boldsymbol{\it Pa}}
\newcommand{\pa}{\boldsymbol{\it pa}}
\newcommand{\Sphere}{{\it Sphere}}
\DeclareMathOperator*{\argmin}{argmin}
\DeclareMathOperator*{\argmax}{argmax}
\newcommand{\E}{\mathbb{E}}
\newcommand{\I}{\mathbb{I}}
\newcommand{\bA}{\boldsymbol{A}}
\newcommand{\be}{\boldsymbol{e}}
\newcommand{\bL}{\boldsymbol{L}}
\newcommand{\bq}{\boldsymbol{q}}
\newcommand{\bS}{\boldsymbol{S}}
\newcommand{\bs}{\boldsymbol{s}}
\newcommand{\bU}{\boldsymbol{U}}
\newcommand{\bP}{\boldsymbol{P}}
\newcommand{\bV}{\boldsymbol{V}}
\newcommand{\bv}{\boldsymbol{v}}
\newcommand{\bM}{\boldsymbol{M}}
\newcommand{\bX}{\boldsymbol{X}}
\newcommand{\bx}{\boldsymbol{x}}
\newcommand{\bz}{\boldsymbol{z}}
\newcommand{\bZ}{\boldsymbol{Z}}
\newcommand{\btheta}{\boldsymbol{\theta}}
\newcommand{\compilehidecomments}{true}
	\newcommand{\wei}[1]{}
	\newcommand{\nuoya}[1]{}
	\newcommand{\wei}[1]{{\color{blue!50!black}  [\text{Wei:} #1]}}
	\newcommand{\nuoya}[1]{{\color{red!60!black} [\text{Nuoya:} #1]}}
\newcommand{\compilefullversion}{true}
	\newcommand{\OnlyInFull}[1]{}
	\newcommand{\OnlyInShort}[1]{#1}
	\newcommand{\OnlyInFull}[1]{#1}%
	\newcommand{\OnlyInShort}[1]{}%
\title{Combinatorial Pure Exploration of Causal Bandits}
\begin{document}
\title{\LARGE Combinatorial Pure Exploration of Causal Bandits}
\author{Nuoya Xiong\\
 IIIS, 
 Tsinghua University\\
 xiongny20@mails.tsinghua.edu.cn \\
 \and
            Wei Chen\\
Microsoft Research\\
 weic@microsoft.com
}
\date{}

\maketitle

\begin{abstract}
  The combinatorial pure exploration of causal bandits is the following online learning task: given a causal graph with unknown causal inference distributions, 
		in each round we choose a subset of variables to intervene or do no intervention,
		and observe the random outcomes of all random variables, with the goal that using as few rounds as possible, we can output an intervention that gives the best 
		(or almost best) expected outcome on the reward variable $Y$ with probability at least $1-\delta$, where $\delta$ is a given confidence level.
	We provide the first gap-dependent and fully adaptive pure exploration algorithms on two types of causal models --- the binary generalized linear model (BGLM) and general graphs.
	For BGLM, our algorithm is the first to be designed specifically for this setting and achieves polynomial sample complexity, while all existing algorithms for general graphs
	have either sample complexity exponential to the graph size or some unreasonable assumptions.
	For general graphs, our algorithm provides a significant improvement on sample complexity, and it nearly matches the lower bound we prove.
	Our algorithms achieve such improvement by a novel integration of prior causal bandit algorithms and prior adaptive pure exploration algorithms, 
	the former of which utilize the rich observational feedback in causal bandits but are not adaptive to reward gaps, while the latter of which have the issue in reverse.
\end{abstract}

\vspace{-3mm}
\section{Introduction}
\label{sec.intro}

\vspace{-2mm}
Stochastic multi-armed bandits (MAB) is a classical framework in sequential decision making \cite{robbins1952some,bubeck2012regret}. 
In each round, a learner selects one arm based on the reward feedback from the previous rounds, and receives a random reward of the selected arm sampled from an unknown distribution, 
	with the goal of accumulating as much rewards as possible over $T$ rounds.
This framework models the exploration-exploitation tradeoff in sequential decision making ---
	whether one should select the best arm so far based on the past observations or one should try some arms that have not been played much.
Pure exploration is an important variant of the multi-armed bandit problem, where the goal of the learner is not to accumulate reward but
	to identify the best arm through possibly adaptive explorations of arms.
Pure exploration of MAB typically corresponds to a testing phase where we do not need to pay penalty for exploration, and it has wide applications in 
	online recommendation, advertising, drug testing, etc.
%
%

Causal bandits, first introduced by \cite{lattimore2016causal}, integrates causal inference \cite{Pearl09} with multi-armed bandits.
In causal bandits, we have a causal graph structure $G=(\bX\cup \{Y\}\cup \bU,E)$, where $\bX\cup \{Y\}$ are observable causal variables with $Y$ being a special reward variable, $\bU$ are unobserved hidden variables, and $E$
	is the set of causal edges between pairs of variables.
For simplicity, we consider binary variables in this paper.
The arms are the interventions on variables $\bS\subseteq \bX$ together with the choice of null intervention (natural observation), i.e.
	the arm (action) set is $A \subseteq \{a = do(\bS=\bs) \mid \bS\subseteq \bX, \bs \in \{0,1\}^{|\bS|} \}$ with $do()\in A$, 
	where $do(\bS=\bs)$ is the standard notation for intervening the causal graph by setting $
	\bS$ to $\bs$~\cite{Pearl09}, and $do()$ means null intervention.
The reward of an action $a$ is the random outcome of $Y$ when we intervene with action $a$, and thus the expected reward is $\E[Y \mid a=do(\bS=\bs)]$.
In each round, one action in $A$ is played, and the random outcomes of all variables in $\bX\cup \{Y\}$ are observed.
Given the causal graph $G$, but without knowing its causal inference distributions among nodes, the task of combinatorial pure exploration (CPE) of causal bandits is to (adaptively) select 
	actions in each round,
	observe the feedback from all observable random variables, so that in the end the learner can identify the best or nearly best actions. Causal bandits are useful in many real scenarios. In drug testing, the physicians wants to adjust the dosage of some particular drugs to treat the patient. In policy design, the policy-makers select different actions to reduce the spread of disease. 
	
%
Existing studies on CPE of causal bandits either requires the knowledge of $P(\Pa(Y)\mid a)$ for all action $a$ or only consider causal graphs without hidden variables, and the algorithms proposed are not fully adaptive to reward gaps 
	\cite{lattimore2016causal,icml2018-propagatinginference}. 
In this paper, we study fully adaptive pure exploration algorithms and analyze their gap-dependent sample complexity bounds in the fixed-confidence setting.
More specifically, given a confidence bound $\delta \in (0,1)$ and an error bound $\varepsilon$, we aim at designing adaptive algorithms that output an action such that
	with probability at least $1-\delta$, the expected reward difference between the output and the optimal action is at most $\varepsilon$.
The algorithms should be fully adaptive in the follow two senses.
First, it should adapt to the reward gaps between suboptimal and optimal actions similar to existing adaptive pure exploration bandit algorithms, such that actions with larger gaps should be explored less.
Second, it should adapt to the observational data from causal bandit feedback, such that actions with enough observations already do not need further interventional rounds for exploration, similar to
	existing causal bandit algorithms.
We are able to integrate both types of adaptivity into one algorithmic framework, and with 
interaction between the two aspects, we achieve better adaptivity than either of them alone.

	
	
First we introduce a particular term named gap-dependent observation threshold, which is a non-trivial gap-dependent extension for a similar term in \cite{lattimore2016causal}. 
Then we provide two algorithms, one for the binary generalized linear model (BGLM) and one for the general model with hidden variables.
The sample complexity of both algorithms contains the gap-dependent observation threshold that we introduced, which shows significant improvement comparing to the prior work.
In particular, our algorithm for BGLM achieves a sample complexity polynomial to the graph size, while all prior algorithms for general graphs have exponential sample complexity;
	and our algorithm for general graphs match a lower bound we prove in the paper.
To our best knowledge, our paper is the first work considering a CPE algorithm specifically designed for BGLM, and the first work considering CPE on graphs with hidden variables, while
	all prior studies either assume no hidden variables or assume knowing distribution $P(\Pa(Y)\mid a)$ for parent of reward variable $\Pa(Y)$ and all action $a$, which is not a reasonable assumption.
%

To summarize, our contribution is to propose the first set of CPE algorithms on causal graphs with hidden variables
	and fully adaptive to both the reward gaps and the observational causal data. 
The algorithm on BGLM
	is the first to achieve a gap-dependent sample complexity polynomial to the graph size, while the algorithm for general graphs improves significantly on sample complexity and matches
	a lower bound.
Due to the space constraint, further materials including  experimental results, an algorithm for the fixed-budget setting, and all proofs are moved to the supplementary material.

{\bf Related Work.\ \ }
Causal bandit is proposed by~\cite{lattimore2016causal}, who discuss the simple regret for parallel graphs and general graphs with known probability distributions 
$P(\Pa(Y)\mid a)$. 
\cite{nair2020budgeted} extend algorithms on parallel graphs  to graphs without back-door paths, and \cite{maiti2021causal} extend the results to the general graphs. All of them are either regard $P(\Pa(Y)\mid a)$ as prior knowledge, or consider only atomic intervention. 
 The study by \cite{icml2018-propagatinginference} is the only one considering the general graphs with combinatorial action set, but their algorithm cannot work on causal graphs with hidden variables.
All the above pure exploration studies consider simple regret criteria that is not gap-dependent.
Cumulative regret is considered in \cite{nair2020budgeted,lu2020regret,maiti2021causal}.  
To our best knowledge, \cite{sen2017identifying} is the only one discussing gap-dependent bound for pure exploration of causal bandits for the fixed-budget setting, 
but it only considers the soft interventions (changing conditional distribution $P(X| \Pa(X))$) on {\em one single node}, which is different from causal bandits defined in \cite{lattimore2016causal}. 

Pure exploration of multi-armed bandit has been extensively studied in the fixed-confidence or fixed-budget setting \cite{2010colt_exploration,2014fixed_confidence,lilUCB,2006action_elimination,LUCB2012}.
PAC pure exploration is a generalized setting aiming to find the $\varepsilon$-optimal arm instead of exactly optimal arm \cite{2002_pac_pe,2004PAC,2014fixed_confidence}.  
In this paper, we utilize the adaptive LUCB algorithm in \cite{LUCB2012}.
CPE has also been studied for multi-armed bandits and linear bandits, etc.(\cite{CPE_full_partial},\cite{CPE},\cite{CPE2021}), but the feedback model in these studies either have feedback at the base arm level or have full or partial bandit feedback,
	which are all very different from the causal bandit feedback considered in this paper.


The binary generalized linear model (BGLM) is studied in \cite{li2017provably,CCB2022} for cumulative regret MAB problems. 
We borrow the maximum likelihood estimation method and its result in \cite{li2017provably,CCB2022} for our BGLM part, 
	but its integration with our adaptive sampling algorithm for the pure exploration setting is new.

\section{Model and Preliminaries}
\label{sec.model}
\vspace{-2mm}
{\bf Causal Models. \ \ }
From \cite{Pearl09}, a causal graph $G=(\bX \cup \{Y\}\cup \bU,E)$ is a directed acyclic graph (DAG) with a set of observed random variables $\bX \cup \{Y\}$ and a set of hidden random variables $\bU$, 
	where $\textbf{\textit{X}} = \{X_1,\cdots,X_n\}$, $\bU = \{U_1,\cdots,U_k\}$ are two set of variables and $Y$ is the special reward variable 
	without outgoing edges.  
In this paper, for simplicity, we only consider that $X_i$'s and $Y$ are binary random variables with support $\{0,1\}$. 
For any node $V$ in $G$, we denote the set of its parents in $G$ as $\Pa(V)$.
The set of values for $\Pa(X)$ is denoted by $\pa(X)$. 
The causal influence is represented by $P(V\mid \Pa(V))$, modeling the fact that the probability distribution of a node $V$'s value is determined by the value of its parents. 
Henceforth, when we refer to a causal graph, we mean both its graph structure $(\bX \cup \{Y\}\cup \bU,E)$ and 
its causal inference distributions $P(V \mid \Pa(V))$ for all $V\in \bX \cup \{Y\} \cup \bU$.
A parallel graph $G = (\bX\cup \{Y\}, E)$ is a special class of causal graphs with $\bX=\{X_1,\cdots,X_n\}$, $\bU=\emptyset$ and $E = \{X_1\to Y, X_2\to Y,\cdots,X_n\to Y\}$. 
An \emph{intervention} $do(\bS=\bs)$ in the causal graph $G$ means that we set the values of a set of nodes $\bS \subseteq \bX$ to $\bs$, 
while other nodes still follow the $P(V \mid \Pa(V))$ distributions.
An \emph{atomic intervention} means that $|\bS|=1$.
When $\bS = \emptyset$, $do(\bS=\bs)$ is the null intervention denoted as $do()$, which means we do not set any node to any value and just observe all nodes' values.  

In this paper, we also study a parameterized model with no hidden variables: \emph{the binary generalized linear model} (BGLM). 
Specifically, in BGLM, we have $\bU = \emptyset$, and $P(X=1\mid \Pa(X)=\pa(X))=f_X(\btheta_X\cdot \pa(X))+e_X$, where \textcolor{black}{$f_X$ is a strictly increasing function,
	$\btheta_X \in \mathbb{R}^{\Pa(X)}$ is the unknown parameter vector for $X$, $e_X$ is a zero-mean bounded noise variable that 
	guarantees the resulting probability to be within $[0,1]$}. 
\textcolor{black}{To represent the intrinsic randomness of node $X$ not caused by its parents, 
we denote $X_1=1$ as a global variable, which is a parent of all nodes.}

{\bf Combinatorial Pure Exploration of Causal Bandits.\ \ }
Combinatorial pure exploration (CPE) of causal bandits describes the following setting and the online learning task.
The causal graph structure is known but the distributions $P(V|\Pa(V))$'s are unknown.
The action (arm) space $\bA$ is a subset of possible interventions on combinatorial sets of variables, plus the null intervention,
	that is, $\bA \subseteq \{do(\bS=\bs) \mid \bS \subseteq \bX, \bs \in \{0,1\}^{|\bS|} \} $ and $\{do()\} \in \bA$.
For action $a = do(\bS=\bs)$, define $\mu_{a}=\mathbb{E}[Y\mid do(\bS=\bs)]$ to be the expected reward of action $do(\bS=\bs)$. 
Let $\mu^* = \max_{a\in \bA} \mu_a$.

%
In each round $t$, the learning agent plays one action $a\in \bA$, observes the sample values $\bX_t=(X_{t,1},X_{t,2}\cdots,X_{t,n})$ and $Y_t$
	of all observed variables.
The goal of the agent is to interact with the causal model with as small number of rounds as possible to
	find an action with the maximum expected reward $\mu^*$. 
More precisely, we mainly focus on the following PAC pure exploration with the gap-dependent bound in the {\em fixed-confidence setting}.
In this setting, we are given a confidence parameter $\delta \in (0,1)$ and an error parameter $\varepsilon \in [0,1)$, 
	and we want to adaptively play actions over rounds based on past observations, terminate at a certain round and output an action $a^o$ to guarantee that
	$\mu^*-\mu_{a^o}\le \varepsilon$ with probability at least $1-\delta$.
The metric for this setting is sample complexity, which is the number of rounds needed to output a proper action $a^o$.
Note that when $\varepsilon = 0$, the PAC setting is reduced to the classical pure exploration setting.
We also consider the {\em fixed budget setting} in the appendix, where given an exploration round budget $T$ and an error parameter $\varepsilon \in [0,1)$, 
	the agent is trying to adaptively play actions and
	output an action $a^o$ at the end of round $T$, so that the error probability $\Pr\{\mu_{a^o} < \mu^* - \varepsilon\}$ is as small as possible.


We study the gap-dependent bounds, meaning that the performance measure is related to the reward gap between the optimal and suboptimal actions, as defined below.
Let $a^*$ be one of the optimal arms. For each arm $a$, we define the gap of $a$ as
\begin{align}
    \Delta_a = \left\{
\begin{array}{lcc}
\mu_{a^*}-\max_{a \in \bA \setminus \{a^*\}}\{\mu_{a} \},      &      & {a = a^*}; \\
\mu_{a^*}-\mu_{a},     &      & {a\neq a^*}.\\
\end{array} \right.
\end{align}

We further sort the gaps $\Delta_a$'s for all arms and assume 
	$\Delta^{(1)}\le \Delta^{(2)}\cdots \le\Delta^{(|\bA|)}$, where $\Delta^{(1)}$ is also denoted as $\Delta_{\min}$.

\section{Gap-Dependent Observation Threshold}
\label{sec:obsthreshold}

In this section, we introduce the key concept of gap-dependent observation threshold, which is instrumental to the fix-confidence algorithms in the next two sections.
Intuitively, it describes for any action $a$ whether we can derive its reward from pure observations of the causal model.

We assume that $X_i$'s are binary random variables.
First, we describe terms $q_a \in [0,1]$ for each action $a \in \bA$, which can have different definitions in different settings. 
Intuitively, $q_a$ represents how easily the action $a$ is to be estimated by observation. 
For example, in \cite{lattimore2016causal}, for parallel graph with action set $\bA=\{do(X_i=x)\mid 1\le i\le n, x\in\{0,1\}\}\cup\{do()\}$, 
	for action $a = do(X_i =x)$, $q_a = P(X_i=x)$ represents the probability for action $do(X_i=x)$ to be observed,
	since in parallel graph we have $P(Y\mid X_i=x) = P(Y\mid do(X_i=x))$.
Thus, when $q_a=P(X_i=x)$ is larger, it is easier to estimate $P(Y\mid do(X_i=x))$ by observation.
We will instantiate $q_a$'s for BGLM and general graphs in later sections.
For $a=do()$, we always set $q_a=1$. 
Then, for set $q_a, a \in \bA$ we define the \emph{observation thershold} as follows:
\begin{definition}[Observation threshold \cite{lattimore2016causal}]
\label{def:othresholdm}
For a given causal graph $G$ and its associated $\{q_a\mid a \in \bA\}$, the {\em observation threshold} $m$ is defined as:
\begin{align} \label{eq:mdef}
	m = \min\{\tau \in [|\bA|]: |\{a \in \bA\mid q_a< 1/\tau\}|\le \tau\}.\end{align}
\end{definition} 

The observation threshold can be equivalently defined as follows:
When we sort $\{q_a\mid a \in \bA\}$ as $q^{(1)}\le q^{(2)}\le \cdots \le q^{|\bA|}$, $m=\min\{\tau: q^{(\tau+1)}\ge \frac{1}{\tau}\}$. 
Note that $m\le |\bA|$ always holds since $q_{do()} = 1$. 
In some cases, $m \ll |\bA|$.
For example, in parallel graph, when $P(X_i=1)=P(X_i=0)=\frac{1}{2}$ for all $i \in [n]$, $q_{do(X_i=1)}=q_{do(X_i=0)} = \frac{1}{2}$, $q_{do()}=1$.Then $m=2\ll 2n+1 = |\bA|$.
Intuitively, when we collect passive observation data without intervention, arms corresponding to $q^{(j)}$ with $j\le m$ are under observed while
	arms corresponding to $q^{(j)}$ with $j > m$ are sufficiently observed and can be estimated accurately.
Thus, for convenience we name $m$ as the observation threshold (the term is not given a name in \cite{lattimore2016causal}).

In this paper, we improve the definition of $m$ to make it gap-dependent, which would lead to a better adaptive pure exploration algorithm and sample complexity bound. Before introducing the definition, we first define the term $H_r$. Sort the arm set as $ q_{a_1}\cdot\max\{\Delta_{a_1},\varepsilon/2\}^2\le q_{a_2}\cdot \max\{\Delta_{a_2},\varepsilon/2\}^2\le \cdot \le q_{a_{|\bA|}}\cdot \max\{\Delta_{a_{|\bA|}},\varepsilon/2\}^2$, then $H_r$ is defined by 
\begin{align}
    H_r = \sum_{i=1}^r \frac{1}{\max\{\Delta_{a_i},\varepsilon/2\}^2}. \label{def:H}
\end{align}
\begin{definition}[Gap-dependent observation threshold]
\label{def:gapm}
For a given causal graph $G$ and its associated $q_a$'s and $\Delta_a$'s, 
	the {\em gap-dependent observation threshold}
	$m_{\varepsilon, \Delta}$ is defined as:
   \begin{align} \label{eq:gapmdef}
   	m_{\varepsilon, \Delta}  = \min\left\{\tau \in [|\bA|]: \left|\left\{a\in \bA \Bigg| q_a\cdot \max\left\{\Delta_a,\varepsilon/2\right\}^2<\frac{1}{H_{\tau}}\right\}\right|\le \tau\right\}.\end{align}
\end{definition}
\textcolor{black}{The Gap-dependent observation threshold can be regarded as a generalization of the observation threshold. 
Intuitively, when considering the gaps, $q_a\cdot \max\{\Delta_a, \varepsilon/2\}^2$ represents how easily the action $a$ would to be distinguished from the optimal arm.} 
To show the relationship between $m_{\varepsilon,\Delta}$ and $m$, we provide the following lemma. The proof of Lemma is in \OnlyInFull{
Appendix~\ref{sec:proof_lemma1}.
}
\OnlyInShort{the supplementary material.}

\begin{lemma}\label{lemma:relation_m_gap}
    $m_{\varepsilon,\Delta}\le 2m$.
\end{lemma}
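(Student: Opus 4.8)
The plan is to show that $\tau = 2m$ already satisfies the defining inequality of $m_{\varepsilon,\Delta}$ in \eqref{eq:gapmdef}; since $m_{\varepsilon,\Delta}$ is the \emph{smallest} such $\tau$, this immediately gives $m_{\varepsilon,\Delta}\le 2m$. (Note that $m_{\varepsilon,\Delta}\le|\bA|$ always holds, because $\tau=|\bA|$ trivially satisfies the condition in \eqref{eq:gapmdef}; hence the case $2m\ge|\bA|$ is immediate, and I may assume $2m<|\bA|$.) Writing $w_a:=\max\{\Delta_a,\varepsilon/2\}^2$ and recalling that the arms are indexed so that $q_{a_1}w_{a_1}\le\cdots\le q_{a_{|\bA|}}w_{a_{|\bA|}}$, the set $\{a: q_a w_a< 1/H_{2m}\}$ is a prefix $\{a_1,\dots,a_k\}$ of this order. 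Hence it suffices to prove that the $(2m{+}1)$-st product value is large enough, namely
\begin{align}
    q_{a_{2m+1}}\, w_{a_{2m+1}} \ge \frac{1}{H_{2m}}, \label{eq:plan-key}
\end{align}
for then $a_{2m+1}$ (and everything after it) lies outside the set, so the prefix has length at most $2m$.

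The key idea for \eqref{eq:plan-key} is to lower bound the gap-dependent sum $H_{2m}$ using the ordinary observation threshold $m$. By Definition~\ref{def:othresholdm} applied at $\tau=m$ we have $|\{a: q_a<1/m\}|\le m$, so among the first $2m$ arms $a_1,\dots,a_{2m}$ at least $2m-m=m$ of them are ``well observed'', i.e.\ satisfy $q_{a_i}\ge 1/m$. For any such well-observed arm with index $i\le 2m$, the sorting gives $q_{a_i}w_{a_i}\le q_{a_{2m+1}}w_{a_{2m+1}}$, and combining with $q_{a_i}\ge 1/m$ yields
\begin{align}
    \frac{1}{w_{a_i}} \;\ge\; \frac{q_{a_i}}{q_{a_{2m+1}}\,w_{a_{2m+1}}} \;\ge\; \frac{1}{m\,q_{a_{2m+1}}\,w_{a_{2m+1}}}.
\end{align}

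Summing this bound over the (at least $m$) well-observed indices among $a_1,\dots,a_{2m}$ and discarding the remaining nonnegative terms of $H_{2m}$ gives $H_{2m}\ge m\cdot \tfrac{1}{m\,q_{a_{2m+1}}w_{a_{2m+1}}} = \tfrac{1}{q_{a_{2m+1}}w_{a_{2m+1}}}$, which is exactly \eqref{eq:plan-key}. Observe that the two factors of $m$ cancel, which is precisely why the final multiplier comes out as $2m$ rather than a larger multiple of $m$. I expect the main obstacle to be getting this counting-plus-sorting step exactly right: one must restrict the sum defining $H_{2m}$ to the well-observed indices and use the monotone ordering to compare each $w_{a_i}$ against the threshold value $w_{a_{2m+1}}$, rather than attempting to control individual gaps $\Delta_{a_i}$ directly. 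The only remaining care is with degenerate boundary situations — when $2m\ge|\bA|$ (handled above) or $2m=|\bA|$ so that $a_{2m+1}$ does not exist, in which case the prefix trivially has length at most $|\bA|=2m$, and the edge case where some $w_a=0$ forces $1/H_{2m}=0$ and makes \eqref{eq:plan-key} hold vacuously.
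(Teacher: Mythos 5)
Your proof is correct, but it takes a genuinely different route from the paper's, and in fact a more careful one. Writing $w_a=\max\{\Delta_a,\varepsilon/2\}^2$, the paper argues by contradiction: assuming $q_{a_i}w_{a_i}<1/H_{2m}$ for $i=1,\dots,2m+1$, it claims each of the $m+1$ arms $a_{m+1},\dots,a_{2m+1}$ satisfies $q_{a_i}<1/m$, contradicting $|\{a:q_a<1/m\}|\le m$. The step it uses to get there is $H_{2m}\,w_{a_i}\ge\sum_{j=1}^m w_{a_i}/w_{a_j}\ge m$, which requires $w_{a_i}$ to dominate the harmonic mean of $w_{a_1},\dots,w_{a_m}$; this does not follow from the sort order, since the arms are sorted by the products $q_aw_a$ rather than by the gap terms $w_a$ alone. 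Your argument leans on exactly the same two ingredients (the defining property of $m$ and the sorted order) but combines them differently: you use $|\{a:q_a<1/m\}|\le m$ to extract at least $m$ indices $i\le 2m$ with $q_{a_i}\ge 1/m$, and for those the sorting inequality $q_{a_i}w_{a_i}\le q_{a_{2m+1}}w_{a_{2m+1}}$ legitimately yields $1/w_{a_i}\ge 1/(m\,q_{a_{2m+1}}w_{a_{2m+1}})$; summing over these indices gives $H_{2m}\ge 1/(q_{a_{2m+1}}w_{a_{2m+1}})$, so the prefix $\{a:q_aw_a<1/H_{2m}\}$ has length at most $2m$ and $\tau=2m$ is feasible in Eq.~\eqref{eq:gapmdef}. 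Keeping the $q$'s inside the inequality, where the paper drops them, is what makes the cancellation of the two factors of $m$ rigorous; your treatment of the boundary cases ($2m\ge|\bA|$ and vanishing $w_a$) is also sound. In short, your direct argument proves the same bound and, as a bonus, supplies the justification that the paper's contradiction argument leaves implicit (or arguably missing).
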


Lemma \ref{lemma:relation_m_gap} shows that $m_{\varepsilon,\Delta}=O(m)$. 
In many real scenarios, $m_{\varepsilon,\Delta}$ might be much smaller than $m$. Consider some integer $n$ with $4 < n < |\bA|$, $\epsilon<1/n$,   $q_{a}=\frac{1}{n}$ for $a \in \bA \setminus \{do()\}$ and $q_{do()}=1$. 
Then $m=n$. Now we consider $\Delta_{a_1} = \Delta_{a_2} =\frac{1}{n}$, while other arms $a$ have $\Delta_a = \frac{1}{2}$.
Then $H_r \ge n^2$ for all $r\ge 1$. 
Then for $a\neq a_1,a_2$, we have  $q_a\cdot \max\{\Delta_a,\varepsilon/2\}^2\ge \frac{1}{4n}>\frac{1}{H_r}$, which implies that $m_{\varepsilon,\Delta}=2$. This lemma will be used to show that our result improves previous causal bandit algorithm  in \cite{lattimore2016causal}.



\vspace{-2mm}

\section{Combinatorial Pure Exploration for BGLM}
\label{sec.linear}

In this section, we discuss the pure exploration for BGLM, a general class of causal graphs with a linear number of parameters, as defined
	in Section~\ref{sec.model}. In this section, we assume $\bU = \emptyset$.
 Let $\btheta^*=(\btheta^*_X)_{X\in \bX\cup \{Y\}}$ be the vector of all weights. 
 Since $X_1$ is a global variable, 
 we only need to consider the action
 	set  $\bA \subseteq  \{do(\bS=\bs) \mid \bS \subseteq \bX\setminus\{X_1\}, \bs \in \{0,1\}^{|\bS|} \}$.  
Following \cite{li2017provably,CCB2022}, we have three assumptions:
\begin{assumption}
	\label{ass:derivativeupper}
	For any $X \in \bX\cup \{Y\}$, $f_X$ is twice differentiable. Its first and second order  derivatives can be upper bounded by constant $M^{(1)}$ and $M^{(2)}$.
\end{assumption}
\begin{assumption}
\label{ass:derivativelower}
    $\kappa :=\inf_{X \in \bX\cup \{Y\}, \bv \in [0,1]^{\Pa(X)}, ||\btheta-\btheta_X^*||\le 1}\dot{f}_X (\bv\cdot \btheta)>0$ is a positive constant.
\end{assumption}
\begin{assumption}
\label{ass:parentnontrivial}
    There exists a constant $\eta>0$ such that for any $X \in \bX\cup \{Y\}$ and $X' \in \Pa(X)$, for any $\bv \in \{0,1\}^{|\Pa(X)-2|}$ and $x \in \{0,1\}$, we have 
    \begin{gather}
        Pr[X'=x\mid \Pa(X)\setminus\{X',X_1\}=\bv]\ge \eta.
    \end{gather}
\end{assumption}
%

Assumptions \ref{ass:derivativeupper} and \ref{ass:derivativelower} are the classical assumptions in generalized linear model \cite{li2017provably}. 
Assumption \ref{ass:parentnontrivial} makes sure that each parent node of $X$ has some freedom to become 0 and 1 with a non-zero probability, even when the values of all other parents of $X$
	are fixed, and it is originally given in~\cite{CCB2022} with additional justifications.
Henceforth, we use $\sigma(\btheta,a)$ to denote the reward $\mu_a$ on parameter $\btheta$.
	
Our main algorithm, Causal Combinatorial Pure Exploration-BGLM (CCPE-BGLM), is given in Algorithm~\ref{alg:linear}.
The algorithm follows the LUCB framework~\cite{LUCB2012}, but has several innovations.
In each round $t$, we play three actions and thus it corresponds to three rounds in the general CPE model.
In each round $t$, we maintain $\hat{\mu}^t_{O,a}$ and $\hat{\mu}^t_{I,a}$ as the estimates of $\mu_{a}$ from the observational data and the interventional data, respectively.
For each estimate, we maintain its confidence interval, $[L_{O,a}^t, U_{O,a}^t]$ and $[L_{I,a}^t, U_{I,a}^t]$ respectively.

At the beginning of round $t$, similar to LUCB, we find two candidate actions, one with the highest empirical mean so far, $a_h^{t-1}$; and one with the highest UCB among the rest, $a_l^{t-1}$.
If the LCB of $a_h^{t-1}$ is higher than the UCB of $a_l^{t-1}$ with an $\varepsilon$ error, then the algorithm could stop and return $a_h^{t-1}$ as the best action.
However, the second stopping condition in line~\ref{line:stopcondition} is new, and it is used to guarantee that the observational estimates $\hat{\mu}^t_{O,a}$'s are from enough samples.
If the stopping condition is not met, we will do three steps.
The first step is the novel observation step comparing to LUCB.
In this step, we do the null intervention $do()$, collect observational data, use maximum-likelihood estimate adapted from \cite{li2017provably,CCB2022} to obtain parameter estimate 
	$\hat{\theta}_t$, and then use $\hat{\theta}_t$ to compute observational estimate $\hat{\mu}^t_{O,a} = \sigma(\hat{\theta}_t, a)$ for all action $a$, \textcolor{black}{where $\sigma(\hat{\theta}_t, a) $ means the reward for action $a$ on parameter $\hat{\theta}_t$.} This
	can be efficiently done by
	following the topological order of nodes in $G$ and parameter $\hat{\theta}_t$.
From $\hat{\mu}^t_{O,a}$, we obtain the confidence interval $[L_{O,a}^t, U_{O,a}^t]$ using the bonus term defined later in Eq.\eqref{eq:betaOI}.
In the second step, we play the two candidate actions $a_h^{t-1}$ and $a_l^{t-1}$ and update their interventional estimates and confidence intervals, as in LUCB.
In the third step, we merge the two estimates together and  set the final estimate $\hat{\mu}_a^t$ to be the mid point of the intersection of two confidence intervals.
While the second step follows the LUCB, the first and the third step are new, and they are crucial for utilizing the observational data to obtain quick estimates for many actions at once.

Utilizing observational data has been explored in past causal bandit studies, but they separate the exploration from observations and the interventions into two stages 
\cite{lattimore2016causal,nair2020budgeted}, and thus their algorithms are not adaptive and cannot provide gap-dependent sample complexity bounds.
Our algorithm innovation is in that we interleave the observation step and the intervention step naturally into the adaptive LUCB framework, so that we can achieve an adaptive balance between
	observation and intervention, achieving the best of both worlds.

To get the confidence bound for BGLM, we use the following lemma from~\cite{CCB2022}:
\begin{restatable}{lemma}{boundestimate}\label{lemma:bound_estimate}
	For an action $a=do(\bS=\bs)$ and any two weight vectors ${\btheta}$ and ${\btheta'}$, we have 
	\begin{align}\label{eq_bound_diff}
		|\sigma({\btheta},a)-\sigma(\btheta',a)|\le \E_{\be}\left[\sum_{X \in N_{\bS,Y}}|\bV_{X}^{\top}({\btheta}_{X}-\btheta'_{X})|M^{(1)}\right],
	\end{align}
	where $N_{\bS,Y}$ is the set of all nodes that lie on all possible paths from $X_1$ to $Y$ excluding $\bS$, $\bV_{X}$ is the value vector of 
	a sample of the parents of $X$ according to parameter $\btheta$, $M^{(1)}$ is defined in Assumption \ref{ass:derivativeupper}, and the expectation is taken
	on the randomness of the noise term $\be=(e_X)_{X \in \bX\cup \{Y\}}$ of causal model under parameter $\btheta$.
\end{restatable}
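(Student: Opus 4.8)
The plan is to reduce the reward difference to a disagreement probability under a coupling of the two causal models, and then bound that probability node-by-node using the Lipschitz constant $M^{(1)}$. First I would couple the models generated by $\btheta$ and $\btheta'$ through shared randomness: processing nodes in topological order, I set the two copies of the global node $X_1$ and of every intervened node in $\bS$ equal, while for every other node $X$ I draw a single uniform threshold $\xi_X$ and shared noise $e_X$ and put $X^{\btheta}=\mathbf{1}\{\xi_X\le f_X(\btheta_X\cdot\Pa(X)^{\btheta})+e_X\}$, and analogously $X^{\btheta'}$ using its own parents. Since the two copies of $Y$ are $\{0,1\}$-valued,
\begin{align}
|\sigma(\btheta,a)-\sigma(\btheta',a)| = \left|\E\left[Y^{\btheta}-Y^{\btheta'}\right]\right| \le \Pr\left[Y^{\btheta}\neq Y^{\btheta'}\right],
\end{align}
so it suffices to bound the probability that the coupled copies of $Y$ disagree.

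Next, for each non-intervened node $X$ I would isolate the disagreement that is ``created locally'' at $X$: let $B_X$ be the event that $X^{\btheta}\neq X^{\btheta'}$ while all parents of $X$ agree in the coupling. Conditioned on the common parent vector $\bV_X$, the two copies of $X$ differ exactly when $\xi_X$ falls between the two success probabilities, an event of conditional probability $|f_X(\btheta_X\cdot\bV_X)-f_X(\btheta'_X\cdot\bV_X)|$, since the shared $e_X$ cancels. Applying the mean value theorem with $|\dot f_X|\le M^{(1)}$ from Assumption~\ref{ass:derivativeupper} and averaging over the $\btheta$-model then gives $\Pr[B_X]\le M^{(1)}\,\E_{\be}\!\left[|\bV_X^{\top}(\btheta_X-\btheta'_X)|\right]$.

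Finally I would assemble the local terms with a trace-back argument. On the event $Y^{\btheta}\neq Y^{\btheta'}$, I walk backwards from $Y$ along parent edges whose endpoint disagrees; because $X_1$ and the intervened nodes never disagree and the graph is a finite DAG, this walk must stop at some node $X$ all of whose parents agree, i.e.\ at which $B_X$ occurs. That terminal $X$ is non-intervened and is joined to $Y$ by a directed path of disagreeing (hence $\bS$-avoiding) nodes, and since $X_1$ is a parent of every node it also lies on an $\bS$-avoiding path from $X_1$ to $Y$; thus $X\in N_{\bS,Y}$. Hence $\{Y^{\btheta}\neq Y^{\btheta'}\}\subseteq\bigcup_{X\in N_{\bS,Y}}B_X$, and a union bound with linearity of expectation yields the claimed inequality.

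The step I expect to be the main obstacle is this last one. The naive recursion $\Pr[X\text{ differs}]\le\sum_{X'\in\Pa(X)}\Pr[X'\text{ differs}]+\Pr[B_X]$ double counts ancestors that reach $Y$ through several paths (as in a diamond subgraph), so it cannot reproduce a sum over $N_{\bS,Y}$ in which each node appears only once; charging each disagreement to its unique first point of divergence is what makes the single-count bound go through. The remaining pieces --- the cancellation of $e_X$ in the coupling and the verification that the terminal node belongs to $N_{\bS,Y}$ --- are routine once the coupling is in place.
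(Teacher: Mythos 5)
Your proposal is correct and follows essentially the same route as the paper's proof: both recast BGLM as a threshold model, couple the two parameterizations through shared thresholds and noise, charge the disagreement of $Y$ to a first point of divergence (your local event $B_X$ versus the paper's time-step-indexed events $\mathcal{E}^{\be}_{1}(X)$, $\mathcal{E}^{\be}_{2,\cdot}(X,i)$), bound that local probability by $M^{(1)}|\bV_X^{\top}(\btheta_X-\btheta'_X)|$ via the mean value theorem, and finish with a union bound over $N_{\bS,Y}$. The only difference is cosmetic --- your parent-agreement formulation of the divergence event is a slightly cleaner packaging of the same idea.
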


\begin{algorithm}[H]
	\caption{CCPE-BGLM$(G, \bA,\varepsilon,\delta, M^{(1)},M^{(2)}, \kappa, \eta, c)$}
	\label{alg:linear}
	\begin{algorithmic}[1]
	\STATE Input:causal general graph $G$, action set $\bA$, parameter $\varepsilon,\delta,M^{(1)},M^{(2)},\kappa, \eta,c$ in Assumptions \ref{ass:derivativeupper},\ref{ass:derivativelower}, \ref{ass:parentnontrivial} and \OnlyInFull{Lemma~\ref{lemma:lecue's}} \OnlyInShort{in Lemma 4 in supplementary material}.
	
	\STATE Initialize $M_{0,X}=I$ for all node $X$. $N_a=0, \hat{\mu}_a^0=0, L_a^0 = -\infty, U_a^0 = \infty$ for arms $ a\in \bA.$\;
	
	\FOR{$t=1,2,\cdots,$}
	
	    \STATE $a_h^{t-1}=\argmax_{a \in \bA} \hat{\mu}_a^{t-1}, a_l^{t-1}=\argmax_{a \in \bA\setminus\{ a_h^{t-1}\}}U_a^{t-1}$. \label{line:twocandidates}
	    
	    \IF{$U_{a_{l}^{t-1}}^{t-1}\le L_{a_h^{t-1}}^{t-1}+\varepsilon$ and $t\ge \max\{\frac{cD}{\eta^2}\log \frac{nt^2}{\delta}, \frac{1024(M^{(2)})^2(4D^2-3)D}{\kappa^4\eta}(D^2+\log\frac{3nt^2}{\delta})\}$}\label{line:stopcondition}
	    \RETURN $a_h^{t-1}$.
	    
	    \ENDIF 
	    
	    \STATE /* {\em Step 1. Conduct a passive observation and estimate from the observational data} */
	    
	    \STATE Perform action $do()$ and observe $\bX_t$ and $Y_t$. For $a=do()$, $N_a=N_a+1$.
	    
	    \STATE $\hat{\theta}_t=\mbox{BGLM-estimate}((\bX_1,Y_1),\cdots,(\bX_t, Y_t))$.
	    
	    \STATE For $a = do(\bS = \bs) \in \bA$, calculate $\hat{\mu}_{O,a}=\sigma(\hat{\theta}_t,\bS)$, and 
	    $[L_{O,a}^t, U_{O,a}^t] = [\hat{\mu}_{O,a}-\beta_{O}^a(t),\hat{\mu}_{O,a}+\beta_{O}^a(t)].$ /* $\beta_{O}^a(t)$ is defined in Eq.\eqref{eq:betaOI} */
	    
	    \STATE /* {\em Step 2. Do two interventions and estimate from the interventional data} */
	    
	    \STATE Perform actions $a_l^{t-1}$ and $a_h^{t-1}$, get the reward $Y_t^{(l)}$ and $Y_t^{(h)}$.
	    
	    \STATE $N_{a_l^{t-1}} = N_{a_l^{t-1}}+1, N_{a_h^{t-1}} = N_{a_h^{t-1}}+1. $
	    
	    \STATE For $a \in \{a_l^{t-1}, a_h^{t-1},do()\}$, update the empirical mean 
	    \STATE $\hat{\mu}_{I,a} =\sum_{j=1}^t\frac{1}{N_a}(\mathbb{I}\{a = a_l^{j-1}\}Y_j^{(l)}+\mathbb{I}\{a = a_h^{j-1}\}Y_j^{(h)}+ \mathbb{I}\{a = do()\}Y_j
	    )$ 
	    and 
	    $[L_{I,a}^t, U_{I,a}^t] = [\hat{\mu}_{I,a}-\beta_{I}(N_a),\hat{\mu}_{I,a}+\beta_{I}(N_a)]$. /* $\beta_{I}(t)$ is defined in Eq.\eqref{eq:betaOI} */\label{line:empiricalmean}
	    
	    \STATE /* {\em Step 3. Merge the observational estimate and the interventional estimate} */
	    
	    \STATE For $a \in \bA$, calculate $[L_a^t, U_a^t] = [L_{O,a}^t, U_{O,a}^t]\cap [L_{I,a}^t, U_{I,a}^t]$ and $\hat{\mu}_a^t=\frac{L_a^t+U_a^t}{2}$. \label{line:mergeestimate}

	
	
	\ENDFOR

	\end{algorithmic}
\end{algorithm}
\begin{multicols}{2}

The key idea in the design and analysis of the algorithm is to divide the actions into two sets --- the easy actions and the hard actions.
Intuitively, the easy actions are the ones that can be easily estimated by observational data, while the hard actions require direction playing these actions to get accurate estimates.
The quantity $q_a$ mentioned in Section~\ref{sec:obsthreshold} indicates how easy is action $a$, and it determines the 
	gap-dependent observational threshold $ m_{\varepsilon,\Delta}$ (Definition~\ref{def:gapm}), which essentially gives the number of hard actions.
In fact, the set of actions in Eq.\eqref{eq:gapmdef} with $\tau = m_{\varepsilon,\Delta}$ is the set of hard actions and the rest are easy actions. We need to define $q_a$ representing the hardness of estimation for each $a$.

\begin{algorithm}[H]
	\caption{BGLM-estimate}
	\label{alg:MLE}
	\begin{algorithmic}[1]
	\STATE Input: data pairs
	
	$((\bX_1, Y_1), (\bX_2, Y_2),\cdots, (\bX_t, Y_{t}))$
	\STATE Construct $(\bV_{t,X},X_t)$ for each $X$, where $\bV_{t,X}$ is the value of parent of $X$ at round $t$, $X_t$ is the value of $X$ at round t.
	
	\FOR {$X \in \bX\cup \{Y\}$}
	\STATE $M_{t,X}=M_{t-1,X}+\bV_{t,X}\bV_{t,X}^{\top}$, calculate $\hat{\btheta}_{t,X}$ by solving $\sum_{ i=1}^t (X_i-f_X(\bV_{i,X}^T\hat{\btheta}_{t,X}))\bV_{i,X}=0$.
	\ENDFOR
	\RETURN $\hat{\btheta}_t$.
	\end{algorithmic}
\end{algorithm}

\end{multicols}

For CCPE-BGLM, we define its $q_a^{(L)}$ as follows.
Let $D = \max_{X\in \bX\cup \{Y\}} |\Pa(X)|$. 
For node $\bS \subseteq \bX$, let $\ell_{\bS} = |N_{\bS, Y}|$.
Then  for $a=do(\bS=\bs)$, we define \begin{align}q_a^{(L)}=\frac{1}{\ell_{\bS}^2D^3}.\label{def:linear_qa}\end{align}
 Intuitively, based on Lemma~\ref{lemma:bound_estimate} and $\ell_{\bS} = |N_{\bS,Y}|$, a large $\ell_{\bS}$ means that 
 the right-hand side of Inequality~\eqref{eq_bound_diff} could be large, and thus 
  it is difficult to estimate $\mu_a$ accurately. 
  Hence the term $q_a^{(L)}$ represents how easy it is to estimate for action $a$. 
  Note that $q_a^{(L)}$ only depends on the graph structure and set $\bS$.
We can define $m^{(L)}$ and $m_{\varepsilon,\Delta}^{(L)}$ with respect to $q_a^{(L)}$'s by Definitions~\ref{def:othresholdm} and \ref{def:gapm}.
We use two confidence radius terms as follows, one from the estimate of the observational data, and the other from the estimate of the interventional data. 
\begin{gather} \label{eq:betaOI}
	\beta_{O}^{a}(t)=\frac{\alpha_O M^{(1)}D^{1.5}}{\kappa\sqrt{\eta}}\sqrt{\frac{1}{q_a^{(L)} t}\log \frac{3nt^2}{\delta}},
	\beta_{I}(t)=\alpha_I\sqrt{\frac{1}{t}\log \frac{|\bA|\log(2t)}{\delta}}.
\end{gather}
Parameters $\alpha_O$ and $\alpha_I$ are exploration parameters for our algorithm.  
For a theoretical guarantee, we choose $\alpha_O$ = $6\sqrt{2}$ and $\alpha_I$ = 2, but more aggressive $\alpha_O$ and $\alpha_I$ 
	could be used in experiments. (e.g. \cite{findingalleps-good}, \cite{in_practice_MAB}, \cite{lilUCB}) 
The sample complexity of CCPE-BGLM is summarized in the following theorem. 
\begin{theorem}\label{theorem:linear}
 With probability $1-\delta$, our {\rm CCPE-BGLM}$(G,\bA,\varepsilon,\delta/2)$ returns an $\varepsilon$-optimal arm with sample complexity
	\begin{align}
		T = O\left(H_{m_{\varepsilon,\Delta}^{(L)}}\log \frac{|\bA| H_{m_{\varepsilon,\Delta}^{(L)}}}{\delta}\right),
	\end{align}
	where $ m_{\varepsilon,\Delta}^{(L)}, H_{m_{\varepsilon,\Delta}^{(L)}}$ are defined in Definition~\ref{def:gapm} and Eq.\eqref{def:H} in terms of $q_a^{(L)}$'s for $a \in \bA\setminus \{do()\}$ defined in Eq.\eqref{def:linear_qa}.
\end{theorem}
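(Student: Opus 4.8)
The plan is to follow the standard template for LUCB-style fixed-confidence analyses, adapted to credit the observational feedback, organizing the argument into a high-probability concentration event, correctness on that event, and a sample-complexity bound on that event. First I would define a good event $\mathcal{E}$ on which every confidence interval maintained by the algorithm is valid for every round $t$ and arm $a$, i.e.\ $\mu_a\in[L_{O,a}^t,U_{O,a}^t]$ and $\mu_a\in[L_{I,a}^t,U_{I,a}^t]$, and hence (after the merge in line~\ref{line:mergeestimate}) $\mu_a\in[L_a^t,U_a^t]$. For the interventional intervals this is a standard anytime (law-of-iterated-logarithm) Hoeffding bound: the $\log(2t)$ factor inside $\beta_I$ together with a union bound over the $|\bA|$ arms makes this source fail with probability at most $\delta/2$. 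For the observational intervals I would invoke the MLE convergence guarantee for BGLM of \cite{li2017provably,CCB2022} (valid once the sample size passes the threshold enforced by the second condition in line~\ref{line:stopcondition}) and combine it with Lemma~\ref{lemma:bound_estimate}: the per-node estimation error of $\hat\theta_t$ propagates through Inequality~\eqref{eq_bound_diff} and the definition $q_a^{(L)}=1/(\ell_{\bS}^2 D^3)$ into exactly the radius $\beta_O^a(t)$, and a union bound over rounds and nodes caps this source at $\delta/2$. Thus $\Pr[\mathcal{E}]\ge 1-\delta$.

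Correctness on $\mathcal{E}$ is then immediate from the first stopping rule: when $U_{a_l^{t-1}}^{t-1}\le L_{a_h^{t-1}}^{t-1}+\varepsilon$ triggers, interval validity gives $\mu_a\le U_{a_l^{t-1}}^{t-1}\le L_{a_h^{t-1}}^{t-1}+\varepsilon\le \mu_{a_h^{t-1}}+\varepsilon$ for every competitor $a$, so the returned arm $a_h^{t-1}$ is $\varepsilon$-optimal.

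For the sample complexity I would partition the arms into \emph{easy} and \emph{hard} sets, the hard arms being those appearing in the defining set of Eq.\eqref{eq:gapmdef} at $\tau=m_{\varepsilon,\Delta}^{(L)}$, namely the arms with $q_a^{(L)}\max\{\Delta_a,\varepsilon/2\}^2<1/H_{m_{\varepsilon,\Delta}^{(L)}}$, of which there are at most $m_{\varepsilon,\Delta}^{(L)}$. The crux is a per-arm \emph{critical round}: for an easy arm, $q_a^{(L)}\max\{\Delta_a,\varepsilon/2\}^2\ge 1/H_{m_{\varepsilon,\Delta}^{(L)}}$ forces $\beta_O^a(t)\le\frac12\max\{\Delta_a,\varepsilon/2\}$ once the number of observational rounds reaches $O(H_{m_{\varepsilon,\Delta}^{(L)}}\log(\cdot))$ (solving the self-referential inequality $t\gtrsim H_{m_{\varepsilon,\Delta}^{(L)}}\log\frac{3nt^2}{\delta}$, with the Assumption~\ref{ass:derivativeupper}--\ref{ass:parentnontrivial} constants absorbed into the $O(\cdot)$), after which the merged interval of $a$ is tight enough that $a$ can no longer be a bottleneck candidate. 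Hard arms instead require direct intervention, and here I would run the classical LUCB charging argument: at each non-terminating round one of $a_h^{t-1},a_l^{t-1}$ is undersampled relative to its gap, and each hard arm is charged at most $O(\max\{\Delta_a,\varepsilon/2\}^{-2}\log(\cdot))$ times. Summing the charges over the (prefix of) hard arms gives at most $\sum_{i\le m_{\varepsilon,\Delta}^{(L)}}\max\{\Delta_{a_i},\varepsilon/2\}^{-2}\log(\cdot)=H_{m_{\varepsilon,\Delta}^{(L)}}\log(\cdot)$. Since each round issues a constant number (three) of plays, and since the polynomial-in-graph-parameters threshold of the second stopping condition scales only as $\mathrm{poly}(D,\kappa,\eta,M)\log(t/\delta)$ and is therefore dominated, the total number of plays is $O\!\left(H_{m_{\varepsilon,\Delta}^{(L)}}\log\frac{|\bA|H_{m_{\varepsilon,\Delta}^{(L)}}}{\delta}\right)$.

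The main obstacle I anticipate is making the easy/hard dichotomy interact cleanly with the LUCB charging argument. In vanilla LUCB each bottleneck round is charged to an undersampled arm; here I must argue that once an easy arm's observational radius $\beta_O^a(t)$ has shrunk below half its gap it is \emph{permanently} excluded as a bottleneck — which relies on the intersection in the merge step together with the monotone decay of $\beta_O^a(t)$ in $t$ — so that only the $m_{\varepsilon,\Delta}^{(L)}$ hard arms ever accumulate charges. Equally delicate is checking that the round at which $\beta_O^a(t)\le\frac12\max\{\Delta_a,\varepsilon/2\}$ holds \emph{simultaneously} for all easy arms is itself $O(H_{m_{\varepsilon,\Delta}^{(L)}}\log(\cdot))$; this is precisely where the definition of $m_{\varepsilon,\Delta}^{(L)}$ through $H_{m_{\varepsilon,\Delta}^{(L)}}$ is invoked, and I expect the bookkeeping of the hidden BGLM constants to be the most error-prone part of the calculation.
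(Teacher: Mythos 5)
Your proposal is correct and follows essentially the same route as the paper's proof: a good event combining the BGLM MLE concentration (propagated through Lemma~\ref{lemma:bound_estimate} into $\beta_O^a$) with anytime interventional bounds, correctness from the stopping rule, and a sample-complexity bound via the easy/hard split at $m_{\varepsilon,\Delta}^{(L)}$, an LUCB charging argument on the hard arms, and resolution of the self-referential inequality in $T$. The obstacle you flag — permanently excluding easy arms as bottlenecks once $\beta_O^a(t)$ is below a constant fraction of $\max\{\Delta_a,\varepsilon/2\}$ — is exactly what the paper's Lemma~\ref{lemma:twosmall} handles, by showing that if both selected candidates have small radii the algorithm terminates, so every non-terminating round charges a hard arm.
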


If  we treat the problem as a naive $|\bA|$-arms bandit, the sample complexity of LUCB1 is $\widetilde{O}(H) = \widetilde{O}(\sum_{a \in \bA}\frac{1}{\max\{\Delta_a,\varepsilon/2\}^2})$, which may contain an exponential number of terms.  
Now note that $q_a^{(L)}\ge \frac{1}{n^5}$, it is easy to show that $m_{\varepsilon,\Delta}^{(L)}\le 2n^5$. Hence $H_{m_{\varepsilon,\Delta}^{(L)}}$ contains only a polynomial number of terms. 
Other causal bandit algorithms also suffer an exponential term, unless they rely on a strong and unreasonable assumption as describe in the related work.
We achieve an exponential speedup by (a) a specifically designed algorithm for the BGLM model, and (b) interleaving observation and intervention and making the algorithm fully adaptive.


The idea of the analysis is as follows. First, for the $ m_{\varepsilon,\Delta}$ hard actions, we rely on the adaptive LUCB to identify the best, and its sample complexity according to LUCB is
	$O(H_{m_{\varepsilon,\Delta}^{(L)}}\log (|\bA| H_{m_{\varepsilon,\Delta}^{(L)}}/\delta))$.
Next, for easy actions, we rely on the observational data to provide accurate estimates.
According to Eq.\eqref{eq:gapmdef}, every easy action $a$ has the property that $q_a\cdot \max\{\Delta_a,\varepsilon/2\}^2 \ge 1/ H_{m_{\varepsilon,\Delta}}$.
Using this property together with Lemma~\ref{lemma:bound_estimate}, we would show that the sample complexity for estimating easy action rewards is also 
	$O(H_{m_{\varepsilon,\Delta}^{(L)}}\log (|\bA| H_{m_{\varepsilon,\Delta}^{(L)}}/\delta))$.
Finally, the interleaving of observations and interventions keep the samply complexity in the same order.

\section{Combinatorial Pure Exploration for General Graphs}
\label{sec.general}

\subsection{CPE Algorithm for General Graphs}
In this section, we apply a similar idea to the general graph setting, which further allows the existence of hidden variables.
The first issue is how to estimate the causal effect (or the do effect) $\mathbb{E}[Y\mid do(\bS = \bs)]$ in general causal graphs from the observational data.
The general concept of identifiability~\cite{Pearl09} is difficult for sample complexity analysis.
Here we use the concept of {\em admissible sequence}~\cite{Pearl09} to achieve this estimation. 
\begin{definition}[Admissible sequence]
    An admissible sequence for general graph G with respect to $Y$ and $\bS=\{X_1,\cdots,X_k\} \subseteq \bX$ is a sequence of sets of variables 
    $\bZ_1,\cdots \bZ_k \subseteq \bX$ such that 
    
    (1) $\bZ_i$ consists of nondescendants of $\{X_i, X_{i+1},\cdots,X_k\}$, 
    
    (2) $(Y\perp \!\!\! \perp X_i\mid X_1,\cdots,X_{i-1}, \bZ_1,\cdots,\bZ_{i})_{G_{\underline{X}_i, \overline{X}_{i+1},\cdots,\overline{X}_{k}}}$
    , where $G_{\underline{X}}$ means graph $G$ without out-edges of $X$, and $G_{\overline{X}}$ means graph $G$ without in-edges of $X$.
\end{definition}
Then, for $\bS = \{X_1,\cdots,X_k\}$, $\bs=\{x_1,\cdots,x_k\}$, we can calculate $\mathbb{E}[Y\mid do(\bS=\bs)]$ by 
\begin{align}
 \mathbb{E}[Y\mid do(\bS = \bs)] &= \sum_{\bz}P(Y=1\mid \bS=\bs, \bZ_i=\bz_i,i\le k)\nonumber\\&\ \ \ \ \ \ \ \cdot P(\bZ_1=\bz_1)\cdots P(\bZ_k = \bz_k\mid \bZ_{i} = \bz_i, X_i=x_i, i\le k-1)\label{adm_seq_formula},
\end{align}
where $\bz$ means the value of $\cup_{i=1}^k \bZ_i$, and $\bz_i$ means the projection of $\bz$ on $\bZ_i$. 
For $a=do(\bS=\bs)$ with $|\bS|=k$, we use $\{\bZ_{a,i}\}_{i=1}^k$ to denote the admissible sequence with respect to $Y$ and $\bS$ , and $\bZ_a = \cup_{i=1}^k \bZ_{a,i}$.
$Z_a=|\bZ_a|$ and $Z = \max_a Z_a$. In this paper, we simplify $\bZ_{a,i}$ to $\bZ_i$ if there is no ambiguity.  

\textcolor{black}{For any $\bP\subseteq \bX$, denote $\bP_t = \bX_t|_{\bP}$ as the projection of $\bX_t$ on $\bP$. We define \begin{center}\begin{small}\begin{gather}T_{a,\bz}= \sum_{j=1}^t \mathbb{I}\{\bS_j=\bs, (\bZ_a)_j = \bz\},r_{a,\bz}(t)=\frac{1}{T_{a,\bz}}\sum_{j=1}^t \mathbb{I}\{\bS_j=\bs, (\bZ_a)_j=\bz\}Y_j\label{eq:rt}\\
n_{a,\bz,l}(t) =\sum_{j=1}^t \mathbb{I}\{ (\bZ_i)_j=\bz_i, (X_i)_j = x_i, i\le l-1\}\label{eq:nt}\\
p_{a,\bz,l}(t)=\frac{1}{n_{a,\bz,l}(t)}\sum_{j=1}^t \mathbb{I}\{(\bZ_l)_j = \bz_l, (\bZ_i)_j=\bz_i, (X_i)_j = x_i, i\le l-1\}\label{eq:pt}
\end{gather}\end{small}\end{center} where the $r_{a,\bz}(t)$ and $p_{a,\bz,l}(t)$ are the empirical mean of $P(Y\mid \bS=\bs, \bZ_a = \bz)$ and $P(\bZ_l=  \bz_l\mid \bZ_i=\bz_i, X_i=x_i, i\le l-1)$. }
Using the above Eq.\eqref{adm_seq_formula}, we estimate each term of the right-hand side for every $\bz \in \{0,1\}^{Z_a}$ to obtain an estimate for $\mathbb{E}[Y\mid a]$ \textcolor{black}{as follows:
\begin{small}\begin{align}
    \hat{\mu}_{O,a}=\sum_{\bz}r_{a,\bz}(t)\prod_{l=1}^k p_{a,\bz,l}(t).\label{eq:adm_seq_estimate}
\end{align}\end{small}}


\begin{algorithm}[t]
	\caption{{\rm CCPE-General}$(G,\bA,\varepsilon,\delta)$}
	\label{alg:algorithm3}
	\begin{algorithmic}[1]
	\STATE Input:causal graph $G$, action set $\bA$, parameter $\varepsilon,\delta$, admissible sequence $\{(Z_a)_i\}$ for each action $a \in \bA$
	
	\STATE Initialize $t=1$, $T_a=0, T_{a,\bz}=0, N_a = 0, \hat{\mu}_a=0$ for all arms $a \in \bA$, $\bz\in \{0,1\}^z$, $z\in [|X|]$.
	
	\FOR{$t=1,2,\cdots,$}
	
	\STATE $a^{t-1}_h = \argmax_{a \in \bA}\hat{\mu}_a^{t-1}, a_l^{t-1} = \argmax_{a \in \bA\setminus a_h^{t-1}}(U_a^{t-1})$
	
	\IF{$U_{a_l^{t-1}}\le L_{a_h^{t-1}}+\varepsilon$}
	\RETURN $a_h^{t-1}$
	\ENDIF
	
    \STATE \emph{/* Step 1. Conduct a passive observation and estimate from the observational data */}

	\STATE Perform $do()$ operation and observe $\bX_t$ and $Y_t$. For $a=do()$, $N_a=N_a+1$.
	
	
	\FOR{$a=do(\bS = \bs) \in \bA \setminus \{do()\}$  {\rm with an admissible sequence and} $\bS = \{X_1,\cdots,X_k\}, \bs=\{x_1,\cdots,x_k\}$}

	
	
	
	
	
	
	


	
	\STATE \textcolor{black}{Estimate $\hat{\mu}_{O,a}$ using \eqref{eq:adm_seq_estimate}
    and $[L_{O,a}^t, U_{O,a}^t] = [\hat{\mu}_{O,a}-\beta_{O}^a(T_a),\hat{\mu}_{O,a}+\beta_{O}^a(T_a)].$ \emph{/* $\beta_{O}^a(t)$ is defined in Eq.\eqref{eq:betaOIgeneral} */}}
	\ENDFOR

    \STATE \emph{/* Step 2. Do two interventions and estimate from the interventional data */}
	
%
%
%
    \STATE Perform actions $a_l^{t-1}$ and $a_h^{t-1}$, get the reward $Y_t^{(l)}$ and $Y_t^{(h)}$.

	\STATE $N_{a_l^{t-1}} = N_{a_l^{t-1}}+1, N_{a_h^{t-1}} = N_{a_h^{t-1}}+1. $

	\STATE For $a \in \{a_l^{t-1}$, $a_h^{t-1},do()\}$, update the empirical mean 
	$\hat{\mu}_{I,a}$ as Line~\ref{line:empiricalmean} in Algorithm~\ref{alg:linear}.
	
	\STATE Update
	$[L_{I,a}^t, U_{I,a}^t] = [\hat{\mu}_{I,a}-\beta_{I}(D_a),\hat{\mu}_{I,a}+\beta_{I}(D_a)]$. \emph{/* $\beta_{I}(t)$ is defined in Eq.\eqref{eq:betaOIgeneral} */}

    \STATE \emph{/* Step 3. Merge the observational estimate and the interventional estimate */}

%
	\STATE For $a \in \bA$, calculate $[L_a^t, U_a^t] = [L_{O,a}^t, U_{O,a}^t]\cap [L_{I,a}^t, U_{I,a}^t]$ and $\hat{\mu}_a^t=\frac{L_a^t+U_a^t}{2}$. \label{line:mergeestimate2}
\ENDFOR

\end{algorithmic}
\end{algorithm}

For general graphs,  there is no efficient algorithm to determine the existence of the admissible sequence and extract it when it exists.
But we could rely on several methods to find admissible sequences in some special cases.
First, we can find the {\em adjustment set}, a special case of admissible sequences. 
For a causal graph $G$, $\bZ$ is an adjustment set for variable $Y$ and set $\bS$ if and only if $P(Y=1\mid do(\bS=\bs)) = \sum_{\bz}P(Y=1\mid \bS=\bs, \bZ=\bz)P(\bZ=\bz)$.
There is an efficient algorithm for deciding the existence of a minimal adjustment set with respect to any set $\bS$ and $Y$ and finding it \cite{causal_adjustmentset}.
Second, for general graphs without hidden variables, the admissible sequence can be easily found by $\bZ_j = \Pa(X_j)\setminus (\bZ_1\cup \cdots \bZ_{j-1}\cup X_1\cdots \cup X_{j-1})$ (\OnlyInFull{See Theorem~\ref{theorem: adm_seq_nohidden} in Appendix~\ref{section: existence_adm_seq}}\OnlyInShort{Theorem 4 in the Appendix}). 
Finally, when the causal graph satisfies certain properties, there exist algorithms to decide and construct admissible sequences~\cite{existence_adm_seq}.

Algorithm~\ref{alg:algorithm3} provides the pseudocode of our algorithm CCPE-General, which has the same framework as Algorithm~\ref{alg:linear}.
The main difference is in the first step of updating observational estimates, in which we rely on the do-calculus formula~Eq.\eqref{adm_seq_formula}.

For an action $a = do(\bS = \bs)$ without an admissible sequence, define $q_a^{(G)}=0$, meaning that it is hard to be estimated through observation.
Otherwise, define $q_a$ as:
 \begin{align}
 	\label{eq:qaGeneral}
	q_a^{(G)} = \min_{\bz}\{q_{a,\bz}\}, \mbox{where } q_{a,\bz} = P(\bS=\bs, \bZ_a=\bz), \forall \bz \in \{0,1\}^{Z_a}.
\end{align}
 Similar to CCPE-BGLM, for $a = do(\bS=\bs)$ with $|\bS|=k$, we use observational and interventional confidence radius as:
\begin{align}
    \beta_{O}^a(t)=\alpha_{O}\sqrt{\frac{1}{t}\log \frac{20k|\bA|Z_aI_a\log (2t)}{\delta}}; 
    \beta_{I}(t)=\alpha_{I}\sqrt{\frac{1}{t}\log\frac{|\bA|\log(2t)}{\delta}}, \label{eq:betaOIgeneral}
\end{align}
where $\alpha_O$ and $\alpha_I$ are exploration parameters, and $I_a= 2^{Z_a}.$
For a theoretical guarantee, we will choose $\alpha_O$ = 8 and $\alpha_I$ = 2. 
Our sample complexity result is given below.
\begin{theorem}\label{theorem:general}
    With probability $1-\delta $, {\rm CCPE-General}$(G,\bA,\varepsilon,\delta/5)$ returns an $\varepsilon$-optimal arm with sample complexity 
\begin{align} \label{eq:theorem2}
    T = O\left(H_{m_{\varepsilon,\Delta}^{(G)}}\log \frac{|\bA| H_{m_{\varepsilon,\Delta}^{(G)}}}{\delta}\right),
\end{align}
where $m_{\varepsilon,\Delta}^{(G)}, H_{m_{\varepsilon,\Delta}^{(G)}}$ are defined in Definitions ~\ref{def:gapm} and ~\ref{def:H} in terms of $q_a^{(G)}$'s defined in Eq.\eqref{eq:qaGeneral}.
\end{theorem}



Comparing to LUCB1, since $m_{\varepsilon,\Delta}^{(G)}\le |\bA|$, our algorithm is always as good as LUCB1.
It is easy to construct cases where our algorithm would perform significantly better than LUCB1.
Comparing to other causal bandit algorithms, our algorithm also performs significantly better, especially when $ m_{\varepsilon,\Delta}^{(G)} \ll m^{(G)}$
	or the gap $\Delta_a$ is large relative to $\varepsilon$.
Some causal graphs with candidate action sets and valid admissible sequence are provided in \OnlyInFull{Appendix~\ref{appendix_A}}\OnlyInShort{the Appendix A}, and
more detailed discussion can be found in \OnlyInFull{Appendix~\ref{app:comparison}}\OnlyInShort{the Appendix}.

%


\subsection{Lower Bound for the General Graph Case}
\label{sec.lowerbound}

To show that our CCPE-General algorithm is nearly minimax optimal, we provide the following lower bound, which is based on parallel graphs.
We consider the following class of parallel bandit instance $\xi$ with causal graph $G = (\{X_1,\cdots,X_n,Y\}, E)$:
	the action set is $\bA = \{do(X_i=x)\mid x \in \{0,1\}, 1\le i\le n\}\cup\{do()\}$.
The $q^{(G)}_a$ in this case is reduced to $q^{(G)}_{do(X_i=x)}=P(X_i=x)$ and $q_{do()}=1$.
    Sort the action set as $ q^{(G)}_{a_1}\cdot\max\{\Delta_{a_1},\varepsilon/2\}^2\le q^{(G)}_{a_2}\cdot \max\{\Delta_{a_2},\varepsilon/2\}^2\le \cdots \le q^{(G)}_{a_{2n+1}}\cdot \max\{\Delta_{a_{2n+1}},\varepsilon/2\}^2$.
Let $p_{\min}=\min_{\bx \in \{0,1\}^n}P(Y=1\mid \bX=\bx), p_{\max} = \max_{\bx \in \{0,1\}^n}P(Y=1\mid \bX=\bx)$. 
Let $p_{\max}+2\Delta_{2n+1}+2\varepsilon\le0.9, p_{\min}+\Delta_{\min}\ge 0.1$. 
\begin{theorem}\label{theorem:lowerbound}
    For the parallel bandit instance class $\xi$ defined above, any $(\varepsilon,\delta)$-PAC algorithm has expected sample complexity at least
    \begin{equation}
      \Omega\left(\left(H_{m^{(G)}_{\varepsilon,\Delta}-1}-\frac{1}{\min_{i< m^{(G)}_{\varepsilon,\Delta}}\max\{\Delta_{a_i},\varepsilon/2\}^2}-\frac{1}{\max\{\Delta_{do(),\varepsilon/2}\}^2}\right)\log \left(\frac{1}{\delta}\right)\right).  
    \end{equation}
\end{theorem}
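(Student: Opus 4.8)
The plan is to prove the bound by an information-theoretic change-of-measure argument in the style of the transportation inequality for fixed-confidence identification (Kaufmann--Capp\'e--Garivier). Throughout write $m := m^{(G)}_{\varepsilon,\Delta}$, and recall the actions are sorted so that $a_1,\dots,a_{m-1}$ are the ``hard'' actions (smallest products $q^{(G)}_{a_i}\max\{\Delta_{a_i},\varepsilon/2\}^2$). The starting point is the standard inequality: for the true instance $\nu$ and any alternative $\nu'$ under which the $\varepsilon$-optimal answer differs from that of $\nu$, any $(\varepsilon,\delta)$-PAC algorithm satisfies
\begin{align}
\sum_{a\in\bA}\E_\nu[N_a]\,\mathrm{KL}(\nu_a \,\|\, \nu'_a)\ \ge\ \mathrm{kl}(1-\delta,\delta)\ \ge\ \log\frac{1}{2.4\delta},
\end{align}
where $N_a$ counts how often $a$ is played and $\nu_a$ (resp.\ $\nu'_a$) is the joint law of the observed vector $(\bX,Y)$ under action $a$. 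I would establish this for the causal feedback model by the usual log-likelihood-ratio decomposition over rounds, using that each round reveals the full vector $(\bX_t,Y_t)$.

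Next I would construct, for each hard action $a_j=do(X_{i_j}=x_j)$ that is neither $do()$ nor the optimal action $a^*$, a perturbed instance $\nu'_j$ that agrees with $\nu$ on the marginal law of $\bX$ and on $P(Y=1\mid\bx)$ for every $\bx$ with $x_{i_j}\ne x_j$, but raises $P(Y=1\mid\bx)$ by a common amount $\eta_j=\Theta(\max\{\Delta_{a_j},\varepsilon\})$ on the slice $\{x_{i_j}=x_j\}$. Since $\mu_{a_j}=\E_{\bX_{-i_j}}[P(Y=1\mid X_{i_j}=x_j,\bX_{-i_j})]$ in a parallel graph, this shifts $\mu_{a_j}$ up by $\eta_j$ while moving the means of slice-disjoint arms only by $O(\eta_j q_{a_j})$; choosing $\eta_j$ a constant multiple of $\Delta_{a_j}+\varepsilon$ makes $a_j$ strictly $\varepsilon$-better than $a^*$, so the required answer flips and $a^*$ becomes non-$\varepsilon$-optimal in $\nu'_j$. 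The hypotheses $p_{\max}+2\Delta_{2n+1}+2\varepsilon\le 0.9$ and $p_{\min}+\Delta_{\min}\ge 0.1$ guarantee that all perturbed conditionals stay inside $[0.1,0.9]$, which is exactly what makes each per-observation divergence $\mathrm{kl}(p,p+\eta_j)=\Theta(\eta_j^2)$.

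The crucial computation is the KL term. Because $\nu$ and $\nu'_j$ differ only in $P(Y\mid\bX)$ on the slice $\{x_{i_j}=x_j\}$, the divergence $\mathrm{KL}(\nu_a\,\|\,\nu'_{j,a})$ is driven only by rounds whose observed configuration lands in that slice: it equals $\Theta(\eta_j^2)$ for the direct intervention $do(X_{i_j}=x_j)$, equals $q_{a_j}\cdot\Theta(\eta_j^2)$ for $do()$ and for every $do(X_k=\cdot)$ with $k\ne i_j$, and is $0$ for $do(X_{i_j}=1-x_j)$. Substituting into the transportation inequality and bounding the expected number of slice-observations by $\E_\nu[N_{a_j}]+q_{a_j}T$ gives, for every flippable hard action,
\begin{align}
\E_\nu[N_{a_j}] + q_{a_j}\,T\ \gtrsim\ \frac{\log(1/\delta)}{\max\{\Delta_{a_j},\varepsilon/2\}^2}.
\end{align}
Summing over the flippable hard actions and using $\sum_{j<m}\E_\nu[N_{a_j}]\le T$ together with $\sum_{j<m} q_{a_j}<1$ — the latter because the definition of $m$ forces $q_{a_j}\max\{\Delta_{a_j},\varepsilon/2\}^2<1/H_{m-1}$ for all $j<m$, whence $\sum_{j<m}q_{a_j}<\frac{1}{H_{m-1}}\sum_{j<m}\frac{1}{\max\{\Delta_{a_j},\varepsilon/2\}^2}=1$ — yields $2T\gtrsim(\cdots)\log(1/\delta)$, i.e.\ the claimed $\Omega$ bound. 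The two subtracted terms are exactly the contributions of the two actions that cannot be flip targets: $do()$ (its mean cannot be raised by a localized perturbation) and $a^*$ (already optimal and, having the smallest gap $\Delta_{\min}$, contributing the largest term $1/\min_{i<m}\max\{\Delta_{a_i},\varepsilon/2\}^2$).

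I expect the main obstacle to be the KL bookkeeping of the third step: one must argue cleanly that in the causal model the information gathered about the slice $\{X_{i_j}=x_j\}$ is governed by the \emph{total} number of rounds in which that slice is observed, namely $N_{do(X_{i_j}=x_j)}+q_{a_j}(T-N_{do(X_{i_j}=0)}-N_{do(X_{i_j}=1)})$, rather than by intervention counts alone, since passive observations and interventions on other nodes both leak information at rate $q_{a_j}$. The remaining delicate points are keeping the perturbed probabilities in $[0.1,0.9]$ so that every $\mathrm{kl}(p,p+\eta_j)=\Theta(\eta_j^2)$, and verifying that removing precisely the two actions $a^*$ and $do()$ from the hard set suffices to match the stated subtracted terms regardless of whether $a^*$ actually lies among $a_1,\dots,a_{m-1}$ (if it does not, the subtraction is merely a valid over-estimate of the excluded contribution).
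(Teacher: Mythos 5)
Your proposal follows essentially the same route as the paper's proof: a Kaufmann--Capp\'e--Garivier change-of-measure argument with alternative instances obtained by lifting $P(Y=1\mid\bx)$ by $\Theta(\Delta_a+\varepsilon)$ on the slice $\{X_i=x\}$, a per-slice-observation KL of order $(\Delta_a+\varepsilon)^2$ controlled via the hypotheses $p_{\max}+2\Delta_{2n+1}+2\varepsilon\le 0.9$ and $p_{\min}+\Delta_{\min}\ge 0.1$, the number of slice observations bounded by $N_a+q_a\sigma$, and the definition of $m^{(G)}_{\varepsilon,\Delta}$ used to show that $\sum_a q_a$ over the hard arms is at most a constant. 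The one bookkeeping difference is which arm gets dropped: the paper excludes $a_{\min}=\argmin_{a}\max\{\Delta_a,\varepsilon/2\}$ among the hard arms (this is what guarantees $q_a\le 1/2$ for every remaining flip target, and it subsumes $a^*$ whenever $a^*$ is hard since $\Delta_{a^*}=\Delta_{\min}$), rather than $a^*$ per se --- its contribution is exactly the subtracted term $1/\min_{i<m}\max\{\Delta_{a_i},\varepsilon/2\}^2$ in the statement, so your plan goes through once you make that swap in the delicate point you already flagged.
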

Theorem \ref{theorem:lowerbound} is the first gap-dependent lower bound for causal bandits, which needs brand-new construction and technique. 
Comparing to the upper bound in Theorem~\ref{theorem:general}, the main factor $H_{m^{(G)}_{\varepsilon,\Delta}}$ is
	the same, except that the lower bound subtracts several additive terms. 
The first term $H_{m_{\varepsilon,\Delta}-1}$ is almost equal to $H_{m_{\varepsilon,\Delta}}$ appearing in Eq.\eqref{eq:theorem2}, except the it omits the last and the smallest additive term in $H_{m_{\varepsilon,\Delta}}$.
The second term is to eliminate one term with minimal $\Delta_{a_i}$, which is common in multi-armed bandit. (\cite{2018Lattimore},\cite{eliminate_Delta_min_example}) 
The last term is because $do()$'s reward must be in-between $\mu_{do(X_i=0)}$ and $\mu_{do(X_i=1)}$ and thus cannot
	be the optimal arm.

\section{Future Work}
\label{sec.conclusion}



There are many interesting directions worth exploring in the future. First, how to improve the computational complexity for CPE of causal bandits is an important direction.
Second, one can consider developing efficient pure exploration algorithms for causal graphs with  partially unknown graph structures. 
Lastly, identifying the best intervention may be connected with the markov decision process and studying their interactions is also an interesting direction.

\clearpage


\OnlyInFull{
\appendix
\clearpage

\appendix
\onecolumn

\section*{Appendix}

\section{General Classes of Graphs Supporting Theorem 2}
\label{appendix_A}
For Theorem \ref{theorem:general}, in this section we show some graphs with small size of admissible sequence for all arms, which makes our result much better than previous algorithms. By comparison in the Appendix~\ref{app:comparison}, we show that if $2^{Z+l}\le |\bA|$, where $Z = \max_a Z_a, l = \max\{|\bS| \mid do(\bS=\bs) \in A\}$, our algorithm can perform better than previous classical bandit algorithms.

\paragraph{Two-layer graphs} Consider $\bX=A\cup B$, where $A=\{X_1,\cdots,X_k\}$ is the set of key variables, $B = \{X_{k+1}\cdots,X_n\}$ are the rest of variables. 
Now we consider $k\le \frac{1}{2}\log_2 n$, and the edge set is in $E\subseteq\{(X_i\to X_j)\mid X_i \in A, X_j \in A\}\cup\{(X_i\to X_j)\mid X_i \in A, X_j \in B\}\cup \{(X_i\to Y)\mid X_i \in B\}$.
There can also exist some hidden confounders between two variables in $A$, namely, $A_1\leftarrow U\to A_2$ for unobserved variables $U$ and $A_1,A_2 \in A$.
\begin{figure}[htb] 
	\centering 
	\includegraphics[width=0.50\textwidth]{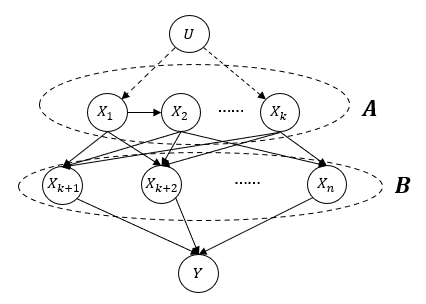}
	\caption{An Example of Two-layer Graphs}
	\label{Fig.1} 
\end{figure}

We define the action set as $\{do(\bS=\bs)\mid \bS\subset  B, |\bS|\le l, \bs \in \{0,1\}^{|\bS|}\}$ for some $l$. Then, since for arm $do(\bS=\bs)$, $A$ is the adjustment set for it, we know $Z_a\le k\le \frac{1}{2}\log_2 n$ for all action $a$. Then $2^{Z+l}\le \sqrt{n}\cdot 2^l<\binom{n}{l}\cdot 2^l< |\bA|$.

Consider the scenario in which a farmer wants to optimize the crop yield \cite{lattimore2016causal}. 
$A=\{X_1, X_2,\cdots, X_k\}$ are key elements influencing crop yields, such as temperature, humidity, and soil nutrient. $B=\{X_{k+1},\cdots,X_n\}$ are different kinds of crops,
	and $Y$ is the final total reward collected from all crops. 
Each kind of crop may be influenced by key elements in $A$ in different ways. 
Moreover, the elements in $A$ may have some causal relationships: higher humidity will lead to lower temperature. The above causal graph represents this problem very well.

\paragraph{Collaborative graphs}
Consider $\bX = \bX^1\cup \bX^2\cup \cdots \cup \bX^l$, where each $\bX^i (1\le i\le l)$ has at most $k\le \frac{1}{2}\log n$ nodes. The edge set is contained in $E=\{X \to Y\mid X \in \bX\}\cup \{X_i\to X_j\mid X_i,X_j \in \bX^t, 1\le t\le l\}$. In each subgraph $\bX^i$, we allow the existence of unobserved confounders between two variables in $\bX^i$. (We use dashed arrows to represent the confounders.)  We call this class of graphs collaborative graphs (see Figure~\ref{Fig.2}), 
	since it is modified by \cite{NEURIPS2021_collaborative} on collaborative causal discovery.
\begin{figure}[htbp] \label{fig:coll_graph}
	\centering 
	\includegraphics[width=0.50\textwidth]{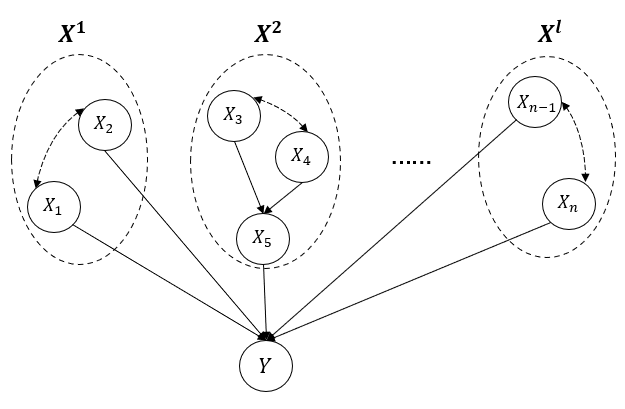}
	\caption{An Example of Collaborative Graphs}
	\label{Fig.2} 
\end{figure}

For simplicity, the action set is defined by $\{do(\bS = \bs)\mid  |\{\bS\cap \bX^i\}|\le 1, |\bS|\le d\}$. Then for a particular $\bS=\{X_{i_1},X_{i_2},\cdots,X_{i_d}\}$ and $i_1,\cdots,i_d$ such that $X_{i_j}\in \bX^{i_j}, 1\le j\le d$ for some $d\in [0,l]$
For these graphs, we know $T=\cup_{j=1}^d \bX^{i_j}\setminus \bS$ is a adjustment set (then also a admissible sequence) for $\bS$ and $Y$ with $|T|\le \frac{1}{2} d\log n$. Then $2^Z< n^{d/2}\cdot 2^d<\binom{n/k}{d}\cdot 2^d< |\bA|$ when $n$ is large. Collaborative graphs are useful 
	in many real-world scenarios. 
For example, many companies want to cooperate and maximize their profits. 
Then each subgraph $\bX^i(1\le i\le l)$ represents a company, and they want to find the best intervention to generate the maximum profit. 
\paragraph{Causal Tree}

Causal tree is a useful structure in real scenario, which is consider in \cite{NEURIPS2021_unknown_gr} and \cite{nips2019-causaltree}. In this class of graph, the underlying causal graph of causal model is a directed tree, in which all its edges point away from the root. Denote the root as layer 0, and layer $i$, $\bL_i$ contains all the nodes with distance $i$ to the root.  For simplicity, we assume all unobserved confounders point to two nodes in same layer. For a set $T$, its c-component $C_T$ means all the nodes connected to $T$ by only bi-directed edges (confounders) . 

For each action set $\{do(\bS=\bs)\mid \bS\subseteq \bX\}$,  we consider $\bS\cap \bL_i = \bS_i$. Then the sequence $\bZ_i = C_{\bS_i}\cup \Pa(C_{\bS_i})\setminus\{Z_0,\cdots,Z_{i-1}, S_0,\cdots,S_{i-1}\}$ is the admissible sequence. We give an example in Figure~\ref{Fig.3}. For example, if we consider action $do(\{X_3,X_4,X_8\}=\bs, \bs \in \{0,1\}^3)$, then the admissible sequence is $\bZ_1 = \{X_1,X_2\}, \bZ_2 = \emptyset, \bZ_3 = \{X_7\}$, and we can write 
\begin{align*}
    P(Y\mid do(X_3,X_4,X_8))=\sum_{X_1,X_2,X_7}&P(Y\mid X_1,X_2,X_3,X_4,X_7,X_8)\\&\ \ \ \ \ P(X_1,X_2)P(X_7\mid X_1,X_2,X_3,X_4).
\end{align*}

\begin{figure}[htbp] \label{fig:causal_tree}
	\centering 
	\includegraphics[width=0.60\textwidth]{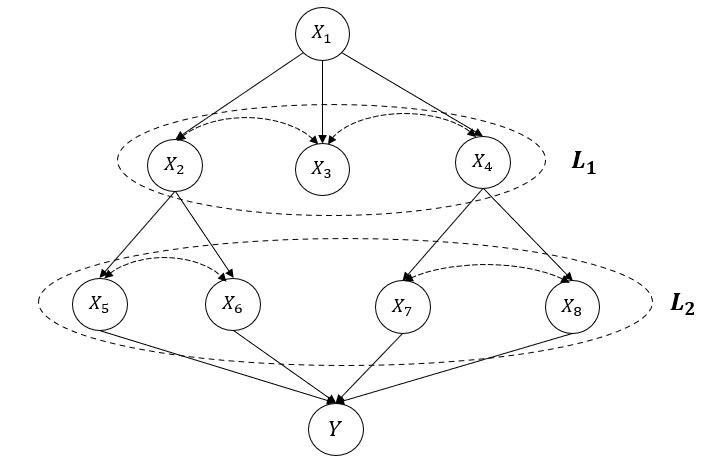}
	\caption{An Example of Causal Tree with Confounders}
	\label{Fig.3} 
\end{figure}

\section{More Detailed Comparison of Sample Complexity}
\label{app:comparison}

Here we provide a bit more detailed sample complexity comparison between our Theorem~\ref{theorem:general} on general graphs with hidden variables and prior studies.

\paragraph{Compare with LUCB1 algorithm}
Comparing to LUCB1, since $m_{\varepsilon,\Delta}^{(G)}\le |\bA|$, our algorithm will not perform worse than LUCB1. 
Our algorithm can also perform much better than LUCB1 algorithm in some cases. For example, when we consider $\bA = \{do(\bS=\bs)\mid |\bS|= k, \bs \in\{0,1\}^{|\bS|}\}$ for some constant $k$, we have $|\bA| = \binom{n}{k}\cdot 2^k$ . Assume $Z=\max_a Z_a \le c\log n$ for a constant $c$, and $q_a$ can be approximately $\Theta(\frac{1}{2^{Z_a+k}})$ ($q_a = \min_{\bz}P(\bS = \bs, \bZ_a = \bz)$) for all action $a$. Then we can get $m_{\varepsilon,\Delta}^{(G)} \approx 2^{Z+k}\le n^{c}\cdot 2^k  = o(|\bA|)$. Thus our algorithm performs much better than LUCB1.

\paragraph{Compare with previous causal bandit algorithms}Since there is no previous causal bandit algorithm working on combinatorial action set with hidden variables, 
we compare two previous causal bandit algorithms in some special cases. 
First, compare to \cite{lattimore2016causal} with parallel graph and atomic intervention, we first transfer the simple regret result in \cite{lattimore2016causal} to sample complexity $\widetilde{O}(\frac{m^{(G)}}{\varepsilon^2}\log(\frac{1}{\delta}))$. For parallel graph and $a = do(X_i=x)$, we know $q_a = P(X_i=x)$ since there is no parent for $X_i$, and our algorithm result is $\widetilde{O}(H_{m_{\varepsilon,\Delta}})$. Then  since $ m_{\varepsilon,\Delta}^{(G)}=O(m^{(G)})$ and $\max\{\Delta_a,\varepsilon/2\}\ge \varepsilon/2$, our algorithm always perform better. 
When the gap $\Delta_a$ is large relative to $\varepsilon$, our algorithm perform much better because of our gap-dependent sample complexity. 
\cite{icml2018-propagatinginference} consider combinatorial intervention on graphs without hidden variables, so we can compare our algorithm's result with theirs in this setting. 
We also transfer their simple regret result to sample complexity $\widetilde{O}(\frac{\max\{nC, n|\bA|\}}{\varepsilon^2}\log (1/\delta))$, where $C = \sum_{X \in \bX\cup\{Y\}}2^{|\Pa(X)|}$. 
Note that when $|\Pa(Y)|$ is large, $C\ge 2^{|\Pa(Y)|}$ can be really large. However, our algorithm even does not need the knowledge of $\Pa(Y)$.  Indeed,  considering $\max_{a= do(\bS = \bs)}|\bS| = k$ is a constant, and assume $Z\le \log C-k$ and $q_a = \Theta(\frac{1}{2^{Z+k}})$, we have $m_{\varepsilon,\Delta}^{(G)} \le \Theta(C)$, then our dominating term $H_{m_{\varepsilon,\Delta}}^{(G)}$ is smaller than $\frac{nC}{\varepsilon^2}$ because both $\max\{\Delta_a,\varepsilon/2\}\ge \varepsilon/2$ and $|M^{(G)}| = m_{\varepsilon,\Delta}^{(G)}\le nC$. Also, at the worst case our algorithm's sample complexity is not more than $\widetilde{O}(\frac{|\bA|}{\varepsilon^2}\log(\frac{1}{\delta}))$, while the algorithm in \cite{icml2018-propagatinginference} may result in $\widetilde{O}(\frac{n|\bA|}{\varepsilon^2}\log(1/\delta))$. The experiments are provided in Appendix~\ref{sec:experiment}.

In summary,  when compared to prior studies on causal bandit algorithms, our algorithm wins when the reward gaps are relatively large or the size of the admissible sequence is small; and
when compared to prior studies on adaptive pure exploration algorithms, our algorithm wins by estimating do effects using observational data and saving estimates on those easy actions.

\section{Proof of Theorems}

\subsection{Proof of Theorem \ref{theorem:linear}}\label{proofthm1}

\begin{proof}
We first provide a lemma in \cite{li2020online} to show the confidence for the maximum likelihood estimation.
	
	\begin{restatable}{lemma}{eigenvalueestimate}\label{lemma:eigenvalue_estimate}
		For one node $X \in \bX\cup \{Y\}$, assume Assumption \ref{ass:derivativeupper} and \ref{ass:derivativelower} holds, and 
		\begin{align}
			\lambda_{min}(M_{t,X})\ge \frac{512 D (M^{(2)})^2}{\kappa^4}(D^2+\ln\frac{3nt^2}{\delta}),\nonumber
		\end{align}
		with probability $1-\delta/nt^2$,  for any vector $\bv \in \mathbb{R}^{|\Pa(X)|}$, at all rounds $t$ the estimator $\hat{\btheta}_{t,X}$ in Algorithm \ref{alg:MLE} satisfy 
		\begin{align}
			|\bv^{\top}(\hat{\btheta}_{t,X}-\btheta_X^*)|\le \frac{3}{\kappa}\sqrt{\log (3nt^2/\delta)}||v||_{M_{t,X}^{-1}}.\nonumber
		\end{align}
	\end{restatable}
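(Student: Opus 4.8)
The plan is to follow the standard maximum-likelihood confidence-analysis machinery for generalized linear models (as in \cite{li2017provably,li2020online}), specialized to a single node $X$. First I would set up the relevant objects. Let $\epsilon_i = X_i - f_X(\bV_{i,X}^\top \btheta_X^*)$ denote the centered noise at round $i$; since $X_i \in \{0,1\}$ and $f_X \in [0,1]$, the $\epsilon_i$ form a bounded martingale difference sequence with respect to the natural filtration. Define the shifted score map $G_t(\btheta) = \sum_{i=1}^t \left(f_X(\bV_{i,X}^\top\btheta) - f_X(\bV_{i,X}^\top\btheta_X^*)\right)\bV_{i,X}$. Because $\hat{\btheta}_{t,X}$ solves the estimating equation in Algorithm~\ref{alg:MLE}, subtracting $\sum_i f_X(\bV_{i,X}^\top\btheta_X^*)\bV_{i,X}$ from both sides of that equation shows $G_t(\hat{\btheta}_{t,X}) = \sum_{i=1}^t \epsilon_i \bV_{i,X} =: Z_t$, reducing the problem to controlling the root of $G_t$ against the noise vector $Z_t$.

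The core of the argument is a strong-monotonicity property of $G_t$. By the mean value theorem, $G_t(\btheta) = \left(\sum_{i=1}^t \dot f_X(\bV_{i,X}^\top\bar\btheta_i)\,\bV_{i,X}\bV_{i,X}^\top\right)(\btheta - \btheta_X^*)$ for some $\bar\btheta_i$ on the segment between $\btheta$ and $\btheta_X^*$; whenever $\|\btheta - \btheta_X^*\|\le 1$, Assumption~\ref{ass:derivativelower} gives $\dot f_X \ge \kappa$, so $\langle \btheta - \btheta_X^*,\, G_t(\btheta)\rangle \ge \kappa \|\btheta - \btheta_X^*\|_{M_{t,X}}^2$ (the initialization $M_{0,X}=I$ only strengthens this lower bound). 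Granting for the moment that $\hat{\btheta}_{t,X}$ lies in this unit ball, I would set $\btheta = \hat{\btheta}_{t,X}$, substitute $G_t(\hat{\btheta}_{t,X}) = Z_t$, and apply Cauchy--Schwarz in the $M_{t,X}$-norm to get $\kappa\|\hat{\btheta}_{t,X} - \btheta_X^*\|_{M_{t,X}}^2 \le \langle \hat{\btheta}_{t,X}-\btheta_X^*,\, Z_t\rangle \le \|\hat{\btheta}_{t,X}-\btheta_X^*\|_{M_{t,X}}\,\|Z_t\|_{M_{t,X}^{-1}}$, hence $\|\hat{\btheta}_{t,X} - \btheta_X^*\|_{M_{t,X}} \le \tfrac1\kappa \|Z_t\|_{M_{t,X}^{-1}}$. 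The stated conclusion then follows from one further Cauchy--Schwarz, $|\bv^\top(\hat{\btheta}_{t,X}-\btheta_X^*)| \le \|\bv\|_{M_{t,X}^{-1}}\,\|\hat{\btheta}_{t,X}-\btheta_X^*\|_{M_{t,X}}$, once the noise is controlled by $\|Z_t\|_{M_{t,X}^{-1}} \le 3\sqrt{\log(3nt^2/\delta)}$. To obtain that noise bound I would invoke a self-normalized martingale concentration inequality for the bounded sequence $\{\epsilon_i\bV_{i,X}\}$, holding simultaneously at all rounds with probability $1-\delta/(nt^2)$: the $t^2$ inside the logarithm arises from a summable union bound over rounds, and the factor $n$ from a union over the nodes $X\in\bX\cup\{Y\}$ so that the statement holds for every node at once.

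The main obstacle --- and the reason the eigenvalue hypothesis carries its $M^{(2)}$, $\kappa^4$, and $D$ factors --- is the apparent circularity in the domain restriction: the monotonicity constant $\kappa$ is only valid inside the ball $\{\|\btheta-\btheta_X^*\|\le 1\}$, yet establishing $\|\hat{\btheta}_{t,X}-\btheta_X^*\|\le 1$ seems to presuppose the very conclusion we are after. I would break this with a quantitative continuity argument: expanding $G_t$ to second order and bounding the remainder via the curvature bound $M^{(2)}$ from Assumption~\ref{ass:derivativeupper}, one shows that once $\lambda_{\min}(M_{t,X}) \ge \frac{512 D (M^{(2)})^2}{\kappa^4}\left(D^2+\ln\frac{3nt^2}{\delta}\right)$, the map $G_t$ restricted to the boundary of the unit ball already dominates the noise $Z_t$ in magnitude, forcing its unique root $\hat{\btheta}_{t,X}$ to lie strictly inside. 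This step is delicate precisely because it must simultaneously convert the $M_{t,X}$-weighted bound into a Euclidean bound (through $\|\cdot\|_2 \le \lambda_{\min}(M_{t,X})^{-1/2}\|\cdot\|_{M_{t,X}}$) and absorb the second-order error term; it is essentially the content of the lemma borrowed from \cite{li2020online,li2017provably}, whose argument I would adapt to the single-node causal-model setting here, with the union-bound bookkeeping adjusted so that the guarantee is uniform over all nodes and rounds.
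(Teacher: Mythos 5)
Your setup (reduction to $G_t(\hat{\btheta}_{t,X})=Z_t$ with $Z_t=\sum_i \epsilon_i \bV_{i,X}$, the consistency step forcing $\|\hat{\btheta}_{t,X}-\btheta_X^*\|\le 1$ via the eigenvalue hypothesis) matches the paper. But the final chain of your argument has a genuine gap: you bound $|\bv^\top(\hat{\btheta}_{t,X}-\btheta_X^*)|\le \|\bv\|_{M_{t,X}^{-1}}\|\hat{\btheta}_{t,X}-\btheta_X^*\|_{M_{t,X}}\le \frac{1}{\kappa}\|\bv\|_{M_{t,X}^{-1}}\|Z_t\|_{M_{t,X}^{-1}}$ and then assert $\|Z_t\|_{M_{t,X}^{-1}}\le 3\sqrt{\log(3nt^2/\delta)}$. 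That noise bound is not attainable: $\|Z_t\|_{M_{t,X}^{-1}}^2$ is the squared norm of a $|\Pa(X)|$-dimensional whitened noise vector, whose expectation is of order $D=|\Pa(X)|$, so any high-probability bound necessarily carries a $\sqrt{D+\log(1/\delta)}$ factor (the paper itself uses exactly $\|Z\|_{M_{t,X}^{-1}}\le 4\sqrt{|\Pa(X)|+\ln(1/\delta)}$, but only inside the consistency step). Your route therefore yields $|\bv^\top(\hat{\btheta}_{t,X}-\btheta_X^*)|\lesssim \frac{1}{\kappa}\sqrt{D+\log(1/\delta)}\,\|\bv\|_{M_{t,X}^{-1}}$, which is weaker than the claimed dimension-free per-direction bound by a factor of $\sqrt{D}$; this loss would propagate into $\beta_O^a(t)$ and degrade the sample complexity.

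The missing idea is the ``normality'' decomposition that the paper performs after consistency: write $Z=(H+E)\Delta$ with $H=\sum_i \dot f_X(\bV_{i,X}^\top\btheta_X^*)\bV_{i,X}\bV_{i,X}^\top$ and $E$ the second-order remainder, so that
\begin{equation*}
\bv^\top(\hat{\btheta}_{t,X}-\btheta_X^*) = \bv^\top H^{-1}Z - \bv^\top H^{-1}E(H+E)^{-1}Z .
\end{equation*}
The leading term $\bv^\top H^{-1}Z=\sum_i (\bv^\top H^{-1}\bV_{i,X})\epsilon_i$ is a \emph{scalar} bounded martingale, so Hoeffding gives the dimension-free $\frac{\sqrt{2\ln(1/\delta)}}{\kappa}\|\bv\|_{M_{t,X}^{-1}}$; the dimension dependence is confined to the remainder term, which is controlled via $\|H^{-1/2}EH^{-1/2}\|\le 1/2$ and is suppressed by the hypothesis $\lambda_{\min}(M_{t,X})\gtrsim \frac{D(M^{(2)})^2}{\kappa^4}(D^2+\ln(1/\delta))$. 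In other words, the eigenvalue condition is not only there to resolve the consistency circularity you identify; it is also what kills the $\sqrt{D}$ in the second-order term, and without the $H$-versus-$E$ split there is no term for it to kill.
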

	
	Since we need to estimate $\btheta_{t,X}$ for all nodes, let $F_1$ be the event that the above inequality doesn't hold, then by union bound, $\Pr\{F_1\}\le n\sum_{t>1}\frac{\delta}{nt^2}\le \delta.$(We can consider $t>1$)
	Now from \cite{CCB2022}, the true mean $\sigma(\hat{\btheta}_t, X_i)$ and our estimation $\sigma(\btheta^*, X_i)$ can be bounded by Lemma \ref{lemma:bound_estimate}. We rewrite the Lemma \ref{lemma:bound_estimate} here, and give proof in Appendix~\ref{sec:proof_lemma_bound_estimate}.

	\boundestimate*
	By definition, for any action $a=do(\bM = s)$, $|P_{\bS,Y}|=\ell_a \in \{1,\cdots,n\}$. 
	We then introduce Lecu\'{e} and Mendelson's Inequality represented in \cite{nie2021matrix}. 
	\begin{lemma}[\cite{nie2021matrix} Lecu\'{e} and Mendelson's Inequality ]\label{lemma:lecue's}
		Let random column vector $\bv \in \mathbb{R}^D$, and $\bv_1,\cdots, \bv_n$ are $n$ independent copies of $\bv$. Assume $\bz \in \Sphere(D)$ such that 
		\begin{align}
			\Pr[|\bv^{\top} \bz|>\alpha^{1/2}]\ge \beta,\nonumber
		\end{align}
		then there exists a constant $c>0$ such that when $n\ge \frac{cD}{\beta^2}$
		\begin{align}
			\Pr\left[\lambda_{min}\left(\frac{1}{n}\sum_{i=1}^n \bv_i \bv_i^{\top} \right)\le \frac{\alpha\beta}{2}\right]\le e^{-n\beta^2/c}.\nonumber
		\end{align}
	\end{lemma}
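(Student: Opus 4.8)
The final statement, Lemma~\ref{lemma:lecue's}, is the standard small-ball (Lecué--Mendelson) lower bound on the smallest eigenvalue of an empirical second-moment matrix, so the cleanest route in the paper is simply to invoke \cite{nie2021matrix}; what follows is the self-contained argument one would reproduce. The starting point is the variational characterization
\[
\lambda_{\min}\!\left(\frac{1}{n}\sum_{i=1}^n \bv_i\bv_i^{\top}\right) = \inf_{\bz \in \Sphere(D)} \frac{1}{n}\sum_{i=1}^n (\bv_i^{\top}\bz)^2 ,
\]
which reduces the claim to a uniform lower bound on the empirical quadratic form $Q_n(\bz) := \frac{1}{n}\sum_i (\bv_i^{\top}\bz)^2$ over the sphere.

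The key device (the ``small-ball'' truncation) is to discard the magnitude information in $Q_n$ and keep only a threshold indicator, using for each fixed $\bz$ the elementary bound
\[
\frac{1}{n}\sum_{i=1}^n (\bv_i^{\top}\bz)^2 \;\ge\; \alpha\cdot\frac{1}{n}\sum_{i=1}^n \I\{|\bv_i^{\top}\bz|>\alpha^{1/2}\}.
\]
By hypothesis each indicator has mean $\Pr[|\bv^{\top}\bz|>\alpha^{1/2}]\ge\beta$, so for a fixed direction the empirical average concentrates around a value at least $\beta$, and a Chernoff bound shows it drops below $\beta/2$ only with exponentially small probability. Combined with the display above, this would give $Q_n(\bz)\ge \alpha\beta/2$ pointwise.

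The main work, and the main obstacle, is upgrading this pointwise estimate to one that is \emph{uniform} over the continuous sphere $\Sphere(D)$. I would handle this by symmetrization followed by a Rademacher-complexity bound for the class of slab indicators $\{\bv\mapsto \I\{|\bv^{\top}\bz|>\alpha^{1/2}\}:\bz\in\Sphere(D)\}$; since these sets are intersections of two half-spaces, the class has VC dimension $O(D)$, whence its Rademacher complexity is $O(\sqrt{D/n})$. Demanding that this complexity be at most $\beta/4$ is precisely what forces the sample-size condition $n\ge cD/\beta^2$, and the bounded-differences (McDiarmid) inequality then converts the expected uniform deviation into the exponential tail $e^{-n\beta^2/c}$. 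Putting the pieces together, with probability at least $1-e^{-n\beta^2/c}$ we obtain $\frac{1}{n}\sum_i \I\{|\bv_i^{\top}\bz|>\alpha^{1/2}\}\ge\beta/2$ simultaneously for all $\bz\in\Sphere(D)$, hence $\inf_{\bz}Q_n(\bz)\ge\alpha\beta/2$, which is exactly the stated bound. The delicate points are tracking the absolute constant $c$ consistently through the covering/VC step and verifying the claimed $O(D)$ VC dimension of the slab class; everything else is routine concentration.
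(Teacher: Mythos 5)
The paper does not prove this lemma at all: it is imported verbatim from \cite{nie2021matrix} as a black-box tool, so there is no in-paper argument to compare against. Your self-contained sketch is the standard small-ball (Lecu\'{e}--Mendelson) proof and is essentially correct: the variational identity $\lambda_{\min}(\frac{1}{n}\sum_i \bv_i\bv_i^{\top}) = \inf_{\bz\in\Sphere(D)}\frac{1}{n}\sum_i(\bv_i^{\top}\bz)^2$, the truncation $\frac{1}{n}\sum_i(\bv_i^{\top}\bz)^2 \ge \alpha\cdot\frac{1}{n}\sum_i \I\{|\bv_i^{\top}\bz|>\alpha^{1/2}\}$, and the uniformization over the sphere via VC/Rademacher bounds plus bounded differences are exactly the ingredients one finds in the literature, and the bookkeeping ($n\ge cD/\beta^2$ from requiring the Rademacher term be at most $\beta/4$, and the $e^{-n\beta^2/c}$ tail from McDiarmid) is right. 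Two small points to tidy up: first, the lemma's hypothesis must be read as holding for \emph{every} $\bz\in\Sphere(D)$ (as stated it quantifies a single $\bz$, which cannot yield a bound on $\lambda_{\min}$), and your argument correctly uses the all-$\bz$ version; second, the sets $\{|\bv^{\top}\bz|>\alpha^{1/2}\}$ are complements of slabs, i.e.\ unions of two half-spaces rather than intersections --- this does not change the $O(D)$ VC dimension (VC dimension is invariant under complementation), but the phrasing should be fixed. Neither issue is a gap in substance; your proof buys self-containedness at the cost of re-deriving a cited result, while the paper's choice to cite keeps the appendix shorter.
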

	This lemma can help us to bound the minimum eigenvalue for $M_{t,X}=\sum_{1\le i\le t} \bV_{t,X}\bV_{t,X}^{\top}$. 
	To satisfy the condition for Lemma \ref{lemma:lecue's}, we provide a similar lemma in \cite{CCB2022}:

	\begin{lemma}\label{lemma_prob_lar}
		Under Assumption \ref{ass:parentnontrivial}, for any node $X \in \bX$ and $\bv \in Sphere(|\Pa(X)|)$,
		\begin{align}
			\Pr\left[|\Pa(X)\cdot \bz|> \frac{1}{\sqrt{4D^2-3}} \right]\ge \eta.\nonumber
		\end{align}
	\end{lemma}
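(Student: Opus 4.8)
The plan is to prove the bound via a dichotomy on how the unit vector $\bz$ distributes its mass between the always-on global coordinate and the genuinely random parent coordinates. Write $d = |\Pa(X)| \le D$, index the coordinate of the global variable $X_1$ (deterministically equal to $1$) as coordinate $1$, and let coordinates $2,\dots,d$ correspond to the random parents. Let $\bV$ denote the random value vector of $\Pa(X)$, set $\gamma = 1/\sqrt{4D^2-3}$, and split $\bz = (z_1, \bz')$ with $\|\bz'\|^2 = 1 - z_1^2$. The threshold that makes the two cases balance is $\max_{i\ge 2}|z_i|$ compared against $2\gamma$.

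First I would handle the case where some random parent $X'$ has a large coefficient, $|z_{X'}| > 2\gamma$. Conditioning on the values of all the other parents fixes a constant $c$ so that $\bV\cdot\bz \in \{c,\,c+z_{X'}\}$ according to $X'\in\{0,1\}$; these two values differ by $|z_{X'}|$, so at least one of them has magnitude $\ge |z_{X'}|/2 > \gamma$ (if both were $<|z_{X'}|/2$ the triangle inequality would contradict the size of their gap). Assumption \ref{ass:parentnontrivial} guarantees that, for every configuration of the other parents, each value of $X'$ — in particular the one realizing the large-magnitude outcome — occurs with conditional probability at least $\eta$; here I would note that although the assumption conditions on $\Pa(X)\setminus\{X',X_1\}$, the global coordinate $X_1$ is fixed to $1$ and can simply be absorbed into $c$. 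Integrating over the configurations of the remaining parents then yields $\Pr[|\bV\cdot\bz| > \gamma]\ge\eta$.

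Next I would treat the complementary case in which every random parent satisfies $|z_i|\le 2\gamma$. Then $\|\bz'\|^2 = \sum_{i\ge2}z_i^2 \le 4(d-1)\gamma^2$, so $|z_1| \ge \sqrt{1-4(d-1)\gamma^2}$, and by Cauchy--Schwarz $|\sum_{i\ge2}V_iz_i|\le \|\bV'\|\,\|\bz'\| \le \sqrt{d-1}\cdot 2\gamma\sqrt{d-1} = 2\gamma(d-1)$. Hence, deterministically,
\begin{align*}
|\bV\cdot\bz| \ \ge\ |z_1| - 2\gamma(d-1) \ \ge\ \sqrt{1-4(d-1)\gamma^2} - 2\gamma(d-1).
\end{align*}
Substituting $\gamma^2 = 1/(4D^2-3)$ and using $d-1\le D-1$ together with the identity $4D^2-4D+1=(2D-1)^2$ reduces the right-hand side to $\frac{2D-1}{\sqrt{4D^2-3}} - \frac{2(D-1)}{\sqrt{4D^2-3}} = \gamma$, so the event holds with probability $1 \ge \eta$. (The cancellation is exact, giving $\ge\gamma$; strict inequality holds except on boundary configurations of $\bz$, which is immaterial for the application below.)

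The main obstacle is pinning down the constant: the value $2\gamma$ is the unique threshold at which the probabilistic bound from the freedom argument (first case) and the deterministic worst-case bound (second case) both collapse to $1/\sqrt{4D^2-3}$, and this hinges on the algebraic cancellation $(2D-1)-2(D-1)=1$. A secondary point needing care is the conditioning in the first case, i.e.\ verifying that Assumption \ref{ass:parentnontrivial} supplies the per-configuration lower bound $\eta$ uniformly over the values of the other parents. Once established, the lemma feeds directly into Lemma~\ref{lemma:lecue's} with $\alpha^{1/2}=1/\sqrt{4D^2-3}$ and $\beta=\eta$ to lower bound $\lambda_{\min}(M_{t,X})$.
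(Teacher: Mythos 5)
Your proof is correct and follows essentially the same route as the paper's: the same dichotomy between a deterministic triangle-inequality bound when the global coordinate $z_1$ dominates and a two-point argument via Assumption~\ref{ass:parentnontrivial} when some random coordinate is large, with the same algebraic cancellation producing the constant $1/\sqrt{4D^2-3}$. The only cosmetic differences are that you split cases on $\max_{i\ge 2}|z_i|$ versus $2/\sqrt{4D^2-3}$ while the paper splits on $|z_1|$ versus a threshold (complementary formulations of the same dichotomy), and that your deterministic case yields $\ge$ rather than $>$ at the exact boundary, which is harmless for the downstream application, as you note.
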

	\begin{proof}
	    The proof is similar to \cite{CCB2022} with a modification. For completeness, we provide the full proof below. 
		Let $|\Pa(X)|=d\le D$, $\bz=(z_1,z_2,\cdots,z_d)$.
		Let $\Pa(X)=(X_{i_1}=X_1,X_{i_2},\cdots,X_{i_{d}})$ and $\pa(X)=(x_{i_1}=1,x_{i_2},\cdots,x_{i_{d}})$.
		We denote $d_0= \sqrt{d-1}+\frac{1}{2\sqrt{d-1}}$.
		If $|z_1|\ge \frac{d_0}{\sqrt{d_0^2+1}}$, then by Cauchy-Schwarz inequality, we can deduce that
		\begin{align*}
			|\pa(X)\cdot \bv|&\ge |z_1|-\sum_{i=2}^d|z_i|\\
			&\ge \frac{d_0}{\sqrt{d_0^2+1}}-\sqrt{(d-1)\sum_{i=2}^d |z_i|^2}\\
			&\ge \frac{d_0}{\sqrt{d_0^2+1}}-\sqrt{(d-1)(1-\frac{d_0^2}{d_0^2+1})}\\
			&=\frac{1}{2\sqrt{(d_0^2+1)(d-1)}}\\
			&=\frac{1}{4d^2-3}.
		\end{align*}
		Thus when $|z_1|\ge \frac{d_0}{\sqrt{d_0^2+1}}$, $|\Pa(X)\cdot \bz|>\frac{1}{4d^2-3}\ge \frac{1}{4D^2-3}$.
		If $|z_1|< \frac{d_0}{\sqrt{d_0^2+1}}$, assume $|z_2|=\max_{2\le i\le d}{|z_i|}$, then 
		\begin{equation}\label{boundofz_2}
			|z_2|\ge \frac{1}{\sqrt{d-1}}\sqrt{\sum_{i=2}^d|z_i|^2}\ge \frac{\sqrt{1-(d_0/\sqrt{d_0^2+1})^2}}{\sqrt{d-1}}=\frac{1}{\sqrt{4d^2-3}}.
		\end{equation}
		By Assumption \ref{ass:parentnontrivial}
		\begin{align*}
			&\ \ \ \ \ \ \Pr\{X_{i_1}=1, X_{i_2}=x_{i_2},\cdots,X_{i_d}=x_{i_d}\}\\&=\Pr\{X_{i_2}=x_{i_2}\mid X_{i_1}=1,X_{i_3}=x_{i_3},\cdots,X_{i_d}=x_{i_d}\}\cdot \Pr\{X_{i_1}=1,X_{i_3}=x_{i_3},\cdots,X_{i_d}=x_{i_d}\}\\
			&\ge \eta \cdot \Pr\{X_{i_1}=1,X_{i_3}=x_{i_3},\cdots,X_{i_d}=x_{i_d}\},
		\end{align*}
		we have 
		\begin{align*}
			&\ \ \ \ \ \Pr\left\{|\Pa(X)\cdot \bz|\ge \frac{1}{\sqrt{4D^2-3}}\right\}\\
			&=\sum_{x_{i_3},\cdots,x_{i_d}}\Pr\{X_{i_1}=1,X_{i_2}=1,X_{i_3}=x_{i_3}\cdots,X_{i_d}=x_{i_d}\}\cdot \mathbb{I}\left\{|(1,1,x_{i_3},\cdots,x_{i_d})\cdot (z_1,\cdots,z_d)|\ge \frac{1}{\sqrt{4D^2-3}}\right\}\\&
			+\sum_{x_{i_3},\cdots,x_{i_d}}\Pr\{X_{i_1}=1,X_{i_2}=0,X_{i_3}=x_{i_3}\cdots,X_{i_d}=x_{i_d}\}\cdot \mathbb{I}\left\{|(1,0,x_{i_3},\cdots,x_{i_d})\cdot (z_1,\cdots,z_d)|\ge \frac{1}{\sqrt{4D^2-3}}\right\}\\
			&\ge \eta\sum_{x_{i_3},\cdots,x_{i_d}}\Pr\{X_{i_1}=1,X_{i_3}=x_{i_3}\cdots,X_{i_d}=x_{i_d}\}\cdot \mathbb{I}\left\{|(1,1,x_{i_3},\cdots,x_{i_d})\cdot (z_1,\cdots,z_d)|\ge \frac{1}{\sqrt{4D^2-3}}\right\}\\
			&+\eta \sum_{x_{i_3},\cdots,x_{i_d}}\Pr\{X_{i_1}=1,X_{i_3}=x_{i_3}\cdots,X_{i_d}=x_{i_d}\}\cdot \mathbb{I}\left\{|(1,0,x_{i_3},\cdots,x_{i_d})\cdot (z_1,\cdots,z_d)|\ge \frac{1}{\sqrt{4D^2-3}}\right\}\\
			&\ge \eta \sum_{x_{i_3},\cdots,x_{i_d}}\Pr\{X_{i_1}=1,{i_3}=x_{i_3}\cdots,X_{i_d}=x_{i_d}\}\cdot\\& \left(\mathbb{I}\left\{|(1,1,x_{i_3},\cdots,x_{i_d})\cdot (z_1,\cdots,z_d)|\ge \frac{1}{\sqrt{4D^2-3}}\right\}+\mathbb{I}\left\{|(1,0,x_{i_3},\cdots,x_{i_d})\cdot (z_1,\cdots,z_d)|\ge \frac{1}{\sqrt{4D^2-3}}\right\}\right)\\
			&\ge \eta.
		\end{align*}
		where the last inequality is because \begin{equation*}
			\sum_{x_{i_3},\cdots,x_{i_d}}\Pr\{X_{i_1}=1,{i_3}=x_{i_3}\cdots,X_{i_d}=x_{i_d}\}=1,
		\end{equation*}
		and \begin{equation*}\label{equation_lemma_pg}
			\left(\mathbb{I}\left\{|(1,1,x_{i_3},\cdots,x_{i_d})\cdot (z_1,\cdots,z_d)|\ge \frac{1}{\sqrt{4D^2-3}}\right\}+\mathbb{I}\left\{|(1,0,x_{i_3},\cdots,x_{i_d})\cdot (z_1,\cdots,z_d)|\ge \frac{1}{\sqrt{4D^2-3}}\right\}\right)\ge 1.
		\end{equation*}
		The above equation is because otherwise
		\begin{align*}
			|z_2|=|(1,1,x_{i_3},\cdots,x_{i_d})\cdot (z_1,\cdots,z_d)-(1,0,x_{i_3},\cdots,x_{i_d})\cdot (z_1,\cdots,z_d)| <\frac{2}{\sqrt{4D^2-3}}\le \frac{2}{\sqrt{4d^2-3}},   
		\end{align*}
		which leads to a contradiction of Eq.~\eqref{boundofz_2}. We thus complete the proof of Lemma \ref{lemma_prob_lar}.
	\end{proof}
	Now let $F_2$ be the event
	\begin{align}
		F_2=\left\{\exists X \in \bX\cup\{Y\},\  \lambda_{min}\left(\frac{1}{t}\sum_{i=1}^t \bV_{i,X}\bV_{i,X}^{\top}\right)\le \frac{\eta}{2(4D^2-3)}, \  \forall t\ge \frac{cD}{\eta^2}\log \frac{nt^2}{\delta}\right\}.\nonumber
	\end{align}
	Then 
	\begin{align}
		\Pr\{F_2\}&\le n\sum_{t\ge (cD/\eta^2)\log (nt^2/\delta)}e^{-t\eta^2/c}\nonumber\\
		&\le n\sum_{t\ge (cD/\eta^2)\log (nt^2/\delta)}\frac{\delta}{nt^2}\nonumber\\
		&\le (\frac{\pi^2}{3}-1)\delta\nonumber\\&\le \delta.\nonumber
	\end{align}
	Now from Lemmas  \ref{lemma:bound_estimate}, \ref{lemma:eigenvalue_estimate} and \ref{lemma:lecue's}, for all $a = do(\bS=1)$, with probability $1-2\delta$, for all $t\ge \max\{\frac{cD}{\eta^2}\log \frac{nt^2}{\delta}, \frac{1024(M^{(2)})^2(4D^2-3)D}{\kappa^4\eta}(D^2+\ln\frac{1}{\delta})\}$, we can deduce that 
	\begin{align}
		\lambda_{min}(M_{t,X})\ge \frac{\eta t}{2(4D^2-3)}\nonumber.
	\end{align}
	Then
	\begin{align}
		|\sigma(\hat{\btheta}_t,\bS)-\mu_a| &\le \sum_{X \in P_{\bS,Y}}|\bV_{t,X}^{\top}(\hat{\btheta}_t-\btheta^*)|M^{(1)}\nonumber\\
		&\le \frac{3M^{(1)}}{\kappa}\sqrt{\log (3nt^2/\delta)}\sum_{X \in P_{\bS,Y}}||\bV_{t,X}||_{M_{t,X}^{-1}}\nonumber\\
		&\le  \frac{3M^{(1)}}{\kappa}\sqrt{\log (3nt^2/\delta)} \sum_{X \in P_{\bS,Y}}\frac{\sqrt{D}}{\sqrt{\lambda_{min}(M_{t,X})}}\nonumber\\
		&\le \frac{3\sqrt{2}M^{(1)}}{\kappa}{\sqrt{D(4D^2-3)}}\sqrt{\log(3nt^2/\delta)}\sum_{X' \in P_{\bS,Y}}\frac{1}{\sqrt{\eta t}}\nonumber\\
		&\le \frac{6\sqrt{2}M^{(1)}}{\kappa\sqrt{\eta}}\sqrt{\frac{\ell_a^2D^3}{t}\log(3nt^2/\delta)}\nonumber\\
		& = \frac{6\sqrt{2}M^{(1)}}{\kappa\sqrt{\eta}}\sqrt{\frac{1}{q_at}\log(3nt^2/\delta)}\nonumber\\
		&=\beta_{O}^a(t).\nonumber
	\end{align}
	Now we prove that Algorithm \ref{alg:linear} must terminate after $\lceil T\rceil$ rounds, where $T = \frac{1152(M^{(1)})^2}{\kappa^2\eta}H_{m_{\varepsilon,\Delta}^{(L)}}\log \frac{3nT^2}{\delta}+16H_{m_{\varepsilon,\Delta}^{(L)}}\log \frac{4|\bA|\log(2T)}{\delta}$. In the following proof, we assume $F_1$ and $F_2$ do not happen. Then the true mean will not out of observational confidence bound and interventional confidence bound.

	When $t\ge T_1$ such that $T_1 =  \frac{1152(M^{(1)})^2}{\kappa^2\eta}H_{m_{\varepsilon,\Delta}^{(L)}} \log \frac{3nT_1^2}{\delta}$, 
	for all $a\neq do()$ such that $q_{a}^{(L)}\ge \frac{1}{H_{m_{\varepsilon,\Delta}^{(L)}}\cdot \max\{\Delta_{a},  \varepsilon/2\}^2}$, let $\beta_a(t)=\frac{U_a^t-L_a^t}{2}$, we have 
	\begin{align}
		\beta_a(t) := \frac{U_a^t-L_a^t}{2}\le \beta_{O}^a(\lceil T_1 \rceil)\le \frac{6\sqrt{2}M^{(1)}}{\kappa\sqrt{\eta}}\sqrt{\frac{1}{q_a^{(L)}\lceil T_1\rceil}\log (3nt^2/\delta)}\le \frac{\max\{\Delta_a,\varepsilon/2\}}{4}.\nonumber
	\end{align}

	Then we provide the following lemma:
	
	\begin{lemma}\label{lemma:twosmall}
		    If at round $t$, we have 
		    
		    $$\beta_{a_h^t}(t)\le \frac{\max\{\Delta_{a_h^t},\varepsilon/2\}}{4},\beta_{a_l^t}(t)\le \frac{\max\{\Delta_{a_l^t},\varepsilon/2\}}{4}, $$
        where $a_h^t, a_l^t$ are the actions performed by algorithm at round $t$. then the algorithm will stop at round $t+1$.	
		\end{lemma}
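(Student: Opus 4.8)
The plan is to condition throughout on the good event that every true mean lies inside its confidence interval, i.e.\ $\mu_a\in[L_a^t,U_a^t]$ for all $a\in\bA$; this is exactly what "$F_1,F_2$ do not happen" provides, via Lemmas~\ref{lemma:bound_estimate} and~\ref{lemma:eigenvalue_estimate}. Since $\hat{\mu}_a^t$ is the midpoint of $[L_a^t,U_a^t]$ and $\beta_a(t)=(U_a^t-L_a^t)/2$, this gives $|\hat{\mu}_a^t-\mu_a|\le\beta_a(t)$. Writing $a_h=a_h^t$, $a_l=a_l^t$, the test the algorithm evaluates at the start of round $t+1$ on these same candidates is precisely $U_{a_l}^t\le L_{a_h}^t+\varepsilon$ (the accompanying timing condition $t\ge\max\{\cdots\}$ is already in force whenever the radii are this small, since that very threshold is what validates $\beta_O^a$). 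Using $U_{a_l}^t\le\mu_{a_l}+2\beta_{a_l}(t)$ and $L_{a_h}^t\ge\mu_{a_h}-2\beta_{a_h}(t)$, it suffices to prove the scalar inequality
\[
\mu_{a_l}-\mu_{a_h}+2\beta_{a_l}(t)+2\beta_{a_h}(t)\le\varepsilon .
\]
I would then split on the location of the optimal arm $a^*$.

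If $a_h=a^*$, then $\mu_{a_h}=\mu^*$ and, since $a_l\neq a^*$, the gap satisfies $\Delta_{a_l}=\mu^*-\mu_{a_l}\ge\Delta_{a^*}=\Delta_{a_h}$, so the target reduces to $2\beta_{a_l}+2\beta_{a_h}\le\Delta_{a_l}+\varepsilon$. Bounding $2\beta_{a_l}\le\tfrac12\max\{\Delta_{a_l},\varepsilon/2\}$ and $2\beta_{a_h}\le\tfrac12\max\{\Delta_{a_h},\varepsilon/2\}\le\tfrac12\max\{\Delta_{a_l},\varepsilon/2\}$, the left side is at most $\max\{\Delta_{a_l},\varepsilon/2\}$, which is $\le\Delta_{a_l}+\varepsilon$ in either regime $\Delta_{a_l}\gtrless\varepsilon/2$. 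This case is routine.

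The substantive case is $a_h\neq a^*$, which I would handle using both adaptive selection rules at once. Because $a^*\neq a_h$, $a^*$ is eligible in the $\argmax$ defining $a_l$, so $U_{a_l}^t\ge U_{a^*}^t\ge\mu^*$. When $a_l\neq a^*$, combining this with $U_{a_l}^t\le\mu_{a_l}+2\beta_{a_l}$ forces $\Delta_{a_l}\le2\beta_{a_l}\le\tfrac12\max\{\Delta_{a_l},\varepsilon/2\}$; this is impossible for $\Delta_{a_l}\ge\varepsilon/2$, so $\Delta_{a_l}<\varepsilon/2$ and $\beta_{a_l}\le\varepsilon/8$. Separately, $a_h$ being the empirical best gives $\hat{\mu}_{a_h}\ge\hat{\mu}_{a_l}$, which after substituting the confidence bounds yields $\Delta_{a_h}\le\Delta_{a_l}+\beta_{a_h}+\beta_{a_l}$. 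Feeding in $\Delta_{a_l}\le\varepsilon/4$ and $\beta_{a_h}\le\tfrac14\max\{\Delta_{a_h},\varepsilon/2\}$ and splitting on $\Delta_{a_h}\gtrless\varepsilon/2$ collapses to $\Delta_{a_h}\le\varepsilon/2$ and $\beta_{a_h}\le\varepsilon/8$, whence $\mu_{a_l}-\mu_{a_h}+2\beta_{a_l}+2\beta_{a_h}\le\Delta_{a_h}+2\beta_{a_l}+2\beta_{a_h}\le\varepsilon/2+\varepsilon/4+\varepsilon/4=\varepsilon$. The remaining sub-case $a_l=a^*$ is similar but uses $\mu_{a_l}=\mu^*$: the empirical-best inequality becomes $\Delta_{a_h}\le\beta_{a_h}+\beta_{a_l}$, and assuming $\Delta_{a_h}\ge\varepsilon/2$ together with $\Delta_{a^*}\le\Delta_{a_h}$ bounds both radii by $\Delta_{a_h}/4$, giving the contradiction $\Delta_{a_h}\le\Delta_{a_h}/2$; hence $\Delta_{a_h}<\varepsilon/2$, $\beta_{a_h},\beta_{a_l}\le\varepsilon/8$, and the target is at most $3(\beta_{a_h}+\beta_{a_l})\le\tfrac34\varepsilon\le\varepsilon$.

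The main obstacle is exactly this last case, where neither candidate is optimal: there is no a priori control on $\beta_{a^*}$, so $\Delta_{a_h}$ cannot be bounded directly. The resolution is to exploit the two selection rules jointly---the highest-UCB rule certifies that $\Delta_{a_l}$ is tiny, and the empirical-best rule transfers that smallness onto $\Delta_{a_h}$---after which everything reduces to $\max\{\Delta_a,\varepsilon/2\}$ bookkeeping and the mild boundary case $\Delta_{a_h}=\varepsilon/2$. Once the displayed scalar inequality is established in all cases, the stopping test fires at round $t+1$ and the algorithm returns $a_h^t$.
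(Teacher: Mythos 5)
Your proposal is correct and follows essentially the same route as the paper's proof: condition on the good event that all true means lie in their confidence intervals, split on whether $a^*$ equals $a_h^t$, equals $a_l^t$, or is neither, and in the last case use $U_{a_l^t}^t\ge U_{a^*}^t\ge\mu^*$ to force $\Delta_{a_l^t}<\varepsilon/2$ and then transfer this to $\Delta_{a_h^t}$ via the empirical-best ordering. The only differences are cosmetic (you first reduce to a scalar inequality in the true means and radii, and in the "neither" sub-case you use $\hat{\mu}_{a_l}\ge\mu_{a_l}-\beta_{a_l}$ where the paper uses the slightly sharper $\hat{\mu}_{a_l}\ge\mu_{a^*}-\beta_{a_l}$), and the constants still close in every case.
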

		\begin{proof}
		    From above, if the optimal arm $a^*= a_h^t$,
		    \begin{align}
		        \hat{\mu}_{a_l^{t}}+\beta_{a_l^t}(t)&\le \mu_{a_l^t}+2\beta_{a_l^t}(t)\nonumber\\
		        &\le \mu_{a_l^t}+\frac{\max\{\Delta_{a_l^t},\varepsilon/2\}}{2}\nonumber\\
		        &\le \mu_{a_h^t}-\Delta_{a_l^t}+\frac{\max\{\Delta_{a_l^t},\varepsilon/2\}}{2}\nonumber\\
		        &\le \hat{\mu}_{a_h^t}+\beta_{a^*}(T_{a^*}(t))-\Delta_{a_l^t}+\frac{\max\{\Delta_{a_l^t},\varepsilon/2\}}{2}\nonumber\\
	           &\le \hat{\mu}_{a_h^t}-\beta_{a^*}(T_{a^*}(t))+\frac{\max\{\Delta_{a^*},\varepsilon/2\}+\max\{\Delta_{a_l^t},\varepsilon/2\}}{2}-\Delta_{a_l^t}\nonumber\\
	           &\le \hat{\mu}_{a_h^t}-\beta_{a^*}(T_{a^*}(t))+\frac{\Delta_{a^*}+\varepsilon/2+\Delta_{a_l^t}+\varepsilon/2}{2}-\Delta_{a_l^t}\nonumber\\
	           &\le \hat{\mu}_{a_h^t}-\beta_{a^*}(T_{a^*}(t))+\varepsilon.\nonumber
		    \end{align}
		    If optimal arm $a^*\neq a_h^t$, and the algorithm doesn't stop at round $t+1$, then we prove $a^*\neq a_l^t$. Otherwise, assume $a^*=a_l^t$
		    \begin{align}
		        \hat{\mu}_{a_h^t}^t &\le \mu_{a_h^t}^t+\frac{\max\{\Delta_{a_h^t},\varepsilon/2\}}{4}\\&= \mu_{a_l^t}^t-\Delta_{a_h^t}+\frac{\max\{\Delta_{a_h^t},\varepsilon/2\}}{4}\\&\le
		        \mu_{a_l^t}^t-\frac{3\Delta_{a_h^t}}{4}+\varepsilon/4\\
		        &\le \hat{\mu}_{a_l^t}^t+\frac{\max\{\Delta_{a^*},\varepsilon/2\}}{4}-\frac{3\Delta_{a_h^t}}{4}+\varepsilon/4\\&\le
		        \hat{\mu}_{a_l^t}^t+\varepsilon/2-\frac{\Delta_{a_h^t}}{2}.
		    \end{align}
		    From the definition of $a_{h}^t$, we know $\varepsilon>\Delta_{a_h^t}\ge\Delta_{a^*}, \beta_{a_h^t}(t)\le \varepsilon/4, \beta_{a_l^t}(t)\le \varepsilon/4$.
		    Then $\hat{\mu}_{a_l^t}^t+\beta_{a_l^t}(t)+\beta_{a_h^t}(t)\le \hat{\mu}_{a_l^t}+\varepsilon/2\le \hat{\mu}_{a_h^{t}}^t+\varepsilon,$ which means the algorithm stops at round $t+1$.
		    
		    Now we can assume $a^*\neq a_l^t, a^*\neq a_h^t$. Then 
		    \begin{align}
		          \mu_{a_l^t}+2\beta_{a_l^t}(t)\ge\hat{\mu}_{a_l^t}+\beta_{a_l^t}(t)\ge \hat{\mu}_{a^*}+\beta_{a^*}(T_{a^*}(t))\ge \mu_{a^*}=\mu_{a_l^t}+\Delta_{a_l^t}.
		    \end{align}
		    Thus 
		    \begin{align}
		        \Delta_{a_l^t}\le 2\beta_{a_l^t}(t)\le \frac{\max\{\Delta_{a_l^t},\varepsilon/2\}}{2},
		    \end{align}
		    which leads to $\Delta_{a_l^t}\le \varepsilon/2, \beta_{a_l^t}(t)\le \varepsilon/8$. Since 
		    
		    Also, 
		    \begin{align}
		        \mu_{a_h^t}+\beta_{a_h^t}(t)\ge\hat{\mu}_{a_h^t}\ge \hat{\mu}_{a_l^t}\ge \mu_{a^*}-\beta_{a_l^t}(t)= \mu_{a_h^t}+\Delta_{a_h^t}-\beta_{a_l^t}(t),
		    \end{align}
		    which leads to 
		    \begin{align}
		        \frac{\max\{\Delta_{a_h^t},\varepsilon/2\}}{4}\ge \Delta_{a_h^t}-\varepsilon/8,
		    \end{align}
		    and $\Delta_{a_h^t}\le \varepsilon/2, \beta_{a_h^t}(t)\le \varepsilon/8.$ Hence
		    $\hat{\mu}_{a_l^t}^t+\beta_{a_l^t}(t)+\beta_{a_h^t}(t)\le \hat{\mu}_{a_l^t}+\varepsilon/2\le \hat{\mu}_{a_h^{t}}^t+\varepsilon,$ which means the algorithm stops at round $t+1$.
		\end{proof}

	Denote $N_a(t)$ as the value of variable $N_a$ at round $t$. So by Lemma~\ref{lemma:twosmall}, when $t \ge T_1$, at each round at least one intervention will be performed on some actions $a$ with $\beta_a(t)\ge \frac{\max\{\Delta_a,\varepsilon/2\}}{4}$, which implies that $q_a<\frac{1}{H_{m_{\varepsilon,\Delta}^{(L)}}\cdot \max\{\Delta_a,\varepsilon/2\}^2}$, and $N_a(t) \le \frac{64}{\max\{\Delta_a,\varepsilon/2\}^2}\log \frac{|\bA|\log(2t)}{\delta}$ (Since $\beta_a(t)\le\beta_I^a(t) = 2\sqrt{\frac{1}{t}\log\frac{|\bA|\log(2t)}{\delta}}$). Denote the set of these arms as $M$, so we have 
	
	\begin{align*}
	    T-T_1&\le \sum_{a \in M}\frac{64}{\max\{\Delta_a,\varepsilon/2\}^2}\log \frac{|\bA|\log(2t)}{\delta}\\
	    &\le 64(H_{m_{\varepsilon,\Delta}^{(L)}})\log \frac{|\bA|\log(2t)}{\delta},
	\end{align*}
	Hence 
	\begin{align*}
	T\le \frac{1152(M^{(1)})^2}{\kappa^2\eta}H_{m_{\varepsilon,\Delta}^{(L)}}\log \frac{3nT^2}{\delta}+64H_{m_{\varepsilon,\Delta}^{(L)}}\log \frac{|\bA|\log(2T)}{\delta}.
	\end{align*}
	Now we prove a sample complexity bound for Algorithm \ref{alg:linear} by the lemma above:
	\begin{lemma}
		If $T=NQ\log \frac{3nT^2}{\delta}+64Q\log \frac{|\bA|\log(2T)}{\delta}$ for some constant $N$, then $T=O(Q\log (Q|\bA|/\delta))$.\label{lemma:transfer_format}\end{lemma}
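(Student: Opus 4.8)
The plan is to read the equation as an implicit inequality in $T$ and exploit the fact that, on the right-hand side, $T$ occurs only inside logarithms. Since $\log T = o(T)$, the right-hand side grows only logarithmically in $T$, so any value of $T$ satisfying the relation is forced to be small; the whole proof is a matter of making this self-consistency argument quantitative.

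First I would expand the logarithms to reduce the relation to the canonical form $T \le a + b\log T$. Writing $\log\frac{3nT^2}{\delta} = \log\frac{3n}{\delta} + 2\log T$ and using the crude bound $\log\log(2T) \le \log T$ (valid since $\log(2T)\le T$ for $T\ge 1$, which we may assume, the small-$T$ case being trivial), I obtain
\begin{align}
T \le NQ\log\frac{3n}{\delta} + 64Q\log\frac{|\bA|}{\delta} + (2N+64)\,Q\log T. \nonumber
\end{align}
Setting $a = NQ\log\frac{3n}{\delta} + 64Q\log\frac{|\bA|}{\delta}$ and $b = (2N+64)\,Q$, this is exactly $T \le a + b\log T$.

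Next I would absorb the self-referential term via the elementary concavity (tangent-line) bound $\log T \le \frac{T}{2b} + \log(2b) - 1$, which holds for all $T>0$ because the right-hand side is the tangent to $\log$ at $T=2b$. Multiplying by $b$ gives $b\log T \le \frac{T}{2} + b\log(2b) - b$, so that
\begin{align}
T \le a + \frac{T}{2} + b\log(2b) - b, \nonumber
\end{align}
whence $\tfrac{T}{2}\le a + b\log(2b)$ and $T \le 2a + 2b\log(2b)$. Substituting back the values of $a$ and $b$, and using that $N$ is a constant so $2b\log(2b) = O(Q\log Q)$, yields
\begin{align}
T = O\!\left(Q\log\frac{n}{\delta} + Q\log\frac{|\bA|}{\delta} + Q\log Q\right) = O\!\left(Q\log\frac{Q|\bA|}{\delta}\right). \nonumber
\end{align}

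The main obstacle is handling the self-reference without circularity: the $\log T$ term (and the smaller $\log\log(2T)$ term) must be moved to the left without assuming a bound on $T$ in advance, and the tangent-line inequality is precisely what lets the $b\log T$ contribution be swallowed by $T/2$. (Equivalently, one could bootstrap: a first crude pass shows $T$ is at most polynomial in $Q,|\bA|,1/\delta$, and resubstituting this into the logarithms gives the stated bound.) The only other point requiring care is the final matching to the target form $Q\log(Q|\bA|/\delta)$: the additive $Q\log Q$ arising from $2b\log(2b)$ is already of the stated order, and the $\log(n/\delta)$ factor is dominated by $\log(|\bA|/\delta)$ under the mild condition $\log n = O(\log|\bA|)$ (which holds whenever the action set is at least as rich as the node set, e.g. when $\bA$ contains the null intervention together with interventions on the individual variables), so it is absorbed into the claimed bound.
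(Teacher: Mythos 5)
Your proof is correct, and it takes a genuinely different route from the paper's. The paper treats $f(x)=x-NQ\log\frac{3nx^2}{\delta}-64Q\log\frac{|\bA|\log(2x)}{\delta}$ as an increasing function (for $x$ large enough), plugs in the candidate value $CQ\log\frac{Q|\bA|}{\delta}$, and chases constants until it can choose $C$ with $C-2N\log C-N-192-(64+2N)\log\log 2C\ge 0$, so that $f$ evaluated at the candidate is nonnegative and monotonicity forces $T$ below it. You instead reduce the relation to the canonical form $T\le a+b\log T$ and dispose of the self-referential term with the tangent-line bound $b\log T\le \tfrac{T}{2}+b\log(2b)-b$, which immediately gives the explicit closed form $T\le 2a+2b\log(2b)$. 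Your argument is shorter, avoids the monotonicity verification and the constant-picking at the end, and produces cleaner explicit constants; the paper's "guess and verify" template is slightly more flexible when the target bound is not of the exact form the tangent trick produces (they reuse the same template for Lemma~\ref{lemma:general_transfer}). One point worth noting: your final absorption of the $Q\log\frac{n}{\delta}$ term into $Q\log\frac{Q|\bA|}{\delta}$ requires $\log n=O(\log(Q|\bA|/\delta))$, which you flag explicitly; the paper makes exactly the same assumption silently when it bounds $-NQ\log\frac{3n}{\delta}$ by $-NQ\log\frac{Q|\bA|}{\delta}$ in its first displayed inequality, so this is not a gap relative to the paper but a shared (and, in the intended applications where $|\bA|$ or $Q$ scales with $n$, harmless) hypothesis that you have simply made visible.
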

		\begin{proof}
		
		Since $f(x) = x-NQ\log\frac{3nx^2}{\delta}-64Q\log \frac{|\bA|\log (2x)}{\delta}$ is a increasing function when $T\ge 64Q$, we only need to show that 
		there exists a constant $C\ge 64$ such that $f(CQ\log \frac{Q|\bA|}{\delta})\ge f(T) = 0$. 
			Then 
			\begin{align}
				f(CQ\log \frac{Q|\bA|}{\delta}) &=  CQ\log\frac{Q|\bA|}{\delta}- NQ\log\frac{3n}{\delta}-2NQ\log(\frac{CQ\log(Q|\bA|/\delta)}{\delta})\nonumber\\&-64Q\log \frac{|\bA|\log(2(CQ\log (Q|\bA|/\delta)))}{\delta}\nonumber\\
			    &\ge (C-2N\log C-N)Q\log\frac{Q|\bA|}{\delta} - 64Q\log \frac{|\bA|}{\delta}\nonumber\\&\ \ -(64+2N)Q\log (\log 2C + \log (Q\log Q|\bA|/\delta))\nonumber\\
			    &\ge (C-2N\log C -N-64)Q\log \frac{Q|\bA|}{\delta}  \nonumber\\ &\ \ -(64+2N)Q\log(\log 2CQ) - (64+2N)Q\log(\log Q|\bA|/\delta)\label{eq:logxy_le_logx+y}\\&\ge 
			    (C-2N\log C-N-192-(64+2N)\log\log 2C)Q\log(Q|\bA|/\delta).\label{eq:final_result_ineq}
			\end{align}
			The equation \eqref{eq:logxy_le_logx+y} and \eqref{eq:final_result_ineq} are based on $\log(x+y)\le \log(xy)= \log x + \log y$ when $x,y\ge 2$. 
			Then choose $C$ such that  $C-2N\log C-N-192-(64+2N)\log\log 2C\ge 0$, so we complete the proof.
		\end{proof}
		
		Hence, by Lemma~\ref{lemma:transfer_format} with $N = \frac{1152(M^{(1)})^2}{\kappa^2\eta}$, we know the total sample complexity is 
		\begin{align*}
		    T = O(H_{m_{\varepsilon,\Delta}^{(L)}}\log \frac{H_{m_{\varepsilon,\Delta}^{(L)}}|\bA|}{\delta}).
		\end{align*}
		
		Finally, we prove the correctness of our algorithm. 
	Since the stopping rule is $\hat{\mu}_{a_l^{t}}^{t}+\beta_{a_l^t}(t)\le \hat{\mu}_{a_h^{t}}^{t}-\beta_{a_h^t}(t)+\varepsilon$,
	if $a^*\neq a_h^t$, we have 
	\begin{align}
	    \mu_{a_h^t}+\varepsilon\ge \hat{\mu}_{a_h^{t}}-\beta_{a_h^t}(t)+\varepsilon&\ge \hat{\mu}_{a_l^{t}}+\beta_{a_l^t}(t)
	    \\&\ge \hat{\mu}_{a^*}+\beta_{a^*}(T_{a^*}(t))
	    \\&\ge \mu_{a^*}.
	\end{align}
	Hence either $a^*=a_h^t$ or $a_h^t$ is $\varepsilon$-optimal arm. Thus, we complete the proof.

\end{proof}

\subsection{Proof of Theorem \ref{theorem:general}}
\begin{proof}
    In this proof, we denote $T_{a,\bz}(t), T_{a}(t), N_a(t)$ are the value of $T_{a,\bz}, T_a, N_a$ respectively. 
	For conveniece, we prove {\rm CCPE-General}$(G,\varepsilon,\delta)$ outputs a $\varepsilon$-optimal arm with probability $1-3\delta$. For simplity, we denote $H_{m_{\varepsilon,\Delta}^{(G)}}$ as $H^{(G)}$. 
	In round t, $T_{a,\bz}(t)=\sum_{j=1}^t \mathbb{I}\{X_{j,i}=x, \Pa(X)_j = \bz\}, \hat{q}_{a,\bz} = \frac{T_{a,\bz}(t)}{t}$. By Chernoff bound, at round $t$ such that $q_{a,\bz}(t) \ge \frac{6}{t}\log \frac{6|\bA|I_a}{\delta}$, with probability at most $\delta/3|\bA|I_a$,
	\begin{align}
		|\hat{q}_{a,\bz}-q_{a,\bz}| > \sqrt{\frac{6q_{a,\bz}}{t}\log \frac{6|\bA|I_a}{\delta}}.\nonumber
	\end{align}
	Hence 
	\begin{gather}\label{eq_28}
		\hat{q}_{a}=\min_{\bz}\{\hat{q}_{a,\bz}\}\le \min_{\bz}\{q_{a,\bz}+\sqrt{\frac{6q_{a,\bz}}{t}\log \frac{6|\bA|I_a}{\delta}}\}= q_{a}+\sqrt{\frac{6q_a}{t}\log \frac{6|\bA|I_a}{\delta}}.
	\end{gather}
	When $q_a\ge \frac{3}{t}\log \frac{6|\bA|I_a}{\delta}$,
	$f(x)=x-\sqrt{\frac{6x}{t}\log \frac{6|\bA|I_a}{\delta}}$ is a increasing function.
	\begin{gather}\label{eq_29}
		\hat{q}_{a}\ge \min_{\bz}\{q_{a,\bz}-\sqrt{\frac{6q_{a,\bz}}{t}\log \frac{6|\bA|I_a}{\delta}}\} =  q_{a}-\sqrt{\frac{6q_a}{t}\log \frac{6|\bA|I_a}{\delta}}.
	\end{gather}
	Let $F_1$ be the event that at least one of above inequalities doesn't hold, then $\Pr\{F_1\}\le \delta$. Now let $F_2$ and $F_3$ be the event that during some round t, when $t$ is large the true mean of an arm is out of range $[L_{O,a}^t, U_{O,a}^t]$ and $[L_{I,a}^t, U_{I,a}^t]$  respectively. Following anytime confidence bound,$\Pr\{F_3\}\le \delta$. By   Lemma~\ref{lemma:chernoff-anytimeconfidencebound} and \ref{lemma:general_obs_confidencebound} we prove   $\Pr\{F_2\}\le 3\delta$.
	
	To prove the concentration bound, we need the following lemma, which is a Chernoff-type anytime confidence bound for Bernoulli variables. To our best knowledge, it is the first anytime confidence bound based on Chernoff inequality. 
	
	\begin{lemma}\label{lemma:chernoff-anytimeconfidencebound}
	    For $X_1,X_2,\cdots,X_n$ drawn from Bernoulli distribution with mean $\mu$, denote $\bar{X} = \sum_{i=1}^n X_i$, then for all round $t$ 
	    we have 
	    $$P(\overline{X}-\mu>2\sqrt{\frac{3\mu}{t}\log \frac{20\log(2t)}{\delta}}, \forall t\ge \frac{3}{\mu}\log\frac{20\log(2t)}{\delta})\le 1-\delta.$$
	\end{lemma}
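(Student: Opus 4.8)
The plan is to establish the bound by a geometric \emph{peeling} argument combined with a maximal version of the multiplicative Chernoff inequality obtained through Doob's inequality. Write $S_t=\sum_{i=1}^t X_i$ and $\hat{\mu}_t=S_t/t$, and introduce the confidence width $\phi_t=2\sqrt{\frac{3\mu}{t}\log\frac{20\log(2t)}{\delta}}$ together with the admissibility threshold $t_0(t)=\frac{3}{\mu}\log\frac{20\log(2t)}{\delta}$; the goal is to show $\Pr[\exists\, t\ge t_0(t):\ \hat{\mu}_t-\mu>\phi_t]\le\delta$ (the statement's ``$\le 1-\delta$'' refers to the complementary good event holding with probability at least $1-\delta$). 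The first step is to note that $t\phi_t=2\sqrt{3\mu t\log\frac{20\log(2t)}{\delta}}$ is increasing in $t$, so over any geometric slice $t\in[2^k,2^{k+1})$ the smallest threshold is attained at the left endpoint, $\min_{t} t\phi_t=2^k\phi_{2^k}=:c_k$.

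The crux is a maximal Chernoff inequality. For fixed $\lambda>0$ the process $W_t=\exp(\lambda S_t-t\psi(\lambda))$ with $\psi(\lambda)=\log\E[e^{\lambda X_1}]$ is a nonnegative martingale with $\E[W_t]=1$. Since $\psi(\lambda)\ge\lambda\mu$, the occurrence of $S_t-t\mu\ge c$ for some $t\le N$ forces $W_t\ge e^{\lambda c-N(\psi(\lambda)-\lambda\mu)}$, so Doob's maximal inequality gives $\Pr[\exists\, t\le N:\ S_t-t\mu\ge c]\le\inf_{\lambda} e^{-\lambda c+N(\psi(\lambda)-\lambda\mu)}$, which is exactly the fixed-time Chernoff bound for $S_N-N\mu\ge c$. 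Bounding this infimum by the multiplicative form $\exp(-c^2/(3N\mu))$, valid whenever the multiplicative parameter $c/(N\mu)\le 1$, yields a uniform-over-the-slice bound that is as sharp as the ordinary fixed-time bound at the right endpoint.

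Next I would apply this slice-wise with $N=2^{k+1}$ and $c=c_k=2^k\phi_{2^k}$. A direct substitution gives $c_k^2/(3N\mu)=2\log\frac{20\log(2^{k+1})}{\delta}$, so each slice contributes at most $(\delta/(20\log(2^{k+1})))^2$; crucially, the admissibility constraint $2^k\ge\frac{3}{\mu}\log\frac{20\log(2^{k+1})}{\delta}$ is precisely what guarantees $c_k/(N\mu)\le 1$, i.e.\ it is exactly the hypothesis $t\ge t_0(t)$ that licenses the multiplicative Chernoff form. Since each bad event $\{\hat{\mu}_t-\mu>\phi_t\}$ with $t$ in slice $k$ is contained in $\{\max_{s\le 2^{k+1}}(S_s-s\mu)\ge c_k\}$, summing over the admissible slices, $\sum_{k\ge 0}(\delta/(20(k+1)\log 2))^2=O(\delta^2)\le\delta$, completes the proof.

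The main obstacle I anticipate is the second step: converting the ordinary fixed-time Chernoff bound into a maximal inequality that loses nothing across a slice, and arranging the bookkeeping so that the factor-two growth of the baseline $t\mu$ across $[2^k,2^{k+1})$ does not degrade the exponent below $1$. Here it works out to an exponent of exactly $2$, which is what makes the series $\sum_k 1/(k+1)^2$ summable; tracking this constant, and confirming that the self-referential threshold $t_0(t)$ is treated consistently inside the peeling, is the delicate part, whereas the remaining manipulations are routine.
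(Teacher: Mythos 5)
Your proof is essentially correct but takes a genuinely different peeling route from the paper's. The paper follows the lil'UCB template: it places grid points $u_0=1$, $u_{k+1}=\lceil(1+C)u_k\rceil$, controls $S_{u_k}$ at the grid by a union bound with pointwise Chernoff, separately controls the increments $S_t-S_{u_k}$ for $t$ between grid points via a Doob-based maximal Chernoff inequality, and then recombines the two pieces as $\sqrt{1+C}\,\phi(u_k)+\sqrt{C}\,\phi(u_{k+1})\le(1+\sqrt{C})\phi((1+C)t)$ before tuning $C=0.25$. You instead apply the maximal inequality directly to the whole prefix $\max_{s\le 2^{k+1}}(S_s-s\mu)$ on each dyadic slice, with threshold equal to the smallest required deviation in that slice; this removes the need to split the confidence width between a grid term and an increment term and yields cleaner constants (exponent exactly $2$, so the slice contributions sum to $O(\delta^2)$). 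Both arguments hinge on the same key ingredient, the maximal multiplicative Chernoff bound obtained from Doob's inequality applied to the exponential submartingale, which you derive correctly. The one soft spot is the step where you claim the hypothesis $t\ge\frac{3}{\mu}\log\frac{20\log(2t)}{\delta}$ ``is precisely what guarantees $c_k/(N\mu)\le 1$'': admissibility of some $t$ in $[2^k,2^{k+1})$ does not force admissibility of the left endpoint $2^k$, so in the first admissible slice the multiplicative parameter can be as large as $2$ rather than $1$. This is patchable --- either handle that boundary slice with its actual smallest admissible time as the threshold and use the $\exp(-N\mu\epsilon^2/(2+\epsilon))$ form of Chernoff (the exponent drops from $2$ to $3/2$ there, and the series over slices still sums to at most $\delta$), or start the peeling at the first slice whose left endpoint is admissible and treat the initial partial slice separately. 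You flag exactly this as the delicate point, and the paper's own bookkeeping around the same admissibility condition is comparably informal, so I regard this as a minor repair rather than a failure of the approach.
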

	The main proof is achieved  by modification on part of Lemma 1 in \cite{lilUCB}. For completeness, we provide the full proof here.
	Let $S_t = \sum_{i=1}^t (X_i-\mu)$ and $\phi(x) = \sqrt{3\mu x\log(\frac{\log(x)}{\delta})}$. We define the sequence $\{u_i\}_{i\ge 1}$ as follows: $u_0 = 1, u_{k+1} = \lceil (1+C)u_k\rceil$, where $C$ is a constant. Then for simple union bound and Chernoff inequality, we have 
	\begin{align*}
	    P(\exists k\ge 1: S_{u_k}\ge \sqrt{1+C}\phi(u_k))&\le \sum_{k=1}^{\infty}\exp\left\{-\frac{(1+C)\cdot 3\mu u_k \log(\frac{\log(u_k)}{\delta})}{3\mu\cdot u_k}\right\} \\ &\le \exp\left\{-(1+C)\log\left(\frac{\log(u_k)}{\delta}\right)\right\}\\&
	    \le \sum_{k=1}^{\infty} \left(\frac{\delta}{k\log (1+C)}\right)^{1+C}\\&
	    \le \left(1+\frac{1}{C}\right)\log \left(\frac{\delta}{\log(1+C)}\right)^{1+C}.
	\end{align*}
	
    Then we proof Chernoff-type maximal Inequality:
    \begin{align}\label{eq:chernoff-maximal}
        P(\exists t  \in [n], S_t\ge x)\le \exp\left\{-\frac{x^2}{3\mu n}\right\}.
    \end{align}
    
    First, we know $\{S_t\}$ is a martingale and then $\{e^{S_t}\}$ is a non-negative submartingale. By Doob's submartingale inequality, we have 
    
    \begin{align*}P(\sup_{0\le i\le n} S_i\ge x) = P(\sup_{0\le i\le n}e^{S_i}\ge e^{sx})\le \frac{\mathbb{E}[e^{s\cdot S_n}]}{e^{sx}} &= \frac{(\mu e^{s \cdot (1-\mu)}+(1-\mu)e^{-s\mu})^n}{e^{sx}} \\&= \frac{((1-\mu)+\mu e^s)^n}{e^{sx+sn\mu}}\\
    &\le \frac{e^{n\mu\cdot (e^s-1)}}{e^{sx+sn\mu}}.
    \end{align*}
    Choose $s = \ln(1+\frac{x}{n\mu})$, by the proof of Chernoff bound with $\mu\ge \frac{3}{t}\log\frac{20\log(2t)}{\delta}$, we can easily get 
    
    \begin{align*}
        P(\sup_{0\le i\le n} S_i\ge x)\le \exp\left\{\frac{-x^2}{3\mu n}\right\}.
    \end{align*}
    
    Now with this inequality, we can derive the lemma. 
    \begin{align*}
        &\ \ \ \ \ P(\exists t \in \{u_k+1, \cdots, u_{k+1}-1\}: S_t-S_{u_k}\ge \sqrt{C}\phi(u_{k+1})) \\ &\le P(\exists t\in [u_{k+1}-u_k-1]: S_t\ge \sqrt{C}\phi(u_{k+1}))\\ &\le \exp\left\{-C\cdot \frac{u_{k+1}}{u_{k+1}-u_k-1}\log \left(\frac{\log(u_{k+1})}{\delta}\right)\right\}\\
        &\le \exp\left\{-(1+C)\log \left(\frac{\log(u_{k+1})}{\delta}\right)\right\}\\
        &\le \left(\frac{\delta}{(k+1)\log (1+C)}\right)^{1+C}\\
        &\le \left(\frac{\delta}{\log (1+C)}\right)^{1+C}.
    \end{align*}
    
    Now with probability at least $1-(2+1/C) \left(\frac{\delta}{\log(1+C)}\right)^{1+C}$, for $u_k\le t\le u_{k+1}$, we have 
    \begin{align*}
        S_t &= S_t-S_{u_k}+S_{u_k}\\&\le \sqrt{C}\phi(u_{k+1})+\sqrt{1+C}\phi(u_k)\\
        &\le (1+\sqrt{C})\phi((1+C)t).
    \end{align*}
    
    Now denote $\delta'=(2+1/C)\left(\frac{\delta}{\log(1+C)}\right)^{1+C}$, $\delta = \log(1+C) \left(\frac{C\delta'}{2+C}\right)^{\frac{1}{1+C}}$, we have with probability $1-\delta'$
    \begin{align*}
        P\left(S_t\ge (1+\sqrt{C})\sqrt{3(1+C)\mu t\log\left(\left(\frac{2+C}{C\delta'}\right)^{\frac{1}{1+C}}\cdot \frac{\log(1+C)t}{\log(1+C)}\right)}\right)\le 1-\delta'.
    \end{align*}
    
    Choose $C=0.25$, and note that $\frac{\log(1.25t)}{\log(1.25)}\le \frac{\log(2t)}{\log 2}$, $(2.25/0.25)^{0.8}/\log (2) < 10$ and $1.5*\sqrt{1.25}<2$, we complete the lemma's proof.
	
	\begin{lemma}\label{lemma:T_a,z,l(t)bound}
	    Denote $T_{a,\bz,l}(t)$ is the number of observations  from round 1 to round $t$ in which $\bZ_{a,i}=\bz_i, X_i=x_i ,i\le l-1$. Then we have $T_{a,\bz,l}(t)\ge 2^{k-l+1+|\bZ_{a,l}|}T_{a,\bz}(t)$.
	\end{lemma}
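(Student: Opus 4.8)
The plan is to prove the inequality at the level of expected counts and then transfer it to the realized counts by concentration, organizing the argument as a telescoping over the index $i=l,\dots,k$. Both quantities are sums over the $t$ null-intervention rounds of indicators of \emph{nested} events: the event $\{\bZ_i=\bz_i,X_i=x_i:i\le l-1\}$ defining $T_{a,\bz,l}(t)$ is refined by appending the conditions $\bZ_i=\bz_i,X_i=x_i$ for $i=l,\dots,k$, and after appending all of them one arrives exactly at the event $\{\bS=\bs,\bZ_a=\bz\}$ defining $T_{a,\bz}(t)$. Writing $n_i:=T_{a,\bz,i}(t)$ for the count conditioned on the first $i-1$ indices (so $n_l=T_{a,\bz,l}(t)$ and $n_{k+1}=T_{a,\bz}(t)$), it therefore suffices to control each consecutive ratio $n_i/n_{i+1}$ and multiply: the trivial bound gives $n_i\ge n_{i+1}$, and the content of the lemma is the exponential sharpening of this telescoped chain.

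First I would bound each step distributionally. Passing to the observational probabilities, the step from $n_{i+1}$ back to $n_i$ drops the $|\bZ_{a,i}|+1$ binary conditions $\bZ_i=\bz_i$ and $X_i=x_i$, so the relevant quantity is the conditional probability $P(\bZ_i=\bz_i,X_i=x_i\mid \bZ_{i'}=\bz_{i'},X_{i'}=x_{i'}:i'\le i-1)$. I would factor this conditional probability into single-coordinate binary conditionals, one for $X_i$ and one for each coordinate of $\bZ_i$, and bound each such factor by $1/2$, which yields a per-step gain of $2^{|\bZ_{a,i}|+1}$. To land on the stated exponent $k-l+1+|\bZ_{a,l}|$ rather than the larger $k-l+1+\sum_{i=l}^{k}|\bZ_{a,i}|$, I would only spend the $\bZ$-gain at the first step $i=l$ and use the cruder per-step gain $2$ (from $X_i$ alone) for $i>l$; multiplying $2^{|\bZ_{a,l}|+1}$ by $2^{k-l}$ gives exactly $2^{\,k-l+1+|\bZ_{a,l}|}$.

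Finally I would transfer this from expected to realized counts using the anytime Chernoff concentration of Lemma~\ref{lemma:chernoff-anytimeconfidencebound}, applied to the Bernoulli process underlying each conditional event and combined with a union bound over all actions $a$, configurations $\bz$, and indices $i$; once $t$ is in the regime where these counts concentrate around $t$ times their probabilities, the distributional inequality is preserved up to the constants already absorbed into $\beta_{O}^{a}$. The main obstacle is the per-coordinate step: a generic binary conditional probability need not be at most $1/2$, so bounding each factor requires invoking the balance (non-degeneracy) of the model's binary variables, and, if a uniform $1/2$ is not available, restricting attention to the minimal-probability configuration $\bz$ that defines $q_a^{(G)}$ in Eq.~\eqref{eq:qaGeneral}, for which the conditioned values are the rarer ones; granting this, the telescoping and the concentration step are routine.
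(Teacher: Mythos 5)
Your proposal correctly identifies the structure of the statement --- the event defining $T_{a,\bz,l}(t)$ is obtained from the event defining $T_{a,\bz}(t)$ by dropping the binary conditions on $X_l,\dots,X_k$ and on $\bZ_l$ --- and, more importantly, it correctly isolates the step that cannot be justified: there is no reason that the conditional probability of a binary coordinate taking its prescribed value is at most $1/2$, and your fallback (restricting to the minimizing $\bz$ in Eq.~\eqref{eq:qaGeneral}) does not repair this, since that minimum is taken over joint configurations of $\bZ_a$ only, while the $x_i$ are fixed by the action and the per-coordinate conditionals need not be individually small even at the joint minimizer. So your argument, as you yourself acknowledge, has a genuine gap at its central step, and the additional concentration machinery you invoke would in any case turn the lemma into a high-probability statement rather than the deterministic inequality that is claimed.

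You should know, however, that the paper's own proof is a two-sentence assertion that the count of the weaker event is at least $2^{k-l+1+|\bZ_l|}$ times the count of the stronger one, presented as a deterministic counting fact; implicitly it assumes that each of the $2^{k-l+1+|\bZ_l|}$ configurations of the dropped coordinates occurs at least $T_{a,\bz}(t)$ times among the observations satisfying the weaker condition. As a deterministic claim this is false: if every observation in the sample happens to satisfy the full condition, then $T_{a,\bz,l}(t)=T_{a,\bz}(t)$ and the factor $2^{k-l+1+|\bZ_l|}>1$ is impossible. As a distributional claim it requires exactly the unproven ``each conditional factor is at most $1/2$'' hypothesis you flagged, together with a concentration argument the paper never performs. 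In short, your proposal does not close the gap, but it diagnoses precisely the weakness that the paper's proof glosses over; a correct version of the lemma would have to be stated with high probability under an explicit balance assumption on the observational distribution, along the lines you sketch.
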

	\begin{proof}
	    The proof is straightforward. Since $T_{a,\bz}(t)$ is the number of observations from round 1 to round $t$ in which $\bZ_i = \bz_i, X_i = x_i, 1\le i\le k.$ Hence the number of observations for $\bZ_i = \bz_i, X_i = x_i$ for $i\le l-1$ is at least $2^{|\bZ_l|} \cdot 2^{k-(l-1)}\cdot T_{a,\bz}(t) = 2^{k-l+1+|\bZ_l|}T_{a,\bz}(t)$  
	\end{proof}
	
	\begin{lemma}\label{lemma:general_obs_confidencebound}
		With probability $1-3\delta$, for all round $t$,
		\begin{align}
			|\hat{\mu}_{obs,a}-\mu_a|<8\sqrt{\frac{1}{T_a(t)}\log\frac{20kZ_aI_a|\bA|\log (2t)}{\delta}},\label{eq:keyobs_confidence}
		\end{align}
		where $I_a = 2^{|\bZ_a|}.$
	\end{lemma}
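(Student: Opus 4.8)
The plan is to bound the plug-in estimator \eqref{eq:adm_seq_estimate} against the adjustment formula \eqref{adm_seq_formula} factor by factor, using the \emph{multiplicative} form of the Chernoff bound in Lemma~\ref{lemma:chernoff-anytimeconfidencebound} together with the sample-count inflation of Lemma~\ref{lemma:T_a,z,l(t)bound} to stop the sum over the $I_a=2^{Z_a}$ configurations $\bz$ from inflating the radius. Write $r^*_{a,\bz}=P(Y=1\mid \bS=\bs,\bZ_a=\bz)$ and $p^*_{a,\bz,l}=P(\bZ_l=\bz_l\mid \bZ_i=\bz_i,X_i=x_i,i\le l-1)$, so that $\mu_a=\sum_{\bz}r^*_{a,\bz}\prod_{l=1}^k p^*_{a,\bz,l}$ while $\hat\mu_{O,a}=\sum_{\bz}r_{a,\bz}(t)\prod_{l=1}^k p_{a,\bz,l}(t)$. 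Treating $f_0=r_{a,\bz},f_1=p_{a,\bz,1},\dots,f_k=p_{a,\bz,k}$ (and their starred truths), I would expand each $\bz$-term by the telescoping identity $\prod_l f_l-\prod_l f_l^*=\sum_{m}(\prod_{i<m}f_i^*)(f_m-f_m^*)(\prod_{i>m}f_i)$, so that the total error splits into one $r$-group and $k$ conditional-probability groups, each isolating a single factor's deviation against a product of other factors that lies in $[0,1]$.

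First I would establish the per-factor concentration. For fixed $a,\bz$ the rounds on which $\{\bS=\bs,\bZ_a=\bz\}$ occurs form a subsequence on which $Y$ is i.i.d.\ Bernoulli$(r^*_{a,\bz})$; applying the two-sided version of Lemma~\ref{lemma:chernoff-anytimeconfidencebound} to this subsequence (anytime in its length $T_{a,\bz}(t)$) gives $|r_{a,\bz}(t)-r^*_{a,\bz}|\le 2\sqrt{3r^*_{a,\bz}\Lambda/T_{a,\bz}(t)}$ for all $t$, where $\Lambda=\log\frac{20kZ_aI_a|\bA|\log(2t)}{\delta}$, and similarly $|p_{a,\bz,l}(t)-p^*_{a,\bz,l}|\le 2\sqrt{3p^*_{a,\bz,l}\Lambda/n_{a,\bz,l}(t)}$. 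A union bound over $a\in\bA$, the $\le I_a$ configurations, and the $k$ levels $l$ supplies precisely the factors inside $\Lambda$ and keeps the total failure probability at $3\delta$ (counts below the Lemma~\ref{lemma:chernoff-anytimeconfidencebound} threshold make $\beta^a_O\ge 1$, so the claim is trivial there). Recalling that $T_a(t)$ is the count for the least-observed configuration, so that $T_{a,\bz}(t)\ge T_a(t)$ for every $\bz$, Lemma~\ref{lemma:T_a,z,l(t)bound} then gives $n_{a,\bz,l}(t)\ge 2^{k-l+1+|\bZ_l|}T_{a,\bz}(t)\ge 2^{k-l+1+|\bZ_l|}T_a(t)$.

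For the $r$-group the telescoping leaves $\sum_{\bz}(r_{a,\bz}-r^*_{a,\bz})\widehat P(\bz)$ with $\widehat P(\bz):=\prod_l p_{a,\bz,l}(t)$, which is a genuine probability distribution over $\bz$ because each empirical conditional sums to one over its own coordinate. Hence, using $T_{a,\bz}\ge T_a$ and Jensen's inequality for the concave square root,
\[
\Bigl|\sum_{\bz}(r_{a,\bz}-r^*_{a,\bz})\widehat P(\bz)\Bigr|\le 2\sqrt{\tfrac{3\Lambda}{T_a(t)}}\sum_{\bz}\widehat P(\bz)\sqrt{r^*_{a,\bz}}\le 2\sqrt{\tfrac{3\Lambda}{T_a(t)}}\,\sqrt{\textstyle\sum_{\bz}\widehat P(\bz)r^*_{a,\bz}}\le 2\sqrt{\tfrac{3\Lambda}{T_a(t)}} .
\]
It is the multiplicative $\sqrt{r^*_{a,\bz}}$ supplied by the Chernoff bound that lets Jensen collapse the otherwise-$\sqrt{I_a}$ sum to a constant. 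For the $l$-th conditional group, bounding $r^*\le1$ and summing the empirical conditionals over $\bz_{>l}$ (each sums to one) and the true conditionals over $\bz_{<l}$ (each sums to one) reduces the weight to $\sum_{\bz_l}\sqrt{p^*_{a,\bz,l}}\le 2^{|\bZ_l|/2}$ by Cauchy--Schwarz; combined with $n_{a,\bz,l}\ge 2^{k-l+1+|\bZ_l|}T_a(t)$ the factor $2^{|\bZ_l|/2}$ cancels, leaving the $l$-th group at most $2^{-(k-l+1)/2}\cdot 2\sqrt{3\Lambda/T_a(t)}$. Summing $l=1,\dots,k$ is then a geometric series bounded by a constant, so the $r$-group plus the conditional groups give $|\hat\mu_{O,a}-\mu_a|\le C\sqrt{\Lambda/T_a(t)}$, and $C$ is absorbed into $\alpha_O=8$, matching \eqref{eq:keyobs_confidence}.

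The main obstacle is exactly the two simultaneous blow-ups that a crude analysis would incur: the estimator sums over $I_a=2^{Z_a}$ configurations, so triangle inequality plus an additive (Hoeffding) bound costs a fatal $\sqrt{I_a}$ outside the root; and the product has $k+1$ correlated empirical factors, so a naive split costs a factor $k$. The delicate point is to defeat both at once --- the $\sqrt{I_a}$ must be killed by pairing the multiplicative Chernoff radius with Jensen/Cauchy--Schwarz against the relevant conditional distribution, while the factor $k$ must be killed by exploiting the \emph{extra} $2^{k-l+1}$ in Lemma~\ref{lemma:T_a,z,l(t)bound} to turn the level sum into a convergent geometric series rather than a linear-in-$k$ sum. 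Getting the bookkeeping of which factors are empirical versus true in each telescoping term to line up with these two cancellations is the part that needs the most care.
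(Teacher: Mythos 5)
Your proposal follows essentially the same route as the paper's proof: telescoping the product of empirical factors against the true ones, applying the multiplicative anytime Chernoff bound (Lemma~\ref{lemma:chernoff-anytimeconfidencebound}) to each conditional factor, cancelling the $2^{|\bZ_l|}$ configuration sum via Cauchy--Schwarz against the $\sqrt{p^*}$ in the radius and the sample-count inflation of Lemma~\ref{lemma:T_a,z,l(t)bound}, and turning the level sum into a geometric series via the extra $2^{k-l+1}$ factor. The only cosmetic difference is that you handle the $r$-group by Jensen with the multiplicative radius, whereas the paper uses an additive anytime bound together with the fact that the weights $P_{a,\bz,k}$ sum to one; both are correct.
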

	\begin{proof}
	    If $T_a(t)\le 12\log\frac{20kZ_aI_a|\bA|\log (2t)}{\delta}$, then the right term of \eqref{eq:keyobs_confidence} is greater than 1, and this lemma always holds.
	    In this proof, we denote $\bZ_{a,i}$ as $\bZ_i$ for simplity.
	    By classical anytime confidence bound, we know with probability $1-\delta/(k\cdot I_a)$, for all round $t$ we have

		\begin{align*}
			|r_{a,\bz}(t)-P(Y=1\mid X=x, \bZ_a=\bz)|&\le \sqrt{\frac{4}{T_{a,\bz}(t)}\log\frac{|\bA|\log (2t)}{\delta}}.
		\end{align*}
		First, let $s(t) = 20kZ_aI_a|\bA|\log(2t)$, if $t<\frac{6}{q_a}\log(s(t)/\delta)$, then let $Q=\frac{6}{q_a}\log(s(t)/\delta)$, based on $T_a(t)\ge 12\log\frac{s(t)}{\delta}$, then 
		\begin{align*}
		    P\left(t<\frac{6}{q_a}\log(1/\delta)\right)\le P\left(T_a(Q)\ge 12\log\frac{s(t)}{\delta}\right). 
		\end{align*}
		Thus by Chernoff bound, we know 
		\begin{align*}
		    P\left(T_a(Q)\ge 12\log\frac{s(t)}{\delta}\right)= P\left(\hat{q}_a(Q)\ge 2q_a\right)\le \delta,
		\end{align*}
		where $\hat{q}_a(Q) = \frac{T_a(Q)}{Q}$.
		
        Hence with probability at least $1-\delta$, now we have $t\ge \frac{6}{q_a}\log(s(t)/\delta)$.
		Also, since $\hat{P}(\bZ_i=z_i,X_i=x_i, i\le l-1)=T_{a,\bz,l}(t)/t$, by Chernoff bound, when $t\ge \frac{6}{q_a}\log (s(t)/\delta)$, with probability
		$1-\exp\{-\frac{P(\bZ_i=z_i,X_i=x_i, i\le l-1)\cdot t}{3}\}\ge 1-\delta$, we have
		\begin{align*}
		    \hat{P}(\bZ_i=z_i,X_i=x_i, i\le l-1)&\le 2P(\bZ_i=z_i,X_i=x_i, i\le l-1).
		\end{align*}
	    
		 Now by Lemma~\ref{lemma:chernoff-anytimeconfidencebound} and Lemma~\ref{lemma:T_a,z,l(t)bound}, with probability $1-\delta/(k\cdot I_a)$, since
		 
		 \begin{align*}
		     P(\bZ_l = \bz_l\mid \bZ_i = \bz_i, X_i = x_i, i\le l-1)&\ge \frac{ q_a}{P(\bZ_i=z_i, X_i=x_i, i\le l-1)}\\
		     &\ge \frac{ q_a}{2\hat{P}(\bZ_i=z_i, X_i=x_i, i\le l-1)}\\
		     &\ge \frac{q_at}{2T_{a,\bz,l}(t)}\\
		     &\ge \frac{3}{T_{a,\bz,l}(t)}\log \frac{20kI_a|\bA|\log(2t)}{\delta}.
		 \end{align*}
		 
		 By Lemma~\ref{lemma:chernoff-anytimeconfidencebound} we have 
		 \begin{align*}
		     &\ \ \ \ |p_{a,\bz,l}(t)-P(\bZ_l = \bz_l\mid \bZ_i = \bz_i, X_i = x_i, i\le l-1)|\\
		     &\le 2\sqrt{\frac{3P(\bZ_l = \bz_l\mid \bZ_i = \bz_i, X_i = x_i, i\le l-1)}{T_{a,\bz,l}(t)}\log \frac{20kI_a|\bA|\log(2t)}{\delta}}\\
		     &\le 2\sqrt{\frac{3P(\bZ_l = \bz_l\mid \bZ_i = \bz_i, X_i = x_i, i\le l-1)}{2^{k-l+|\bZ_l|+1}T_{a,\bz}(t)}\log \frac{20kZ_aI_a|\bA|\log(2t)}{\delta}}.
		 \end{align*}
		 Thus by union bound, with probability $1-\delta$, we have 
		\begin{align}
		    &\ \ \ \ \sum_{\bz_l}(p_{a,\bz,l}(t)-P(\bZ_l = \bz_k\mid \bZ_i = \bz_i, X_i = x_i, i\le l-1)) \nonumber \\ &\le 
		  \sum_{\bz:p_{a,\bz,l}(t)\ge P(\bZ_l = \bz_l\mid \bZ_i = \bz_i, X_i = x_i, i\le l-1)}(p_{a,\bz,l}(t)-P(\bZ_l = \bz_l\mid \bZ_i = \bz_i, X_i = x_i, i\le l-1))\nonumber  \\& = \frac{1}{2}\sum_{\bz}|p_{a,\bz,l}(t)-P(\bZ_l = \bz_l\mid \bZ_i = \bz_i, X_i = x_i, i\le l-1)|\label{eq:halfprobability} \\&\le \sum_{\bz}\sqrt{\frac{3P(\bZ_l = \bz_l\mid \bZ_i = \bz_i, X_i = x_i, i\le l-1)}{2^{k-l+|\bZ_l|+1}T_{a,\bz}(t)}\log \frac{20kZ_aI_a|\bA|\log(2t)}{\delta}}.\nonumber 
		\end{align}
		The equation \eqref{eq:halfprobability} is because $\sum_{\bz_k}p_{a,\bz,k}(t) = \sum_{\bz}P(\bZ_k=\bz_k\mid \bZ_i=\bz_i, X_i=x_i, i\le k-1)= 1$.
		Now we denote $$\hat{P}_{a,\bz,l}(t) =p_{a,\bz,1}(t)\cdots p_{a,\bz,k}(t),$$
		
		$$P_{a,\bz,l} = P(\bZ_1=\bz_1)\cdots P(\bZ_l = \bz_l\mid \bZ_i = \bz_i, X_i= x_i, i\le l-1).$$
		
		Hence we get
		\begin{align}
			&\ \ \ \ \ \hat{\mu}_{obs,a}\\&= \sum_{\bz}r_{a,\bz}(t)\cdot \hat{P}_{a,\bz,k}(t)\nonumber\\
			&\le \sum_{\bz}r_{a,\bz}(t)\cdot \hat{P}_{a,\bz,k-1}(t)\cdot P_t(\bZ_k=\bz_k\mid \bZ_i=\bz_i, X_{i}=x_i, i\le k-1)\nonumber\\&\ \ \ \ \ +\sum_{\bz}r_{a,\bz}(t)\hat{P}_{a,\bz,k-1}(t)\sqrt{\frac{3P(\bZ_k = \bz_k\mid \bZ_i=\bz_i, X_{i}=x_i, i\le k-1)}{2^{|\bZ_k|+1}T_{a,\bz}(t)}\log \frac{20kZ_aI_a|\bA|\log(2t)}{\delta}}\nonumber\\
			& \le \sum_{\bz}r_{a,\bz}(t)\cdot \hat{P}_{a,\bz,k-1}(t)\cdot P_t(\bZ_k=\bz_k\mid \bZ_i=\bz_i, X_{i}=x_i, i\le k-1)\nonumber\\&\ \ \ \ +\sum_{\bz}\hat{P}_{a,\bz,k-1}(t)\cdot \sqrt{\frac{3P(\bZ_k = \bz_k\mid \bZ_i=\bz_i, X_{i}=x_i, i\le k-1))}{2^{|\bZ_k|+1}T_{a,\bz}(t)}\log \frac{20kZ_aI_a|\bA|\log(2t)}{\delta}}\nonumber\\
			&\le \sum_{\bz}r_{a,\bz}(t)\cdot \hat{P}_{a,\bz,k-1}(t)\cdot P_t(\bZ_k=\bz_k\mid \bZ_i=\bz_i, X_{i}=x_i, i\le k-1)\nonumber\\&\ \ \ \ +\sum_{\bz_k}\cdot \sqrt{\frac{3P(\bZ_k = \bz_k\mid \bZ_i=\bz_i, X_{i}=x_i, i\le k-1))}{2^{|\bZ_k|+1}T_{a,\bz}(t)}\log \frac{20kZ_aI_a|\bA|\log(2t)}{\delta}}\nonumber\\
			&\le \sum_{\bz}r_{a,\bz}(t) \hat{P}_{a,\bz,k-1}(t)\cdot P_t(\bZ_k=\bz_k\mid \bZ_i=\bz_i, X_{i}=x_i, i\le k-1)\nonumber\\
			&\ \ \ \ + \sqrt{\frac{3\cdot 2^{|\bZ_k|}}{2\cdot 2^{|\bZ_k|}T_{a,\bz}(t)}\log \frac{20kZ_aI_a|\bA|\log(2t)}{\delta}}  \ \ \ \ \ (\mbox{Cauchy-Schwarz Inequality})\nonumber\\
			&\le \sum_{\bz}r_{a,\bz}(t)\cdot \hat{P}_{a,\bz,k-1}(t)\cdot P_t(\bZ_k=\bz_k\mid \bZ_i=\bz_i, X_{i}=x_i, i\le k-1)\\&\ \ \ \ \ + \sqrt{\frac{3}{2\cdot T_{a,\bz}(t)}\log \frac{20kZ_aI_a|\bA|\log(2t)}{\delta}}\nonumber\\
			&\le \cdots \nonumber\\
			&\le \sum_{\bz}r_{a,\bz}(t)P_{a,\bz,k}+\sum_{i=1}^k \sqrt{\frac{3}{2^{i}T_{a,\bz}(t)}\log \frac{20kZ_aI_a|\bA|\log(2t)}{\delta}}\nonumber\\
			&\le \mu_a+\frac{\sqrt{2}}{\sqrt{2}-1}\sqrt{\frac{3}{T_{a,\bz}(t)}\log \frac{20kZ_aI_a|\bA|\log(2t)}{\delta}} + \sqrt{\frac{4}{T_{a,\bz}(t)}\log\frac{\log(2t)}{\delta}}.\nonumber\\
			&\le \mu_a+ 8\sqrt{\frac{1}{T_a(t)}\log\frac{20kZ_aI_a|\bA|\log(2t)}{\delta}}.\nonumber
		\end{align}
		The above inequality holds for probability $1-3\delta$.
	\end{proof}
	
	Thus by union bound, $\Pr\{F_2\}\le 3\delta$. In later proof, we will always assume that $F_1,F_2$ and $F_3$ don't happen. In this case, true mean $\mu_a \in [L_a^t,U_a^t]$ for all rounds $t$.
	Denote $T_1 = 2048H^{(G)}\log (20k|\bA|H^{(G)}\log(2T_1)/\delta)$, then when $t\ge T_1$,
	for all arm $a$ such that $q_a\ge \frac{1}{H^{(G)}\cdot \max\{\Delta_a,\varepsilon/2\}^2}$, note that $ I_a\le \frac{1}{q_a}\le H^{(G)}$, we have 
	\begin{align}
		q_a\ge \frac{1}{H_{m_{\varepsilon, \Delta}^{(G)}}\cdot \max\{\Delta_a,\varepsilon/2\}^2}\ge \frac{3}{t}\log \frac{6|\bA|I_a}{\delta}.\nonumber
	\end{align}
	
	Since $F_1$ doesn't happen, by (\ref{eq_29}),  $|\hat{q}_a-q_a|\le \sqrt{\frac{6q_a}{t}\log \frac{6|\bA|I_a}{\delta}}$ and  
	\begin{align}
		\sqrt{\frac{6q_a}{t}\log \frac{6|\bA|I_a}{\delta}}\le \frac{q_a}{2},\nonumber
	\end{align}
	we have $\hat{q}_a\ge q_a-\sqrt{\frac{6q_a}{t}\log \frac{6|\bA|I_a}{\delta}}\ge \frac{1}{2H^{(G)}\cdot \max\{\Delta_a,\varepsilon/2\}^2}$.

	Hence 
	\begin{align}
		T_a(t)= \hat{q}_a\cdot t\ge \frac{1024}{\max\{\Delta_a,\varepsilon/2\}^2}\log \frac{20kZ_aI_a|\bA|\log(2t)}{\delta}.\nonumber
	\end{align}
	Thus 
	\begin{align}
		\beta_{O}(T_a(t)) &= \sqrt{\frac{64}{T_a(t)}\log \frac{20kZ_aI_a|\bA|\log(2T_a(t))}{\delta}}\nonumber\\
		&\le \frac{\max\{\Delta_a,\varepsilon/2\}}{4}, \nonumber
	\end{align}
	and by Lemma~\ref{lemma:general_obs_confidencebound}, we know the estimation lies in the confidence interval.
	Now we prove the main theorem. The following lemma provides the upper bound of sample complexity
	\begin{lemma}\label{lemma:generalmain}
	    With probability 1-5$\delta$, the algorithm \ref{alg:algorithm3} takes at most $\lceil T\rceil$ rounds such that $T\ge 2112H^{(G)}\log \frac{20H^{(G)}|\bA|\log(2t)}{\delta}$.
	\end{lemma}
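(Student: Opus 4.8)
The plan is to mirror the sample-complexity argument used in the proof of Theorem~\ref{theorem:linear} (the stopping lemma \ref{lemma:twosmall} and the counting that follows it), adapting it to the merged observational/interventional estimates of CCPE-General. Throughout I condition on the complement of $F_1\cup F_2\cup F_3$, which holds with probability at least $1-5\delta$ and guarantees $\mu_a\in[L_a^t,U_a^t]$ for every $a$ and every round $t$, and that $\hat q_a$ concentrates around $q_a$. The key fact already established before the lemma is this: for $t\ge T_1$ with $T_1=2048H^{(G)}\log(20k|\bA|H^{(G)}\log(2T_1)/\delta)$, every \emph{easy} action $a$ (one with $q_a\ge 1/(H^{(G)}\max\{\Delta_a,\varepsilon/2\}^2)$) already satisfies $\beta_O^a(T_a(t))\le \max\{\Delta_a,\varepsilon/2\}/4$; since the merged radius obeys $\beta_a(t)=(U_a^t-L_a^t)/2\le \beta_O^a(T_a(t))$, such $a$ has $\beta_a(t)\le\max\{\Delta_a,\varepsilon/2\}/4$ purely from observation, at no interventional cost.

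Next I note that Lemma~\ref{lemma:twosmall} applies verbatim to CCPE-General, since its proof uses only the confidence containment of the two candidates $a_h^t,a_l^t$ and the stopping test $U_{a_l^t}\le L_{a_h^t}+\varepsilon$, both identical here. Consequently, for $t\ge T_1$ any non-terminating round must have at least one of its two played candidates with $\beta$ exceeding $\max\{\Delta,\varepsilon/2\}/4$, and by the previous paragraph that candidate must be a \emph{hard} action $a$ (i.e. $q_a<1/(H^{(G)}\max\{\Delta_a,\varepsilon/2\}^2)$). Let $M$ be the set of hard actions. For each $a\in M$, since $\beta_a(t)\le\beta_I(N_a(t))$ with $\beta_I(s)=2\sqrt{(1/s)\log(|\bA|\log(2s)/\delta)}$, the arm can be played as a large-$\beta$ candidate only while $N_a(t)\le \frac{64}{\max\{\Delta_a,\varepsilon/2\}^2}\log\frac{|\bA|\log(2t)}{\delta}$.

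Summing these per-arm bounds over $a\in M$ finishes the count. By Definition~\ref{def:gapm} with $\tau=m_{\varepsilon,\Delta}^{(G)}$ we have $|M|\le m_{\varepsilon,\Delta}^{(G)}$, and $M$ consists of the arms with the smallest values of $q_a\max\{\Delta_a,\varepsilon/2\}^2$, so $\sum_{a\in M}1/\max\{\Delta_a,\varepsilon/2\}^2\le H_{m_{\varepsilon,\Delta}^{(G)}}=H^{(G)}$. Since each non-terminating round after $T_1$ charges at least one intervention to some $a\in M$, the number of such rounds is at most $64H^{(G)}\log\frac{|\bA|\log(2T)}{\delta}$, whence
\begin{align}
    T\le T_1+64H^{(G)}\log\frac{|\bA|\log(2T)}{\delta}\le 2112\,H^{(G)}\log\frac{20H^{(G)}|\bA|\log(2T)}{\delta},\nonumber
\end{align}
where I have used $I_a\le 1/q_a\le H^{(G)}$ (and absorbed the $k,Z_a$ factors into the logarithm) to merge the two logarithms and add $2048+64=2112$. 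This is the implicit bound asserted by the lemma, to be resolved later (as in the BGLM case via Lemma~\ref{lemma:transfer_format}) into the explicit $O(H^{(G)}\log(H^{(G)}|\bA|/\delta))$ of Theorem~\ref{theorem:general}.

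The main obstacle is not the counting—which is a direct LUCB interleaving argument—but the justification that passive observation alone suffices for the easy arms, namely converting the per-configuration count $T_a(t)$ into a uniform-in-$t$ guarantee. This is exactly what the event $F_1$ and Lemma~\ref{lemma:general_obs_confidencebound} supply: they turn the observation rate $q_a\cdot t$ into $T_a(t)\ge \frac{1024}{\max\{\Delta_a,\varepsilon/2\}^2}\log(\dots)$ for easy arms once $t\ge T_1$, and control the compounded error of the product estimator in Eq.\eqref{eq:adm_seq_estimate} across all $2^{Z_a}$ values of $\bz$. With that (already-established) machinery in place, the wrap-up reduces cleanly to the same interleaving and charging argument as in the BGLM proof, with the only bookkeeping subtlety being the correct indexing of candidates between rounds $t-1$ and $t$, which does not affect the order of the bound.
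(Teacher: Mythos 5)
Your proposal is correct and follows essentially the same route as the paper's own proof: condition on the complement of $F_1\cup F_2\cup F_3$, use the pre-established fact that for $t\ge T_1=2048H^{(G)}\log(20k|\bA|H^{(G)}\log(2T_1)/\delta)$ every easy arm already has merged radius at most $\max\{\Delta_a,\varepsilon/2\}/4$ from observation alone, invoke Lemma~\ref{lemma:twosmall} to charge each non-terminating round to a hard arm in $M$, and sum the per-arm play counts $N_a(t)\le \frac{64}{\max\{\Delta_a,\varepsilon/2\}^2}\log\frac{|\bA|\log(2t)}{\delta}$ over $|M|\le m_{\varepsilon,\Delta}^{(G)}$ to obtain $T\le T_1+64H^{(G)}\log(\cdot)\le 2112H^{(G)}\log(\cdot)$. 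The only differences are cosmetic (you spell out why $\sum_{a\in M}1/\max\{\Delta_a,\varepsilon/2\}^2\le H^{(G)}$, which the paper leaves implicit), so no gap remains.
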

	\begin{proof}
	    In the proof we assume $F_1, F_2$ and $F_3$ don't happen. The probability for these events are $1-5\delta$. Assume when $t=\lceil T\rceil$, the algorithm don't terminate at $t$ rounds.
	    
	    Then since $f(x)=\frac{x}{\log (20k|\bA|H^{(G)} \log(2t)/\delta)}$ is a increasing function, $t\ge 2048H^{(G)}\log \frac{20H^{(G)}k|\bA|\log(2t)}{\delta}$ for any $t \in [T_1,T]$. Then from above, for arm $a$ such that $q_a \ge \frac{1}{H^{(G)}\max\{\Delta_a,\varepsilon/2\}^2}$, we have $\beta_a(t)\le \beta_O(T_a(t))\le \frac{\max\{\Delta_a,\varepsilon/2\}}{4}$.
	    Then by Lemma \ref{lemma:twosmall}, at each round at least one intervention will be performed on some arm $a$ with $\beta_I(N_a(t))\ge \beta_a(t)\ge \frac{\max\{\Delta_a,\varepsilon/2\}}{4}$, which implies that $N_a(t)\le \frac{64}{\max\{\Delta_a,\varepsilon/2\}^2}\log \frac{H^{(G)}\log(2t)}{\delta}$. Since these arms are $M$, we have $|M|\le m_{\varepsilon,\Delta}$ and 
	    \begin{align*}
	        T-T_1&\le \sum_{a\in S} \frac{64}{\max\{\Delta_a,\varepsilon/2\}^2}\log \left(\frac{20H^{(G)}k|\bA|\log(2T)}{\delta}\right)\\
	        &\le 64H^{(G)}\log\left(\frac{20H^{(G)}k|\bA|\log(2T)}{\delta}\right).
	    \end{align*}
	    Hence 
	    \begin{align*}
	        T\le T_1+64H^{(G)}\log\left(\frac{20H^{(G)}k|\bA|\log(2T)}{\delta}\right)\le 2112H^{(G)}\log\left(\frac{20H^{(G)}k|\bA|\log(2T)}{\delta}\right),
	    \end{align*}
	    which completes the proof of Lemma. \ref{lemma:generalmain}. \end{proof} 
	    
	    \begin{lemma}\label{lemma:general_transfer}
	        Suppose $T = NQ\log(\frac{20k|\bA|Q\log(2T)}{\delta})$, then $T = O(Q\log(\frac{|\bA|Q}{\delta}))$.
	    \end{lemma}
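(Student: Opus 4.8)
The plan is to mirror the argument used for Lemma~\ref{lemma:transfer_format}. Define
\[
g(x) = x - NQ\log\left(\frac{20k|\bA|Q\log(2x)}{\delta}\right),
\]
so that the hypothesis says exactly $g(T)=0$, i.e.\ $T$ is a fixed point of the defining equation. First I would verify that $g$ is strictly increasing once $x$ is large enough. Differentiating, only the inner $\log\log(2x)$ depends on $x$, so $g'(x) = 1 - \frac{NQ}{x\log(2x)}$, which is positive as soon as $x\log(2x) > NQ$. Hence on the relevant range $g$ is increasing, and to bound the root $T$ from above it suffices to exhibit a single point $x_0$ of the claimed order with $g(x_0)\ge 0$: monotonicity then forces $T\le x_0$.

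Next I would substitute $x_0 = CQ\log\frac{20k|\bA|Q}{\delta}$ for a constant $C$ to be fixed later, and expand
\[
g(x_0) = (C-N)Q\log\frac{20k|\bA|Q}{\delta} - NQ\log\log(2x_0),
\]
using $\log\bigl(\frac{20k|\bA|Q\log(2x_0)}{\delta}\bigr) = \log\frac{20k|\bA|Q}{\delta} + \log\log(2x_0)$. The only real work is to control the doubly-logarithmic term $\log\log(2x_0)$, in which $x_0$ itself contains a logarithm of the target. As in the proof of Lemma~\ref{lemma:transfer_format}, I would apply the elementary inequality $\log(a+b)\le \log(ab) = \log a + \log b$ for $a,b\ge 2$ to peel $\log\log(2x_0) = \log\log\!\bigl(2CQ\log\tfrac{20k|\bA|Q}{\delta}\bigr)$ into a sum of terms each growing far more slowly than $\log\frac{20k|\bA|Q}{\delta}$. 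Choosing $C$ sufficiently large in terms of $N$ then makes $(C-N)Q\log\frac{20k|\bA|Q}{\delta}$ dominate $NQ\log\log(2x_0)$, yielding $g(x_0)\ge 0$ and hence $T\le x_0 = O\!\left(Q\log\frac{k|\bA|Q}{\delta}\right)$.

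Finally, since $k\le n$ is bounded by the graph size and $\log k$ is of strictly lower order than $\log\frac{|\bA|Q}{\delta}$, the factor $k$ inside the logarithm is absorbed into the hidden constant, giving the stated bound $T = O\!\left(Q\log\frac{|\bA|Q}{\delta}\right)$.

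The main obstacle is the self-referential appearance of $T$ (and therefore of $x_0$) inside the logarithm, which rules out solving the equation directly. This is resolved entirely by the monotonicity-plus-substitution scheme together with the $\log(a+b)\le \log(ab)$ trick, exactly as in Lemma~\ref{lemma:transfer_format}; no genuinely new estimate is needed beyond bookkeeping the slowly growing iterated logarithms.
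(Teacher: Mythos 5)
Your proof is correct and follows essentially the same route as the paper's: treat $T$ as the fixed point of an increasing function, substitute a candidate of the form $CQ\log\frac{Q|\bA|}{\delta}$ (up to the harmless $k$ factor, which the paper absorbs via $\log(20k|\bA|Q\cdots)\le 2\log(20|\bA|Q\cdots)$ rather than by carrying $k$ inside $x_0$), peel the iterated logarithms with $\log(a+b)\le\log a+\log b$, and choose $C$ large enough that the linear term dominates. No substantive difference.
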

	    \begin{proof}
	        Similar to Lemma~\ref{lemma:transfer_format}, for $f(x)=x-NQ\log(\frac{20k|\bA|Q\log(2T)}{\delta})$we only need to show that there exists a constant $C$ such that $f(CQ\log\frac{Q|\bA|}{\delta})\ge f(T)=0$.
	        
	        We have 
	        \begin{align*}
	            f(CQ\log\frac{Q|\bA|}{\delta}) &= CQ\log\frac{Q|\bA|}{\delta}-NQ\log (\frac{20k|\bA|Q\log(2CQ\log\frac{Q|\bA|}{\delta})}{\delta})\\
	            &\le CQ\log\frac{Q|\bA|}{\delta}-2NQ\log (\frac{20|\bA|Q\log(2CQ\log\frac{Q|\bA|}{\delta})}{\delta})\\
	            &= (C-2N)Q\log\frac{Q|\bA|}{\delta}-\log(\log 40CQ\log\frac{Q|\bA|}{\delta}))\\
	            &\ge(C-2N)Q\log\frac{Q|\bA|}{\delta}-\log (\log40C) \cdot \log (Q\log\frac{Q|\bA|}{\delta})\\
	            &\ge (C-2N-\log\log40C)Q\log\frac{Q|\bA|}{\delta}.
	        \end{align*}
	        Thus we choose $C$ such that $C-2N-\log\log40C\ge 0$, then we complete the Lemma~\ref{lemma:general_transfer}.
	    \end{proof}
	    By the Lemma~\ref{lemma:general_transfer} above, with probability $1-5\delta$, we have
	    \begin{align*}
	        T=O\left(H^{(G)}\log \left(\frac{|\bA|H^{(G)}}{\delta}\right)\right).
	    \end{align*}
	
	The correctness has been proved in Section \ref{proofthm1}, so we complete the proof of Theorem \ref{theorem:general}.
\end{proof}

\subsection{Proof of Theorem \ref{theorem:lowerbound}}
\begin{proof}
We consider a bandit instance $\xi$ with $\bq$ and probability distribution $P(X_1,X_2,\cdots,X_n,Y)$. Recall $\min_{\bx \in \{0,1\}^n} P(Y=1\mid \bX=\bx)=p_{min}, \max_{\bx \in \{0,1\}^n} P(Y=1\mid \bX=\bx)=p_{max}$ and $p_{max}+2\Delta_{2n+1}+2\varepsilon\le  1$.   For arm $a \in \bA$ with $q_{a}\le \frac{1}{H_{m_{\varepsilon,\Delta}-1}\cdot \max\{\Delta_a,\varepsilon/2\}^2}$, we denote the set of these arms are $M$.  By definition of $m_{\varepsilon,\Delta}$, we know $|M|\ge m_{\varepsilon,\Delta}$. Then for $a=do(X_i=x)\neq \argmin_{a' \in M}\Delta_a'$ (if optimal arm $a^* \in M, a\neq a^*$), we construct bandit instance $\xi'_a$ with probability distribution

$$P'(Y\mid X_1,\cdots,X_n)=\left\{
\begin{aligned}
&P(Y\mid X_1,\cdots,X_n) & X_i\neq x\\
&P(Y\mid X_1,\cdots,X_n)+2(\Delta_a+\varepsilon)& X_i= x\\
\end{aligned}
\right.
$$
Thus for arm $a$ with $q_{a}\le \frac{1}{H_{m_{\varepsilon,\Delta}}^{(P)}\cdot \max\{\Delta_a,\varepsilon/2\}^2}$. Denote $a_{min} = \argmin_{a' \in S}\max\{\Delta_{a'}, \varepsilon/2\}$, (We break the tie arbitrarily),for $a\neq a_{min}$,   $q_{a}\le \frac{1}{2}$.
\begin{align*}
    P'(Y\mid do(X_i=x))&=P'(Y\mid X_i=x)\\&=\sum_{\bx_{-i}}P'(Y\mid X_i=x, \bX_{-i}=\bx_{-i})P'(\bX_{-i}=\bx_{-i})\\&=\sum_{\bx_{-i}}(P(Y\mid X_i=x, \bX_{-i}=\bx_{-i})+2(\Delta_a+\varepsilon))P'(\bX_{-i}=\bx_{-i})\\&=\sum_{\bx_{-i}}(P(Y\mid X_i=x, \bX_{-i}=\bx_{-i})+2(\Delta_a+\varepsilon))P(\bX_{-i}=\bx_{-i})\\&=P(Y\mid X_i=x)+2(\Delta_a+\varepsilon)\\&=P(Y\mid do(X_i=x))+2(\Delta_a+\varepsilon),
\end{align*}
Now we consider other arms $a'=do(X_j=x') \in \bA$, we have 
\begin{align*}
    P'(Y\mid do(X_j=x'))&=P'(Y\mid X_j=x')\\
    &=\sum_{\bx_{-j}}P'(Y\mid X_j=x', \bX_{-j}=\bx_{-j})P'(\bX_{-j}=\bx_{-j})\\
    &=\sum_{\bx_{-j}}P(Y\mid X_j=x', \bX_{-j}=\bx_{-j})P(\bX_{-j}=\bx_{-j})\\&\ \ \ +2(\Delta_a+\varepsilon)\sum_{\bx_{-j,-i}}P(\bX_{-j,-i}=\bx_{-j,-i},X_i=x)\\
    &=P(Y\mid X_j=x')+2(\Delta_a+\varepsilon)\cdot P(X_i=x)\cdot \sum_{\bx_{-j,-i}}P(\bX_{-j,-i}=\bx_{-j,-i})\\
    &=P(Y\mid X_j=x')+2(\Delta_a+\varepsilon)\cdot q_a\cdot \sum_{\bx_{-j,-i}}P(\bX_{-j,-i}=\bx_{-j,-i})\\
    &\le P(Y\mid X_j=x')+(\Delta_a+\varepsilon).\\
\end{align*}
Also, if $a'=do()$, we have 
\begin{align*}
    P'(Y\mid a')=P'(Y)&=\sum_{\bx}P'(Y\mid \bX=\bx)P'(\bX=\bx)\\
    &=\sum_{\bx}P(Y\mid \bX=\bx)P(\bX=\bx)+2(\Delta_a+\varepsilon)\cdot \sum_{\bx_{-i}}P(\bX_{-i}=\bx_{-i}, X_i=x)\\
    &=P(Y)+2(\Delta_a+\varepsilon)\cdot P(X_i=x)\cdot \sum_{\bx_{-i}}P(\bX_{-i}=\bx_{-i})\\
    &\le P(Y)+(\Delta_a+\varepsilon).
\end{align*}
Thus for all $a' \in A$, we have 
\begin{align*}
P'(Y\mid a')&\le P(Y\mid a')+\Delta_a+\varepsilon\\&\le (P(Y\mid a)+\Delta_a+\varepsilon)+\Delta_a+\varepsilon\\&=P(Y\mid a)+2(\Delta_a+\varepsilon)\\&=P'(Y\mid a),
\end{align*}
which means that $a$ is the best arm in bandit environment $\xi_a'$.
Then denote the probability measure for $\xi_a'$ and $\xi$ as $\Pr_a$ and $\Pr$. Denote $Y_t$ and $\bX_t$ as the reward  and observed value at time $t$. 
Define stopping time for the algorithm $\sigma$ with respect to $\mathcal F_t$,  Then from Lemma 19 in \cite{lowerboundproof}, for any event $\zeta \in \mathcal F_{\sigma}$
\begin{equation*}
    \mathbb{E}_{\xi}\left[\sum_{t=1}^\sigma \log \left(\frac{\Pr (Y_t)}{\Pr_{a} (Y_t)}\right)\right]=d(\mbox{Pr}(\zeta), \mbox{Pr}_a (\zeta)),
\end{equation*}
where $d(x,y)=x\log(x/y)+(1-x)\log ((1-x)/(1-y))$ is the binary relatively entropy.

Denote the output of our algorithm is $a^o$. Then since $a\neq a^*$, when we choose $\zeta = \{a^o=a\}$, we have 
$\Pr(\zeta)\le \delta, \Pr_a(\zeta)\ge 1-\delta$ and $d(\Pr(\zeta), \Pr_a(\zeta))\ge \log(\frac{1}{2.4\delta})$

Now note that 
\begin{align}
    &\ \ \ \ \ \mathbb{E}_{\xi}\left[\sum_{t=1}^\sigma \log \left(\frac{\Pr (Y_t)}{\Pr_{a} (Y_t)}\right)\right]\\ &=\sum_{t=1}^{\sigma}\mathbb{E}_{\xi}\left[\log\left(\frac{\Pr (Y_t)}{\Pr_a(Y_t)}\right)\right]\nonumber\\
    &=\sum_{t=1}^{\sigma}\Pr((X_t)_i=x)\left(\sum_{y \in \{0,1\}}\Pr(Y_t=y\mid (X_t)_i=x)\log \frac{\Pr(Y_t=y\mid (X_t)_i=x)}{\Pr_a(Y_t=y\mid (X_t)_i=x)}\right)\label{eq:KLeq}
\end{align}

Denote $B=\Pr(Y_t=y\mid (X_t)_i=x) \in [p_{min},p_{max}]$
\begin{align*}
   &\ \ \ \sum_{y \in \{0,1\}}\Pr(Y_t=y\mid (X_t)_i=x)\log \frac{\Pr(Y_t=y\mid (X_t)_i=x)}{\Pr_a(Y_t=y\mid (X_t)_i=x)} \\
   &= B\log \frac{B}{B+2(\Delta_a+\varepsilon)}+(1-B)\frac{1-B}{1-B-2(\Delta_a+\varepsilon)}\\
   &\le \frac{-2B(\Delta_a+\varepsilon)}{B+2\Delta_a}+\frac{2(1-B)(\Delta_a+\varepsilon)}{1-B-2(\Delta_a+\varepsilon)}\\
   &\le \frac{4(\Delta_a+\varepsilon)^2}{(B+2(\Delta_a+\varepsilon))(1-B-2(\Delta_a+\varepsilon))}\\
   &\le \frac{4(\Delta_a+\varepsilon)^2}{0.9\cdot 0.1},
\end{align*}

Then ~\eqref{eq:KLeq} becomes
\begin{align*}
    \log\left(\frac{1}{2.4\delta}\right)\le \mathbb{E}_{\xi}\left[\sum_{t=1}^\sigma \log \left(\frac{\Pr (Y_t)}{\Pr_{a} (Y_t)}\right)\right]&\le \frac{4(\Delta_a+\varepsilon)^2}{0.09}\sum_{t=1}^{\sigma}\Pr((X_t)_i=x)\\
    &\le \frac{16\max\{\Delta_a,\varepsilon/2\}^2}{0.09}\mathbb{E}_{\xi}[T_{a}(\sigma)],
\end{align*}
where $T_a(\sigma)$ for $a=do(X_i=x)$ means the number of times that $X_i=x$.
Suppose the sample complexity is $T$ for $\xi$, denote $N_a(\sigma)$ be the number of times that $A_t=a$.  We have 
\begin{align*}
    \mathbb{E}_{\xi}[N_a(\sigma)+q_a\cdot \sigma]\ge \mathbb{E}_{\xi}[T_a(\sigma)]\ge \frac{0.09}{16\max\{\Delta_a,\varepsilon/2\}^2}\log \left(\frac{1}{2.4\delta}\right).
\end{align*}
By summing over all $a=do(X_i=x) \in M, a\neq a_{min}$, we get
\begin{align}
   0.09\sum_{\substack{a\in M\setminus\{do()\}\\ a\neq a_{min}}}\frac{1}{16\max\{\Delta_a,\varepsilon/2\}^2}\log\left(\frac{1}{2.4\delta}\right)&\le\sum_{\substack{a\in M\setminus\{do()\}\\ a\neq a_{min}}}\mathbb{E}_{\xi}[N_a(\sigma)+q_a\cdot \sigma]\\&\le \mathbb{E}_{\xi}\left[\sigma+\sum_{\substack{a\in M\setminus\{do()\}\\ a\neq a_{min}}}q_a\cdot \sigma\right] \label{eq:allarmsum}
   \\&\le \mathbb{E}_{\xi}[\sigma]\left(1+\sum_{\substack{a\in M\setminus\{do()\}\\ a\neq a_{min}}}\frac{1}{\max\{\Delta_a,\varepsilon/2\}^2H_{m_{\varepsilon,\Delta}-1}}\right).
\end{align}

Denote $$Q = \sum_{\substack{a\in M\setminus\{do()\}\\ a\neq a_{min}}}\frac{1}{\max\{\Delta_a,\varepsilon/2\}^2}\ge H_{m_{\varepsilon,\Delta}-1}-\min_{i<m_{\varepsilon,\Delta}} \frac{1}{\max\{\Delta_{a_i},\varepsilon/2\}^2}-\frac{1}{\max\{\Delta_{do()}, \varepsilon/2\}^2},$$
then \begin{align*}
    \mathbb{E}_{\xi}[\sigma]&\ge \frac{0.09Q\log\left(\frac{1}{2.4\delta}\right)}{1+Q/H_{m_{\varepsilon,\Delta}-1}}\\&\ge \frac{0.09}{2} \left(H_{m_{\varepsilon,\Delta}-1}-\min_{i<m_{\varepsilon,\Delta}}\frac{1}{\max\{\Delta_{a_i},\varepsilon/2\}^2}-\frac{1}{\max\{\Delta_{do(),\varepsilon/2}\}^2}\right)\log\left(\frac{1}{2.4\delta}\right).
    \end{align*}

\end{proof}

\section{Some Proofs of Lemma}
\subsection{Proof of Lemma~\ref{lemma:relation_m_gap}}\label{sec:proof_lemma1}
\begin{proof}
    By definition, we only need to show that $|\{a\mid q_a\cdot \max\{\Delta_a,\varepsilon/2\}^2<1/H_{2m}\}|\le 2m$. Assume it does not hold, then
       $q_{a_{i}}\cdot \max\{\Delta_{a_{i}},\varepsilon/2\}^2< \frac{1}{H_{2m}}$ for $i=1,2,\cdots,2m+1$. Then for $i\ge m+1,m+2,\cdots,2m+1$, we have 
       \begin{align*}
           q_{a_i}<\frac{1}{H_{2m}\cdot \max\{\Delta_{a_i},\varepsilon/2\}^2}< \frac{1}{\sum_{j=1}^m \frac{1}{\max\{\Delta_{a_j},\varepsilon/2\}^2}\cdot \max\{\Delta_{a_i},\varepsilon/2\}^2}\le \frac{1}{m}.
       \end{align*}
    The inequality above implies the $|\{a\mid q_a<\frac{1}{m}\}|\ge m+1$, which leads to a contradiction for the definition of $m$.
\end{proof}
\subsection{The existence for Admissible Sequence in Graphs without Hidden Variables}
\label{section: existence_adm_seq}
In graph without hidden variables, the admissible-sequence is important for identifying the causal effect. Now we provide an algorithm to show how to find the admissible-sequence in this condition.

\begin{theorem}
    For causal graph $G = (\bX\cup\{Y\}, E)$ without hidden variables, for a set $\bS = \{X_1,\cdots,X_k\}$ and $X_1 \succeq X_2\succeq \cdots \succeq X_k$, the admissible-sequence with respect to $\bS$ and $Y$ can be found by
    \begin{align*}
        \bZ_i = \Pa(X_i)\setminus (\bZ_1\cup \cdots \cup \bZ_{i-1}\cup X_1 \cup \cdots \cup X_{i-1}).
    \end{align*}
    \label{theorem: adm_seq_nohidden}

\end{theorem}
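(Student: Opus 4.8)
The plan is to verify the two defining conditions of an admissible sequence directly for the stated construction, exploiting two structural facts: that $X_1 \succeq \cdots \succeq X_k$ is a linear extension of the ancestral order (so that if $X_j$ is an ancestor of $X_i$ then $j \le i$), and that the reward $Y$ has no outgoing edges. The engine of the whole argument is the containment
\[
\Pa(X_i) \subseteq \cC_i := \{X_1,\ldots,X_{i-1}\} \cup \bZ_1 \cup \cdots \cup \bZ_i,
\]
which is immediate from the definition of $\bZ_i$: every parent of $X_i$ not already lying in $\{X_1,\ldots,X_{i-1}\} \cup \bZ_1 \cup \cdots \cup \bZ_{i-1}$ is by construction placed into $\bZ_i$. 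I would also record that $X_i \notin \cC_i$, since $X_i = X_j$ or $X_i \in \bZ_j$ for some $j < i$ would make $X_i$ a parent, hence an ancestor, of $X_j$, contradicting the ordering.

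First I would check condition (1), that each $\bZ_i$ consists of nondescendants of $\{X_i, X_{i+1}, \ldots, X_k\}$. Since $\bZ_i \subseteq \Pa(X_i)$ and $G$ is acyclic, no element of $\bZ_i$ is a descendant of $X_i$ itself. For $j > i$, if some $W \in \bZ_i$ were a descendant of $X_j$, then splicing a directed path $X_j \rightsquigarrow W$ onto the edge $W \to X_i$ would yield a directed path $X_j \rightsquigarrow X_i$, making $X_j$ an ancestor of $X_i$ and forcing $j \le i$, a contradiction. Hence $\bZ_i$ avoids descendants of every $X_j$ with $j \ge i$.

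The substantive step is condition (2), the d-separation $(Y \perp \!\!\! \perp X_i \mid \cC_i)$ in the mutilated graph $G' := G_{\underline{X}_i, \overline{X}_{i+1}, \ldots, \overline{X}_k}$. The key observation is that deleting the out-edges of $X_i$ leaves $X_i$ with incoming edges only, and these are exactly the edges from $\Pa(X_i)$; the further deletions $\overline{X}_{i+1}, \ldots, \overline{X}_k$ strip in-edges of later nodes and never touch the in-edges of $X_i$. Consequently any path from $X_i$ to $Y$ in $G'$ must begin $X_i \leftarrow W$ with $W \in \Pa(X_i)$. On such a path the edge at $W$ points away from $W$ (toward $X_i$), so $W$ is a noncollider, and by the containment above $W \in \cC_i$; a conditioned noncollider blocks the path. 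Thus every $X_i$--$Y$ path in $G'$ is blocked, which is exactly the required conditional independence. (Here $W \neq Y$, because $Y$ has no outgoing edges and so cannot be a parent of $X_i$.)

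The hard part will not be any single deep step but the bookkeeping that makes this one-line d-separation argument airtight: confirming that the mutilations remove none of the parents of $X_i$, so that ``every path leaves $X_i$ through a conditioned parent'' is genuinely exhaustive, and confirming $X_i \notin \cC_i$ so that the independence assertion is non-degenerate. Once the containment $\Pa(X_i) \subseteq \cC_i$ is in hand, condition (2) collapses to the observation that the first node on any path out of $X_i$ is a conditioned noncollider, and I expect no obstacle beyond careful tracking of the topological order.
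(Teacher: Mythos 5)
Your proof is correct and takes essentially the same route as the paper's: both rest on the containment $\Pa(X_i)\subseteq \{X_1,\dots,X_{i-1}\}\cup\bZ_1\cup\cdots\cup\bZ_i$, the topological-order argument for condition (1), and the observation that this conditioning set blocks every path from $X_i$ to $Y$ in the mutilated graph. If anything, your treatment of condition (2) is tidier — since the out-edges of $X_i$ are deleted, every remaining path must exit through a conditioned non-collider parent — whereas the paper separately (and somewhat redundantly, given the mutilation) discusses forward paths and colliders.
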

\begin{proof}
    The proof is straightforward. First, $\bZ_i\subseteq \Pa(X_i)$ consists of nondescendants of $\{X_i,X_{i+1},\cdots,X_k\}$ by topological order.
    Second, we need to prove \begin{align}(Y\perp \!\!\! \perp X_i\mid X_1,\cdots,X_{i-1}, \bZ_1,\cdots,\bZ_i)_{G_{\underline{X}_i, \overline{X}_{i+1},\cdots,\overline{X}_{k}}}.\label{eq:independence}\end{align}
    We know that the $\Pa(X_i) \subseteq X_1\cup X_2\cdots X_{i-1}\cup \bZ_1\cup \cdots \bZ_i$. Then it blocks all the backdoor path from $X_i$ to $Y$. Also, since $X_1\cup X_2\cdots X_{i-1}\cup \bZ_1\cup \bZ_2\cdots \bZ_i$ consists of nondescendants of $X_i$, it cannot block any forward path from $X_i$ to $Y$. Also, for any forward path with colliders, namely, $X_i\to \cdots \to X'\leftarrow X''\cdots Y$, the $X'$ cannot be conditioned since it is a descendant for $X_i$. So conditioning on $X_1\cup X_2\cdots X_{i-1}\cup \bZ_1\cup \bZ_2\cdots \bZ_i$ will not active any extra forward path. Hence, there is only original forward path from $X_i$ to $Y$, which means that \eqref{eq:independence} holds.
\end{proof}
\subsection{Proof of Lemma~\ref{lemma:bound_estimate} }
\label{sec:proof_lemma_bound_estimate}

\boundestimate*
\begin{proof}
    Note that our BGLM model is equivalent to a threshold model: For each node $X$, we randomly sample a threshold $\Gamma_X \in [0,1]$, and if $f_X(\theta_X^T \Pa(X))+e_X\ge \Gamma_X$, we let $X = 1$, which means it is activated. At timestep $1$, the $X_1$ is activated, then at timestep $i\ge 2$, $X_i$ is either activated (set it to 1) or deactivated (set it to 0). Then, the BGLM is equivalent to the propagating process above if we uniformly sample $\Gamma_X$ for each node $X$, i.e. $\Gamma_X \sim \mathcal{U}[0,1]$.
    Now we only need to show 
    
    \begin{align}
		|\sigma({\btheta},a)-\sigma(\btheta',a)|\le \E_{\be,\Gamma}\left[\sum_{X \in N_{\bS,Y}}|\bV_{X}^{\top}({\btheta}_{X}-\btheta'_{X})|M^{(1)}\right],
	\end{align}
    Firstly, we have 
    \begin{align*}
        |\sigma(\btheta, a)-\sigma(\btheta',a)|=\mathbb{E}_{\be, \Gamma}\left[\mathbb{I}\{Y \mbox{is activated on $\btheta$}\neq \mathbb{I}\{Y\mbox{is activated on $\btheta'$}\}\}\right],
    \end{align*}
    and we define the event $\mathcal{E}_0^{\be}(X)$ as 
    \begin{align*}
    \mathcal{E}_0^{\be}(X)=\left\{\Gamma|\I \{X\text{ is activated under }\Gamma,\be,\btheta\}\neq \I\{X\text{ is activated under }\Gamma,\be,\btheta'\}\right\}.
\end{align*}

Hence

\begin{align*}
    \left|\sigma(\btheta, a)-\sigma(\btheta',a)\right|&\leq \mathbb{E}_{{\be}}\left[ \Pr_{\Gamma\sim(\mathcal{U}[0,1])^n}\{\mathcal{E}_0^{\be}(Y)\}\right].
\end{align*}

Now since only nodes in $N_{\bS, Y}$ will influence $Y$, we can only consider node $X$ in $N_{\bS,Y}$.

Let $\phi^{\be}(\btheta, \Gamma) = (\phi^{\be}_0(\btheta, \Gamma) \subseteq S, \phi^{\be}_1(\btheta, \Gamma),\cdots ,\phi^{\be}_n(\btheta, \Gamma))$ be the sequence of activated sets on $\btheta$, $0$-mean noise ${\be}$ and threshold factor $\Gamma$. More specifically, $\phi_i(\btheta, \Gamma)$ is the set of nodes activated by time step $i$. For every node $X\in N_{\bS, Y}$, we define the event that $X$ is the first node that has different activation under $\btheta$ and $\btheta'$ as below:

\begin{small}\begin{align*}
    \mathcal{E}_1^{\be}(X)=\{\Gamma|\exists i\in[n],\forall i'<i,\phi^{\be}_{i'}(\btheta,\Gamma)=\phi^{\be}_{i'}(\btheta',\Gamma),X\in(\phi^{\be}_i(\btheta,\Gamma)\backslash\phi^{\be}_i(\btheta',\Gamma)\cup (\phi_{i}^{\be}(\btheta',\Gamma)\backslash\phi^{\be}_i(\btheta,\Gamma)))\}.
\end{align*}\end{small}

Then we have $\mathcal{E}_0^{\be}(Y) \subseteq \cup_{X \in N_{\bS, Y}}\mathcal{E}_1^{\be}(X)$.
We also define other events:
\begin{align*}
    &\mathcal{E}_{2,0}^{\be}(X,i)=\{\Gamma|\forall i'<i,\phi_{i'}^{\be}(\btheta,\Gamma)=\phi_{i'}^{\be}(\btheta',\Gamma),X\not\in\phi_{i-1}^{\be}(\btheta,\Gamma)\},\\
    &\mathcal{E}_{2,1}^{\be}(X,i)=\{\Gamma|\forall i'<i,\phi_{i'}^{\be}(\btheta,\Gamma)=\phi_{i'}^{\be}(\btheta',\Gamma),X\in\phi_i^{\be}(\btheta,\Gamma)\backslash\phi_i^{\be}(\btheta',\Gamma)\},\\
    &\mathcal{E}_{2,2}^{\be}(X,i)=\{\Gamma|\forall i'<i,\phi_{i'}^{\be}(\btheta,\Gamma)=\phi_{i'}^{\be}(\btheta',\Gamma),X\in\phi_i^{\be}(\btheta',\Gamma)\backslash\phi_i^{\be}(\btheta,\Gamma)\},\\
    &\mathcal{E}_{3,1}^{\be}(X,i)=\{\Gamma|X\in\phi_i^{\be}(\btheta,\Gamma)\backslash\phi_i^{\be}(\btheta',\Gamma)\}\\
    &\mathcal{E}_{3,2}^{\be}(X,i)=\{\Gamma|X\in\phi_i^{\be}(\btheta',\Gamma)\backslash\phi_i^{\be}(\btheta,\Gamma)\}.
\end{align*}
Then since $\mathcal{E}_{2,1}^{\be}(X,i)$ and $\mathcal{E}_{2,2}^{\be}(X,i)$ are exclusive, we have 
\begin{align*}
    \Pr_{\Gamma}\{\mathcal{E}^{\be}_1(X)\}=\sum_{i=1}^n\Pr_{\Gamma}\{\mathcal{E}^{\be}_{2,1}(X,i)\}+\sum_{i=1}^n \Pr_{\Gamma}\{\mathcal{E}^{\be}_{2,2}(X,i)\}.
\end{align*}
Now we need to bound the two terms above. First, consider 
$\Pr_{\Gamma}\{\mathcal{E}^{\be}_{2,1}(X,i)\}$, we set $\Gamma_{-X}$ is the vector with all value $\Gamma_{X'}$ of node $X'\neq X$, then we also define the corresponding sub-event $\mathcal{E}^{\be}_{2,1}(X,i,\Gamma_{-X}) \subset \mathcal{E}^{\be}_{2,1}(X,i)$ as the event with value $\Gamma_{-X}$. Define $\mathcal{E}^{\be}_{2,0}(X,i,\Gamma_{-X}) \subset \mathcal{E}^{\be}_{2,0}(X,i)$, $\mathcal{E}^{\be}_{3,1}(X,i,\Gamma_{-X}) \subset \mathcal{E}^{\be}_{3,1}(X,i)$,
$\mathcal{E}^{\be}_{3,2}(X,i,\Gamma_{-X}) \subset \mathcal{E}^{\be}_{3,2}(X,i)$ in a similar way. 

From definition,  $\mathcal{E}^{\be}_{2,1}(X,i,\Gamma_{-X})=\mathcal{E}^{\be}_{3,1}(X,i,\Gamma_{-X})\cup\mathcal{E}^{\be}_{2,0}(X,i,\Gamma_{-X})$, then we have 
\begin{align*}
    \Pr_{\Gamma}\{\mathcal{E}^{\be}_{2,1}(X,i,\Gamma_{-X})\}=\Pr_{\Gamma}\{\mathcal{E}^{\be}_{2,0}(X,i)\}\cdot \Pr_{\Gamma}\{\mathcal{E}^{\be}_{3,1}(X,i,\Gamma_{-X})|\mathcal{E}^{\be}_{2,0}(X,i,\Gamma_{-X})\}.
\end{align*}
Thus, by the definition of BGLM, in $\mathcal{E}^{\be}_{2,0}(X,i,\Gamma_{-X})$, the value of $\Gamma_X$ must lie in an interval with highest value 1. Denote it as $[ \mathcal{W}_{2,0}^{\be}(X,i,\Gamma_{-X}), 1]$, then 
\begin{align*}
    \Pr_{\Gamma_X \sim \mathcal{U}[0,1]}\mathcal{E}^{\be}_{2,0}(X)=1-\mathcal{W}_{2,0}^{\be}(X,i,\Gamma_{-X}).
\end{align*}

Now we consider \begin{align*}\Pr_{\Gamma_X}\{\mathcal{E}^{\be}_{3,1}(X,i,\Gamma_{-X})|\mathcal{E}^{\be}_{2,0}(X,i,\Gamma_{-X})\}.\end{align*}
We first assume $\mathcal{W}_{2,0}^{\be}(X,i,\Gamma_{-X})<1$, otherwise our statement holds trivially.
Then we denote that the nodes activated at timestep $t$ under $\mathcal{E}^{\be}_{2,0}(X,i,\Gamma_{-X})$ as $\phi^{\be}_{t}(\mathcal{E}^{\be}_{2,0}(X,i,\Gamma_{-X}))$.
If the conditional event above holds, we have 

    \begin{scriptsize}\begin{align*}
    f_X\left(\sum_{X'\in \phi^{\be}_{i-1}(\mathcal{E}^{\be}_{2,0}(X,i,\Gamma_{-X}))\cap N(X)}\btheta_{X',X}\right)+e_X<\Gamma_X 
    \leq f_X\left(\sum_{X'\in \phi^{\be}_{i-1}(\mathcal{E}^{\be}_{2,0}(X,i,\Gamma_{-X}))\cap N(X)}\btheta'_{X',X}\right)+e_X,
\end{align*}\end{scriptsize}
or 
\begin{scriptsize}\begin{align*}
    f_X\left(\sum_{X'\in \phi^{\be}_{i-1}(\mathcal{E}^{\be}_{2,0}(X,i,\Gamma_{-X}))\cap N(X)}\btheta_{X',X}\right)+e_X\ge\Gamma_X 
    > f_X\left(\sum_{X'\in \phi^{\be}_{i-1}(\mathcal{E}^{\be}_{2,0}(X,i,\Gamma_{-X}))\cap N(X)}\btheta'_{X',X}\right)+e_X,
\end{align*}\end{scriptsize}
where $\theta_{X',X}$ is the element corresponding to $X'$ in $\theta_X$.

Thus, 
\begin{small}\begin{align*}
    &\Pr_{\Gamma_X \sim \mathcal{U}[0,1]}\{\mathcal{E}^{\be}_{3,1}(X,i,\Gamma_{-X})\cup \mathcal{E}^{\be}_{3,2}(X,i,\Gamma_{-X})|\mathcal{E}^{\be}_{2,0}(X,i,\Gamma_{-X})\}\\
    &\ \ =\frac{|f_X\left(\sum_{X'\in \phi^{\be}_{i-1}(\mathcal{E}^{\be}_{2,0}(X,i,\Gamma_{-X}))\cap N(X)}\btheta_{X',X}\right)-f_X\left(\sum_{X'\in \phi^{\be}_{i-1}(\mathcal{E}^{\be}_{2,0}(X,i,\Gamma_{-X}))\cap N(X)}\btheta'_{X',X}\right)|}{1-\mathcal{W}_{2,0}^{\be}(X,i,\Gamma_{-X})}
\end{align*}\end{small}

Thus we have 

\begin{align*}
    &\ \ \  \Pr_{\Gamma}\{\mathcal{E}^{\be}_{2,1}(X,i,\Gamma_{-X})\cup\mathcal{E}^{\be}_{2,2}(X,i,\Gamma_{-X}) \}\\& =\Pr_{\Gamma}\{\mathcal{E}^{\be}_{2,0}(X,i)\}\cdot \Pr_{\Gamma}\{\mathcal{E}^{\be}_{3,1}(X,i,\Gamma_{-X})\cup \mathcal{E}^{\be}_{3,2}(X,i,\Gamma_{-X})|\mathcal{E}^{\be}_{2,0}(X,i,\Gamma_{-X})\}\\
    &=\left|f_X\left(\sum_{X'\in \phi^{\be}_{i-1}(\mathcal{E}^{\be}_{2,0}(X,i,\Gamma_{-X}))\cap N(X)}\btheta_{X',X}\right)-f_X\left(\sum_{X'\in \phi^{\be}_{i-1}(\mathcal{E}^{\be}_{2,0}(X,i,\Gamma_{-X}))\cap N(X)}\btheta'_{X',X}\right)\right|\\
    &\le M^{(1)}\left|\left(\sum_{X'\in \phi^{\be}_{i-1}(\mathcal{E}^{\be}_{2,0}(X,i,\Gamma_{-X}))\cap N(X)}\btheta_{X',X}\right)-\left(\sum_{X'\in \phi^{\be}_{i-1}(\mathcal{E}^{\be}_{2,0}(X,i,\Gamma_{-X}))\cap N(X)}\btheta'_{X',X}\right)\right|.
\end{align*}

When $\mathcal{E}^{\be}_{2,0}(X) = \emptyset$, both two sides are zero, so it holds in general.

Now we define $\mathcal{E}^{\be}_{4,0}(X,i,\Gamma_{-X})=\{\Gamma\mid \Gamma = (\Gamma_X, \Gamma_{-X})\mid X \notin \phi_{i-1}^{\be}(\btheta, \Gamma)\}$, then $\mathcal{E}^{\be}_{2,0}(X,i,\Gamma_{-X})\subseteq \mathcal{E}^{\be}_{4,0}(X,i,\Gamma_{-X})$.
In addition, when $\mathcal{E}^{\be}_{2,0}(X,i,\Gamma_{-X})\neq \emptyset,$ $\phi_{i'}^{\be}(\mathcal{E}^{\be}_{2,0}(X,i,\Gamma_{-X}))=\phi_{i'}^{\be}(\mathcal{E}^{\be}_{4,0}(X,i,\Gamma_{-X}))$ for all $i'<i$. Thus we have 

\begin{align*}
    &\ \ \ \Pr_{\Gamma}\{\mathcal{E}^{\be}_{2,1}(X,i,\Gamma_{-X})\cup\mathcal{E}^{\be}_{2,2}(X,i,\Gamma_{-X}) \}\\
    &\le M^{(1)}\left|\left(\sum_{X'\in \phi^{\be}_{i-1}(\mathcal{E}^{\be}_{4,0}(X,i,\Gamma_{-X}))\cap N(X)}\btheta_{X',X}\right)-\left(\sum_{X'\in \phi^{\be}_{i-1}(\mathcal{E}^{\be}_{4,0}(X,i,\Gamma_{-X}))\cap N(X)}\btheta'_{X',X}\right)\right|.
\end{align*}
Now we can get 
\begin{align*}
    \Pr_{\Gamma}\{\mathcal{E}_1^{\be}(X)\}&=\int_{\Gamma_{-X}}\sum_{i=1}^n\Pr_{\gamma_X\sim\mathcal{U}[0,1]}\{\mathcal{E}^{\be}_{2,1}(X,i,\Gamma_{-X})
    \cup\mathcal{E}^{\be}_{2,2}(X,i,\Gamma_{-X})\}\text{d}\Gamma_{-X}\\
    &=\int_{\Gamma_{-X}}\Pr_{\gamma_X\sim\mathcal{U}[0,1]}\{\mathcal{E}^{\be}_{2,1}(X,i^*,\Gamma_{-X})
    \cup\mathcal{E}^{\be}_{2,2}(X,i^*,\Gamma_{-X})\}\text{d}\Gamma_{-X}\\
    &\leq \int_{\Gamma_{-X}}\left|\sum_{X'\in \phi^{\be}_{i^*-1}(\mathcal{E}^{\be}_{4,0}(X,i^*,\Gamma_{-X}))\cap N(X)}
    (\theta_{X',X}-\theta'_{X',X})\right|M^{(1)}\text{d}\Gamma_{-X}\\
    &=\mathbb{E}_{\Gamma_{-X}}\left[\left|\sum_{X'\in \phi^{\be}_{i^*-1}(\mathcal{E}^{\be}_{4,0}(X,i^*,\Gamma_{-X}))\cap N(X)}
    (\theta_{X',X}-\theta'_{X',X})\right|\right]M^{(1)}\\
    &=  \mathbb{E}_{\Gamma_{-X}}\left[|\bV_X(\theta_X-\theta'_X)|M^{(1)}\right],
\end{align*}
where $i^*$ is the topological order of $X$ in graph $G$, and the second inequality is because $\mathcal{E}_{2,1}^{\be}(X,i,\Gamma_{-X}) \neq \emptyset$ only when $i = i^*$.
Summing over all node $X \in N_{\bS,Y}$, we complete the proof.

\end{proof}
\subsection{Proof of Lemma~\ref{lemma:eigenvalue_estimate}}
\eigenvalueestimate*
\begin{proof}
    The all proof is very similar to the proof in \cite{CCB2022}, for the completeness, we provide them here. 
    Note that $\hat{\btheta}_{t,X}$ satisfies $\nabla L_{t,X}(\hat{\btheta}_{t,X})=0$, where \begin{align*}
    \nabla L_{t,X}(\btheta_X)=\sum_{i=1}^t[X^t-f_X(\bV_{i,X}^T\btheta_X)]\bV_{i,X}.
\end{align*}

Define $G(\btheta_X)=\sum_{i=1}^t(f_X(\bV_{i,X}^T\btheta_X)-f_X(\bV_{i,X}^T\btheta_X^*))\bV_{i,X}$. Thus $G(\btheta_X^*)=0$ and $G(\hat{\btheta}_{t,X})=\sum_{i=1}^t\varepsilon'_{i,X}\bV_{i,X}$, where $\varepsilon_{i,X}'=X^i-f_X(\bV_{i,X}^T\btheta_X^*)$. Now note that $\mathbb{E}[\varepsilon'_{i,X}|\bV_{i,X}]=0$ and $\varepsilon'_{i,X}=X^i-f_X(\bV_{i,X}^T\btheta^*_X)\in[-1,1]$, then $\varepsilon'_{i,X}$ is 1-subgaussian. Let $Z=G(\hat{\btheta}_{t,X})=\sum_{i=1}^t\varepsilon'_{i,X}\bV_{i,X}$

\paragraph{Step 1: Consistency of $\hat{\btheta}_{t,X}$}

For any $\btheta_1,\btheta_2\in\mathbb{R}^{|\Pa(X)|}$,  $\exists\bar{\btheta}=s\btheta_1+(1-s)\btheta_2,0<s<1$ such that \begin{align*}
    G(\btheta_1)-G(\btheta_2)&=\left[\sum_{i=1}^t\dot{f}_X(\bV_{i,X}^T\bar{\btheta})\bV_{i,X}\bV_{i,X}^T\right](\btheta_1-\btheta_2)\\
    &\triangleq F(\bar{\btheta})(\btheta_1-\btheta_2).
\end{align*}
Since $f$ is strictly increasing, $\dot{f}>0$, then $G(\btheta)$ is an injection and $G^{-1}$ is well-defined.

Now let $\mathcal{B}_{\eta } = \{\btheta\mid ||\btheta-\btheta^*||\le \eta\}$, then define $\kappa_\eta := \inf_{\btheta \in \mathcal{B}_\eta, X\neq 0}\dot{f}(X^T\btheta)>0$. The following lemma helps our proof, and it can be found in Lemma A of \cite{1999Chen}:
\begin{lemma}[\cite{1999Chen}]
    $\{\btheta\mid ||G(\btheta)||_{M_{t,X}^{-1}}\le \kappa_\eta \eta \sqrt{\lambda_{min}(M_{t,X})}\}\subseteq \mathcal{B}_\eta.$
\end{lemma}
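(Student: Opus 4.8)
The plan is to prove the stated inclusion by contraposition: I will show that every $\btheta$ with $\|\btheta-\btheta^*\|>\eta$ (i.e. $\btheta\notin\mathcal{B}_\eta$) satisfies the strict inequality $\|G(\btheta)\|_{M_{t,X}^{-1}}>\kappa_\eta\eta\sqrt{\lambda_{\min}(M_{t,X})}$, since negating this statement gives exactly $\{\btheta:\|G(\btheta)\|_{M_{t,X}^{-1}}\le\kappa_\eta\eta\sqrt{\lambda_{\min}(M_{t,X})}\}\subseteq\mathcal{B}_\eta$. Throughout I abbreviate $M=M_{t,X}$ and set $\bw=\btheta-\btheta^*$, so the target quantity is a weighted norm $\|G(\btheta)\|_{M^{-1}}^2=G(\btheta)^\top M^{-1}G(\btheta)$, and the goal is to bound it below by $\kappa_\eta\eta\sqrt{\lambda_{\min}(M)}$.

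The first step is a Cauchy--Schwarz lower bound for the $M^{-1}$-weighted norm against a cleverly chosen test vector. For any nonzero $b$ one has $\|G(\btheta)\|_{M^{-1}}\ge (G(\btheta)^\top M^{-1}b)/\|b\|_{M^{-1}}$; taking $b=M\bw$ makes $M^{-1}b=\bw$ and $\|b\|_{M^{-1}}=\|\bw\|_M$, so that
\[
\|G(\btheta)\|_{M^{-1}}\ \ge\ \frac{G(\btheta)^\top\bw}{\|\bw\|_M}.
\]
Thus it suffices to lower bound the scalar $G(\btheta)^\top\bw$. For this I use the integrated form of the mean-value identity $G(\btheta_1)-G(\btheta_2)=F(\bar\btheta)(\btheta_1-\btheta_2)$ supplied in the excerpt. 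Since the Jacobian of $G$ is precisely $DG(\btheta)=\sum_{i}\dot f_X(\bV_{i,X}^\top\btheta)\bV_{i,X}\bV_{i,X}^\top=F(\btheta)$, writing $\btheta_\tau=\btheta^*+\tau\bw$ and using $G(\btheta^*)=0$ gives
\[
G(\btheta)^\top\bw=\int_0^1 \bw^\top F(\btheta_\tau)\,\bw\,d\tau.
\]
I prefer the integral to a single midpoint $\bar\btheta$ because, when $\btheta$ lies far outside the ball, no single $\bar\btheta$ on the segment is guaranteed to fall inside $\mathcal{B}_\eta$, whereas the integral lets me use the curvature only where it is valid.

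The second step is to split the integral at $\tau^*=\eta/\|\bw\|\in(0,1)$, the parameter for which $\btheta_{\tau^*}$ sits on $\partial\mathcal{B}_\eta$. For $\tau\le\tau^*$ the point $\btheta_\tau$ lies in $\mathcal{B}_\eta$, so each coefficient obeys $\dot f_X(\bV_{i,X}^\top\btheta_\tau)\ge\kappa_\eta$ and hence $F(\btheta_\tau)\succeq\kappa_\eta M$; for $\tau>\tau^*$ the integrand is still strictly positive, because $\dot f_X>0$ everywhere and $M$ is positive definite (indeed $\lambda_{\min}(M)>0$ forces the $\bV_{i,X}$ to span, so $F(\btheta_\tau)\succ0$). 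Discarding the nonnegative tail therefore gives
\[
G(\btheta)^\top\bw\ >\ \int_0^{\tau^*}\bw^\top F(\btheta_\tau)\,\bw\,d\tau\ \ge\ \tau^*\kappa_\eta\,\bw^\top M\bw\ =\ \tau^*\kappa_\eta\|\bw\|_M^2 .
\]
Combining with the first step, then using $\|\bw\|_M\ge\sqrt{\lambda_{\min}(M)}\,\|\bw\|$ together with the exact cancellation $\tau^*\|\bw\|=\eta$, yields
\[
\|G(\btheta)\|_{M^{-1}}\ >\ \tau^*\kappa_\eta\|\bw\|_M\ \ge\ \tau^*\kappa_\eta\sqrt{\lambda_{\min}(M)}\,\|\bw\|\ =\ \kappa_\eta\eta\sqrt{\lambda_{\min}(M)},
\]
which is the desired strict inequality and completes the contrapositive.

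The main obstacle is exactly the region $\tau\in(\tau^*,1]$ lying outside $\mathcal{B}_\eta$, where the uniform curvature bound $F(\btheta_\tau)\succeq\kappa_\eta M$ is no longer available and a naive midpoint argument breaks down. The resolution I rely on is twofold: on that outer portion the integrand is still nonnegative (and in fact strictly positive, which gives the strict inequality needed for a \emph{closed}-ball inclusion), so dropping it only weakens the estimate; and the inner portion up to $\tau^*$ already delivers the full target factor because the factor $\tau^*$ lost by truncating the integral is precisely recovered through $\tau^*\|\bw\|=\eta$. Everything else (the Cauchy--Schwarz step, the Jacobian computation, and the positive-definiteness of $M$ under the lemma's hypothesis $\lambda_{\min}(M_{t,X})>0$) is routine.
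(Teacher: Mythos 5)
Your proof is correct, but there is nothing in the paper to compare it against: the authors do not prove this statement at all, they import it verbatim as Lemma A of \cite{1999Chen} and use it as a black box. So your argument is, by construction, a different route --- and a genuinely self-contained one. Concretely, you prove the preimage inclusion pointwise by contraposition, combining (i) Cauchy--Schwarz in the $M_{t,X}^{-1}$ inner product against the test vector $M_{t,X}\bw$ with $\bw=\btheta-\btheta^*$, (ii) the integral form $G(\btheta)^{\top}\bw=\int_0^1 \bw^{\top}F(\btheta^*+\tau\bw)\,\bw\,d\tau$ of the mean-value identity (which, incidentally, also repairs the paper's slight abuse of writing a single mean-value point $\bar\btheta$ for the vector-valued map $G$), and (iii) global positive semidefiniteness of $F$, so that the part of the segment lying outside $\mathcal{B}_\eta$ can be discarded while the inner part $[0,\tau^*]$ with $\tau^*=\eta/\|\bw\|$ alone recovers the full constant through the cancellation $\tau^*\|\bw\|=\eta$. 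What this buys is an elementary monotonicity-only proof, avoiding the injectivity-plus-topology (invariance-of-domain type) machinery on which such lemmas in the GLM consistency literature traditionally rest; what the citation buys the authors is simply not having to prove anything. Two caveats, both benign: your key step $F(\btheta_\tau)\succeq\kappa_\eta M_{t,X}$ requires the convention $M_{t,X}=\sum_{i\le t}\bV_{i,X}\bV_{i,X}^{\top}$, which is indeed what the paper's appendix uses (there $M_{t,X}=D^{\top}D$), although Algorithm 1 initializes $M_{0,X}=I$ --- under that alternative convention the inequality, and hence the lemma as you prove it, would need restating; and the strict inequality you extract from the tail (needed because the hypothesis set is closed) relies on $\dot f_X>0$ and on the $\bV_{i,X}$ spanning, which follow respectively from the paper's standing assumption of strict monotonicity and from $\lambda_{\min}(M_{t,X})>0$ being implicit in the statement.
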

The next lemma provides an upper bound of $||Z||_{M_{t,X}^{-1}}$:
\begin{lemma}[\cite{zhang2022online}]\label{lemma:upperbound||Z||}
    For any $\delta>0$, the event $\mathcal{E}_G:= \{||Z||_{M_{t,X}^{-1}}\le 4\sqrt{|\Pa(X)|+\ln(1/\delta)}\}$ holds with probability at least $1-\delta$.
\end{lemma}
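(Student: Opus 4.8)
The plan is to recognize Lemma~\ref{lemma:upperbound||Z||} as a \emph{self-normalized concentration bound} for the vector-valued martingale $Z = \sum_{i=1}^t \varepsilon'_{i,X}\bV_{i,X}$ and to prove it by the method of mixtures. We have already established that, conditioned on $\bV_{i,X}$, the noise $\varepsilon'_{i,X} = X^i - f_X(\bV_{i,X}^\top \btheta_X^*)$ is zero-mean and $1$-subgaussian, and that $M_{t,X} = I + \sum_{i=1}^t \bV_{i,X}\bV_{i,X}^\top$. The quantity to control, $\|Z\|_{M_{t,X}^{-1}}$, is exactly the self-normalized norm studied by Abbasi-Yadkori, Pál and Szepesvári, so the natural route is to import their argument with the initialization $M_{0,X}=I$ playing the role of the regularizer.

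Concretely, first I would fix a vector $\lambda \in \R^{|\Pa(X)|}$ and form the exponential process $\mathcal{M}_t^\lambda = \exp\!\left(\lambda^\top Z_t - \tfrac12 \lambda^\top\big(\sum_{i\le t}\bV_{i,X}\bV_{i,X}^\top\big)\lambda\right)$, where $Z_t$ denotes the partial sum through round $t$. Using the $1$-subgaussianity of each increment $\varepsilon'_{i,X}$ (so that $\E[\exp(\lambda^\top \bV_{i,X}\,\varepsilon'_{i,X})\mid \mathcal{F}_{i-1}]\le \exp(\tfrac12(\lambda^\top \bV_{i,X})^2)$), one checks that $\mathcal{M}_t^\lambda$ is a nonnegative supermartingale with $\E[\mathcal{M}_0^\lambda]=1$. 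Next I would mix over a Gaussian prior $\lambda \sim \mathcal{N}(0, I)$, setting $\bar{\mathcal{M}}_t = \E_\lambda[\mathcal{M}_t^\lambda]$; the Gaussian integral evaluates in closed form and, because $M_{0,X}=I$, gives $\bar{\mathcal{M}}_t = \det(M_{t,X})^{-1/2}\exp\!\left(\tfrac12\|Z_t\|_{M_{t,X}^{-1}}^2\right)$. Since mixing preserves the supermartingale property and $\E[\bar{\mathcal{M}}_0]=1$, Ville's maximal inequality yields $\Pr\{\exists t:\ \bar{\mathcal{M}}_t \ge 1/\delta\}\le \delta$.

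Rearranging the complementary event $\bar{\mathcal{M}}_t < 1/\delta$ gives, with probability at least $1-\delta$, the inequality $\|Z_t\|_{M_{t,X}^{-1}}^2 \le 2\log(1/\delta) + \log\det(M_{t,X})$. The final step is to bound the log-determinant term by the dimension: since every coordinate of $\bV_{i,X}$ lies in $[0,1]$, the trace of $M_{t,X}$ is controlled, and via an AM--GM / potential argument $\log\det(M_{t,X})$ is $O(|\Pa(X)|)$; combining the two terms and taking square roots while absorbing constants produces the stated bound $\|Z\|_{M_{t,X}^{-1}}\le 4\sqrt{|\Pa(X)|+\ln(1/\delta)}$.

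The main obstacle is precisely the \emph{adaptivity} of the feature vectors $\bV_{i,X}$: because each $\bV_{i,X}$ is a function of the past observations, a naive union bound over an $\varepsilon$-net of directions fails, and one genuinely needs the supermartingale/mixture machinery to handle the data-dependent normalizer $M_{t,X}$ simultaneously for all $t$. A secondary subtlety is obtaining the clean \emph{dimension-only} form of the bound (with no explicit $t$-dependence), which requires controlling the potential $\log\det(M_{t,X})$ by $|\Pa(X)|$ rather than leaving a residual $\log t$ factor; this is exactly where the boundedness of the parent-value vectors $\bV_{i,X}$ is used.
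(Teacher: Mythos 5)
First, a point of reference: the paper does not prove this lemma at all --- it is imported verbatim from \cite{zhang2022online} (adapted from \cite{li2017provably}) --- so your attempt can only be judged on its own terms. Your first two steps are sound: with $M_{0,X}=I$ the exponential supermartingale and the Gaussian mixture do evaluate to $\bar{\mathcal{M}}_t = \det(M_{t,X})^{-1/2}\exp\bigl(\tfrac12\|Z\|_{M_{t,X}^{-1}}^2\bigr)$, and Ville's inequality gives $\|Z\|_{M_{t,X}^{-1}}^2 \le 2\ln(1/\delta) + \ln\det(M_{t,X})$ with probability at least $1-\delta$. The gap is in your final step: the claim that $\ln\det(M_{t,X}) = O(|\Pa(X)|)$ is false. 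Boundedness of the coordinates of $\bV_{i,X}$ only controls the trace, $\mathrm{tr}(M_{t,X}) \le |\Pa(X)|(1+t)$, and the determinant--trace (AM--GM) inequality then yields $\ln\det(M_{t,X}) \le |\Pa(X)|\ln(1+t)$, which is tight up to constants (take the $\bV_{i,X}$ cycling through an orthonormal basis). So the method of mixtures delivers $\|Z\|_{M_{t,X}^{-1}} \lesssim \sqrt{|\Pa(X)|\ln(1+t) + \ln(1/\delta)}$, weaker than the stated $4\sqrt{|\Pa(X)|+\ln(1/\delta)}$ by a $\sqrt{\ln t}$ factor on the dimension term; that factor is not cosmetic, since the lemma feeds into the constants of Lemma~\ref{lemma:eigenvalue_estimate} and the stopping condition of Algorithm~\ref{alg:linear}.

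The $t$-free, dimension-only form of the bound requires a different mechanism, and the fixed-$t$ nature of the statement (no uniformity over $t$) is exactly what makes it available. The route in \cite{li2017provably,zhang2022online} is: because the observational rounds are i.i.d.\ samples from the causal model and $\E[\varepsilon'_{i,X}\mid \bV_{1,X},\dots,\bV_{t,X}] = \E[\varepsilon'_{i,X}\mid \bV_{i,X}]=0$, one may condition on the entire design $\{\bV_{i,X}\}_{i\le t}$, after which the noises are independent, zero-mean and $1$-subgaussian; setting $w_i = M_{t,X}^{-1/2}\bV_{i,X}$ one has $\sum_i \|w_i\|^2 = \mathrm{tr}\bigl(M_{t,X}^{-1/2}(M_{t,X}-I)M_{t,X}^{-1/2}\bigr)\le |\Pa(X)|$, and a concentration inequality for $\bigl\|\sum_i \varepsilon'_{i,X} w_i\bigr\|$ (a quadratic-form bound in the style of Hsu--Kakade--Zhang, or a $\tfrac12$-net of the unit sphere of cardinality $e^{O(|\Pa(X)|)}$ combined with Hoeffding and a union bound) produces $4\sqrt{|\Pa(X)|+\ln(1/\delta)}$. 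Your supermartingale/mixture machinery is the right tool for anytime bounds with genuinely adaptive designs, but it provably cannot shed the $\log\det$ potential, so it is the wrong tool for this particular inequality.
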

By the above two lemmas, when $\mathcal{E}_G$ holds, for any $\eta\ge \frac{4}{\kappa_\eta}\sqrt{\frac{|\Pa(X)|+\ln(1/\delta)}{\lambda_{\min}(M_{t,X})}}$, we have $||\hat{\btheta}_{t,X}-\btheta^*||\le \eta.$ Choose $\eta = 1$, we know $1\ge \frac{4}{\kappa}\sqrt{\frac{|\Pa(X)|+\ln(1/\delta)}{\lambda_{\min}(M_{t,X})}}$, then with probability $1-\delta$ $||\hat{\btheta}_{t,X}-\btheta^*||\le 1$.

\paragraph{Step 2: Normality of $\hat{\btheta}_{t,X}.$}
Now we assume $||\hat{\btheta}_{t,X}-\btheta^*||\le 1$ holds. Define $\Delta = \hat{\btheta}_{t,X}-\btheta^*$, then $\exists s\in[0,1]$ such that $Z=G(\hat{\btheta}_{t,X})-G(\btheta_X^*)=(H+E)\Delta$, where $\bar{\btheta}=s\btheta^*_X+(1-s)\hat{\btheta}_{t,X}$, $H=F(\btheta^*_X)=\sum_{i=1}^t\dot{f}_X(\bV_{i,X}^T\btheta^*_X)\bV_{i,X}\bV_{i,X}^T$ and $E=F(\bar{\btheta})-F(\btheta^*_X)$. Then, according to mean value theorem, we have 
\begin{align*}
    E&=\sum_{i=1}^t(\dot{f}_X(\bV_{i,X}\cdot\overline{\btheta})-\dot{f}_X(\bV_{i,X}\cdot \btheta^*_X))\bV_{i,X}\bV_{i,X}^T\\
    &=\sum_{i=1}^t\ddot{f}_X(r_i)\bV_{i,X}^T\Delta \bV_{i,X}\bV_{i,X}^T\\
    &\le \sum_{i=1}^tM^{(2)}\bV_{i,X}^T\Delta \bV_{i,X}\bV_{i,X}^T
\end{align*}
for some $r_i\in\mathbb{R}$. Thus we have 
\begin{align*}
    \bv^TH^{-1/2}EH^{-1/2}\bv
    &\leq\sum_{i=1}^tL_{f_X}^{(2)}||\bV_{i,X}||||\Delta||||\bv^T H^{-1/2}\bV_{i,X}||^2\\
    &\leq M^{(2)}\sqrt{|\Pa(X)|}||\Delta||(\bv^T H^{-1/2}(\sum_{i=1}^t\bV_{i,X}\bV_{i,X}^T)H^{-1/2}\bv)\\
    &\leq\frac{M^{(2)}\sqrt{|\Pa(X)|}}{\kappa}||\Delta||||\bv||^2,
\end{align*}
hence we know 
\begin{align*}
    ||H^{-1/2}EH^{-1/2}||
    &\leq\frac{M^{(2)}\sqrt{|\Pa(X)|}}{\kappa}||\Delta||\\
    &\leq\frac{4M^{(2)}\sqrt{|\Pa(X)|}}{\kappa^2}\sqrt{\frac{|\Pa(X)|+\ln\frac{1}{\delta}}{\lambda_{\min}(M_{t,X})}}\\
    &\le \frac{1}{2},
\end{align*}
where the last inequality is because \begin{small}\begin{align*}\lambda_{\min}(M_{t,X})\geq 512\frac{(M^{(2)})^2}{\kappa^4}|\Pa(X)|\left(|\Pa(X)|+\ln\frac{1}{\delta}\right)>64\frac{(M^{(2)})^2}{\kappa^4}|\Pa(X)|\left(|\Pa(X)|+\ln\frac{1}{\delta}\right).\end{align*}\end{small}

Now for any $\bv \in \mathbb{R}^{|\Pa(X)|}$, we have \begin{align*}
    \bv^T(\hat{\btheta}_{t,X}-\btheta^*_X)&=\bv^T(H+E)^{-1}Z\\
    &=\bv^T H^{-1}Z-\bv^T H^{-1}E(H+E)^{-1}Z.
\end{align*}
The second equality is correct from $H+E = F(\bar{\btheta})\succeq \kappa M_{t,X}\succeq 0$.

Define $D\triangleq(\bV_{1,X},\bV_{2,X},\cdots,\bV_{t,X})^T\in\mathbb{R}^{t\times|\Pa(X)|}$. Then $D^T D=\sum_{i=1}^t\bV_{i,X}\bV_{i,X}^T=M_{t,X}$. By the Hoeffding's inequality \cite{hoeffding1994probability}, \begin{align*}
    \Pr(\left|\bv^T H^{-1}Z\geq a\right|)&\leq\exp\left(-\frac{a^2}{2||\bv^T H^{-1}D^T||^2}\right)\\
    &=\exp\left(-\frac{a^2}{2\bv^T H^{-1}D^T DH^{-1}\bv}\right)\\
    &\leq \exp\left(-\frac{a^2\kappa^2}{2||\bv||^2_{M_{t,X}^{-1}}}\right).
\end{align*} 
The last inequality holds because $H\succeq \kappa M_{t,X}=\kappa D^T D$.

Thus with probability $1-2\delta$, $| \bv^T H^{-1}Z|\leq\frac{\sqrt{2\ln{1/\delta}}}{\kappa}||\bv||_{M_{t,X}^{-1}}$.

For the second term, we know \begin{align}
    |\bv^T H^{-1}E(H+E)^{-1} Z|
    &\leq ||\bv||_{H^{-1}}||H^{-\frac{1}{2}}E(H+E)^{-1}Z||\nonumber\\
    &\leq ||\bv||_{H^{-1}}||H^{-\frac{1}{2}}E(H+E)^{-1}H^{\frac{1}{2}}||\ ||Z||_{H^{-1}}\nonumber\\
    &\leq \frac{1}{\kappa}||\bv||_{M^{-1}_{t,X}}||H^{-\frac{1}{2}}E(H+E)^{-1}H^{\frac{1}{2}}||\ ||Z||_{M^{-1}_{t,X}}.\label{ineq:E(H+E)}
\end{align}
Then we get
\begin{align*}
    ||H^{-\frac{1}{2}}E(H+E)^{-1}H^{\frac{1}{2}}||
    &=||H^{-\frac{1}{2}}E(H^{-1}-H^{-1}E(H+E)^{-1})^{-1}H^{\frac{1}{2}}||\\
    &=||H^{-\frac{1}{2}}EH^{\frac{1}{2}}+H^{-\frac{1}{2}}EH^{-1}E(H+E)^{-1}H^{\frac{1}{2}}||\\
    &\leq ||H^{-\frac{1}{2}}EH^{\frac{1}{2}}||+||H^{-\frac{1}{2}}EH^{-\frac{1}{2}}||\ ||H^{-\frac{1}{2}}E(H+E)^{-1}H^{\frac{1}{2}}||,
\end{align*}
where the first inequality is derived by $(H+E)^{-1}=H^{-1}-H^{-1}E(H+E)^{-1}$.

Then we can get 
\begin{align*}
    ||H^{-\frac{1}{2}}E(H+E)^{-1}H^{\frac{1}{2}}||\le \frac{||H^{-\frac{1}{2}}EH^{\frac{1}{2}}||}{1-||H^{-\frac{1}{2}}EH^{\frac{1}{2}}||}&\le 2||H^{-\frac{1}{2}}EH^{\frac{1}{2}}||\\
    &\le \frac{8M^{(2)}\sqrt{|\Pa(X)|}}{\kappa^2}\sqrt{\frac{|\Pa(X)|+\ln\frac{1}{\delta}}{\lambda_{\min}(M_{t,X})}}
\end{align*}
Thus by \eqref{ineq:E(H+E)} and Lemma~\ref{lemma:upperbound||Z||}
\begin{align*}
    \left|v^T H^{-1}E(H+E)^{-1} Z\right|
    &\leq\frac{32L_{f_X}^{(2)}\sqrt{|\Pa(X)|}(|\Pa(X)|+\log\frac{1}{\delta})}{\kappa^3\sqrt{\lambda_{\min}(M_{t,X})}}||\bv||_{M_{t,X}^{-1}}.
\end{align*}
So we have 
\begin{align*}
    \left|\bv^T(\hat{\btheta}_{t,X}-\btheta^*_X)\right|
    &\leq\left(\frac{32L_{f_X}^{(2)}\sqrt{|\Pa(X)|}(|\Pa(X)|+\log\frac{1}{\delta})}{\kappa^3\sqrt{\lambda_{\min}(M_{t,X})}}
    +\frac{\sqrt{2\ln{1/\delta}}}{\kappa}\right)||\bv||_{M_{t,X}^{-1}}\\
    &\leq\frac{3}{\kappa}\sqrt{\log(1/\delta)}||\bv||_{M_{t,X}^{-1}},
\end{align*}
where the last inequality is because
\begin{align*}
    \lambda_{\min}(M_{t,X})\geq \frac{512|\Pa(X)|(L_{f_X}^{(2)})^2}{\kappa^4}\left(|\Pa(X)|^2+\ln\frac{1}{\delta}\right).
\end{align*}
By replace $\delta$ with $\delta/3nt^2$, we complete the proof.
\end{proof}

\section{Experiments}
\label{sec:experiment}
In this section, we provide some experiments supporting our theoretical result for CCPE-BGLM and CCPE-General. 

\subsection{CCPE-BGLM}
\paragraph{Experiment 1}
First, we provide the experiments for our CCPE-BGLM algorithm. We construct a causal graph with 9 nodes $X_1,\cdots,X_8$ and $X_0$, such that $X_i\succeq X_{i+1}$. Then, we randomly choose two nodes in $X_1,\cdots,X_{i-1}$ and also $X_0$ to be the parent of $X_i(i\ge 1)$. $Y$ has 4 parents, and they are randomly chosen in $\bX = \{X_1\cdots,X_8\}$. For $X_0$, we know $P(X_0=1)=1$. 
For node $X_i$ and their parent $X_i^{(1)}, X_i^{(2)}$, $P(X_i=1)=0.4X_0 + 0.1X_i^{(1)}+0.1X_i^{(2)}$. (If $i=2$, $P(X_i=1)=0.4X_0+0.1X_i^{(1)}=0.4X_0+0.1X_1$; If $i=1$, $P(X_1=1)=0.4X_0.$) Suppose the parents of reward variable are $X^{(1)},X^{(2)},X^{(3)},X^{(4)}$   The reward variable is defined by $P(Y=1)=0.3X^{(1)}+0.3X^{(2)}+0.3X^{(3)}+0.05X^{(4)}$. The action set is $\{do(\bS=\textbf{1}\mid |\bS|=3, \bS \subset \bX)\}$. Hence the optimal arm is $do(\{X^{(1)}, X^{(2)}, X^{(3)}\}=\textbf{1})$. 

We choose 4 algorithms in this experiment: LUCB in \cite{LUCB2012}, lilUCB-heuristic in \cite{lilUCB}, Propagating-Inference in \cite{icml2018-propagatinginference} and our CCPE-BGLM. LUCB and lilUCB-heuristic are classical pure exploration algorithm. Because in previous causal bandit literature, Propagating-Inference is the only algorithm considering combinatorial action set without prior knowledge $P(\Pa(Y)\mid a)$ for action $a \in \bA$, we choose it in this experiment. Note that the criteria of Propagating-Inference algorithm is simple regret, hence it cannot directly compare to our pure exploration algorithm. We choose to compare the error probability at some fixed time $T$ instead. 
In this criteria, Propagating-Inference algorithm will have an extra knowledge of budget $T$ while LUCB, lilUCB-heuristic and CCPE-BGLM not. 
To implement the Propagating Inference algorithm, we follows the modification in \cite{icml2018-propagatinginference} to make this algorithm more efficient and accurate by setting $\lambda = 0$ and $\eta_A = 1/C$. (Defined and stated in \cite{icml2018-propagatinginference}.) 
For CCPE-BGLM, we ignore the condition that $t\ge \max\{\frac{cD}{\eta^2}\log \frac{nt^2}{\delta}, \frac{1024(M^{(2)})^2(4D^2-3)D}{\kappa^4\eta}(D^2+\ln\frac{3nt^2}{\delta})\}$, to make it more efficient. During our experiment, the error probability is smaller than other algorithm even if we ignore this condition. 
Also, to make this algorithm more efficient, we update observational confidence bound (Line 11) each 50 rounds. (This will not influence the proof of Theorem~\ref{theorem:linear}.)
For LUCB, lilUCB-heuristic and CCPE-BGLM, we find the best exploration parameter $\alpha$, $\alpha_I$ and $\alpha_O$ by grid search from $\{0.05,0.1,\cdots,1\}$. (Exploration parameter $\alpha$ for UCB-type algorithm  is a constant multiplied in front of the confidence radius, which should be tuned in practice. e.g.(\cite{linucb}, \cite{findingalleps-good}.) For this task, we find $\alpha = 0.3$, $\alpha_O = 0.05, \alpha_I = 0.4$. We choose $T =50+50i$ for $0\le i\le 9$. For each time $T$, we run 100 iterations and average the result.  

As the Figure~\ref{fig:linear_experiments} shows, even if our algorithm does not know the budget $T$, our algorithm converges quicker than all other algorithms. 
\begin{figure}[htb] 
	\centering 
	\includegraphics[width=0.50\textwidth]{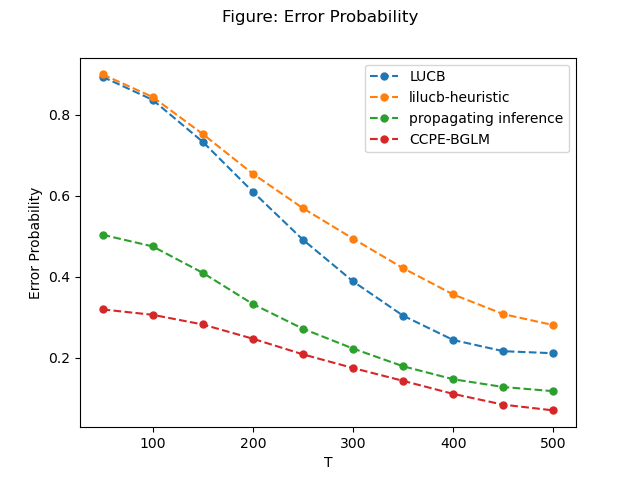}
	\caption{Error Probability for Experiment 1}
	\label{fig:linear_experiments} 
\end{figure}

\subsection{CCPE-General}
In this subsection, we provide the experiments for CCPE-General algorithm. We also choose 4 algorithms, LUCB, lilUCB-heuristic, Propagating-Inference and CCPE-General (called "adm\_seq" in figure because it utilizes admissible sequence.). Since the Propagating-Inference cannot hold for general graph with hidden variables, we first compare them in graphs without hidden variables. Then we also compare LUCB, lilUCB-heuristic and our algorithm in the graphs with hidden variables.

\paragraph{Experiment 2}
we construct the graph with 7 nodes $X_1,\cdots,X_7$ such that $X_i\succeq X_{i+1}$. Then, we randomly choose two nodes in $X_1,\cdots,X_{i-1}$ as parents of $X_i$. The reward variable $Y$ has 5 parents $X^{(i)}=X_{i+2}, 1\le i\le 5$. We choose $P(X_1=1)=0.5$, $P(X_2=1)=0.55$ if $X_0 = 1$ and otherwise $P(X_2=1)=0.45$.
For $i\ge 2$, and two parents $X_i^{(1)}, X_i^{(2)}$ of $X_i$, $P(X_i=1)=0.55$ if $X_i^{(1)}=X_{i}^{(2)}$ and otherwise $P(X_i=1)=0.45$.
For reward variable $Y$, 
\begin{align}
    P(Y=1) = \left\{
\begin{array}{lcc}
0.9      &  {X_i^{(1)}=X_i^{(2)}=1} \\
0.7+0.05X_i^{(1)}+0.05X_i^{(2)}     & {X_i^{(3)}=X_i^{(4)}=1}\\
0         &  {\mbox{Otherwise}}
\end{array} \right.
\end{align}

We define the action set $\{do(\bS=\bs)\mid |\bS|=2, \bs \in \{0,1\}^2\}$, then the optimal arm is $do(\{X_i^{(1)}, X_i^{(2)}\}=\textbf{1})$. We choose $\alpha_O = 0.25, \alpha_I = 0.4$, and exploration parameters $\alpha$ for LUCB and lilUCB are both 0.3. For each time $T = 150+50i$ for $0\le i\le 9$, we run 100 times and average the result to get the error probability. The result is shown in Figure~\ref{fig:general_experiment}. We note that our algorithm performs almost the same as Propagating-Inference algorithm. 
Our CCPE-General algorithm is a fixed confidence algorithm without requirement for budget $T$, and our algorithm can be applied to causal graphs with hidden variables. 
\begin{figure}[htb] 
	\centering 
	\includegraphics[width=0.50\textwidth]{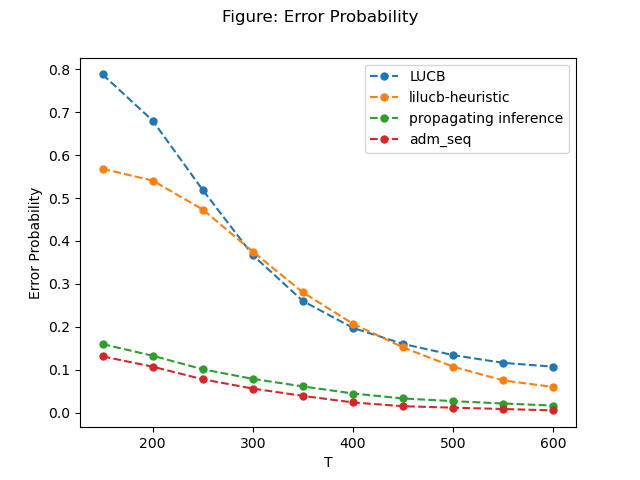}
	\caption{Error Probabiltiy for Experiment 2}
	\label{fig:general_experiment} 
\end{figure}

\paragraph{Experiment 3}
In this paragraph, we provide an experiment to show that CCPE-General algorithm can be applied to broader causal graphs with hidden variables. Since there is no previous algorithm working on both combinatorial action set and existence of hidden variable, we compare our result with LUCB and lilUCB-heuristic.

Our causal graph are constructed as follows: $\bX = X_0,X_1,\cdots,X_{n+1}$, where $X_i\to Y$ and $X_1\to X_i$ for $2\le i\le n+1$, $X_{1}\to X_0$. $\bU = \{U_1,\cdots,U_n\}$ and $U_i\to X_{i+2}, U_i\to X_{0}$ for $2\le i\le n+1$. For action set $\{do(\bS=\bs)\mid |\bS|=2, \bS\subset \{X_2,\cdots,X_{n+1}\}\}$. 
Each $U_i$ satisfies $P(U_i=1)=0.5$. $P(X_0=1)=\min\{\frac{1}{n}\sum_{i=0}^{n-1}U_i+0.1X_1,1\}, $. $P(X_1=1)=0.5$. $P(X_i=1)=0.5$ if $X_1 = U_{i-2}= 1$ and otherwise 0.4. For the reward variable $Y$, $P(Y=1)=0.4X_2+0.4X_3+\frac{0.2}{n}\sum_{i=4}^{n+2}X_i$. 

For this task, by grid search, we set $\alpha=0.25$ for exploration parameter of LUCB and lilUCB, and $\alpha_O=0.3, \alpha_I=0.4$ for CCPE-General algorithm (In the figures below, we call our algorithm "\mbox{adm\_seq}" since it uses admissible sequence.) We compare the error probability and sample complexity for them. The results are shown in Figure~\ref{fig:general_hidden_samplecomp} and Figure~\ref{fig:general_hidden_errorprob}. Our CCPE-General algorithm wins in both metrics.

\begin{figure}[H] 
	 \centering
	 
	 \subfigure[Sample Complexity for Experiment 3]{
    \label{fig:general_hidden_samplecomp} 
   \includegraphics[width=0.4\textwidth]{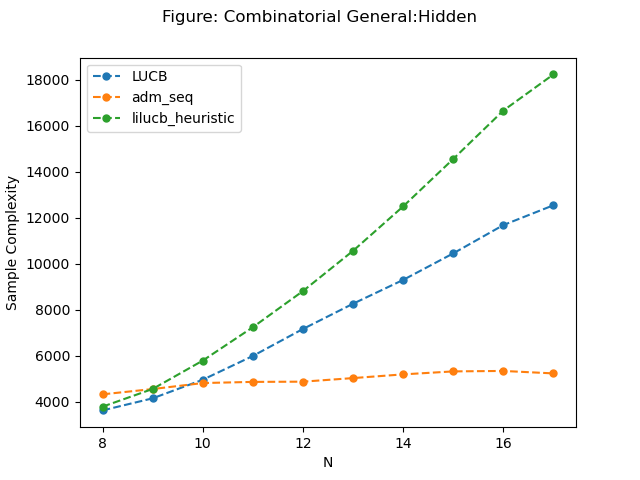}}
    \hspace{0.5in}
	\subfigure[Error Probability for Experiment 3]{
    \label{fig:general_hidden_errorprob} 
    \includegraphics[width=0.4\textwidth]{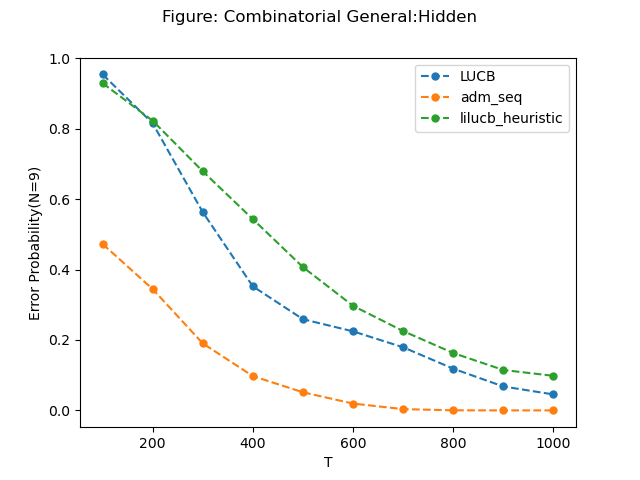}}
  \caption{Experiment 3}
  \label{fig:twopicture} 

\end{figure}

\section{Fixed Budget Causal Bandit Algorithm}
In this section, we provide a preliminary fixed budget causal bandit algorithm, which based on successive reject algorithm and our previous analysis for causal bandit. The previous causal bandit algorithm in fixed budget always directly estimate the observation threshold $m$. However, to derive a gap-dependent result, this method does not work. Our Causal Successive Reject avoids the estimation for observation threshold and get a better gap-dependent result.
Note that $T_a(t) = \min_{\bz}T_{a,\bz}(t)$ for $\bz \in \{0,1\}^{|Z_a|}$, then we can get the simple causal successive reject algorithm as follows:

\begin{algorithm}[H]
	\caption{{\rm Causal Successive Reject}}
	\label{alg:Causal SR}
	\begin{algorithmic}[1]
	\STATE Input: Causal Graph $G$, action set $A$, budget $T$, parameter $\varepsilon$.
	\STATE Initialize $t=1, T_a=0,\hat{\mu}_a=0$ for all arms $a \in \bA$. Define $|\bA|=N$. $A_0 = A$
	
	\STATE Perform $do()$ for $T/2$ times, and update $T_a(t)$ for all $a$.
	
	\STATE Set $n_k = \frac{\frac{T}{2}-N}{\overline{\log N}(N+1-k)}$ for $k=1,2,\cdots,n-1$.
	
	\FOR{each phase $k=1,2,\cdots,n-1$:}

	    \FOR{$i=1,2,\cdots,\lceil(N+1-k)(n_k-n_{k-1})\rceil$}
	    
	    \STATE Perform intervention $a$ for action with least $T_a+N_a$, and $N_a = N_a+1$.
	    \ENDFOR
	    
	    \STATE Denote $a_k=\argmin_{a \in \bA_{k-1}}\hat{\mu}_{a}$, where $\hat{\mu}_a$ follows the same definition of Algorithm~\ref{theorem:general}.
	    $A_k = A_{k-1}\backslash\{a_k\}$. 
	    
	    \IF{$|A_k| = 1$}
	    \RETURN $A_k$.
	    \ENDIF

	\ENDFOR

\end{algorithmic}
\end{algorithm}

\begin{theorem}\label{thm:fixed_budget_parallel}
    The algorithm 4 will return the $\varepsilon$-optimal arm within error probability
    \begin{align*}
        P(\mu_{a^o}< \mu_{a^*}-\varepsilon)\le 4I_aN^2\exp\left\{-\frac{\frac{T}{2}-N}{128\overline{\log N}H_3}\right\},
    \end{align*}
    where $H_3 = \max_{k=1,2,\cdots,N}\{\alpha_{k}^{-1}(\max\{\Delta^{(k)}, \varepsilon\})^{-2}\}$, and $\alpha_k$ is defined by 
    $$ \alpha_{k}=\left\{
    \begin{aligned}
    \frac{1+\sum_{i=k+1}^N \frac{1}{i}}{m} \ \ \  & \mbox{if} \ \ k>m\\
    \frac{1}{k} + \frac{\sum_{i=m+1}^N \frac{1}{i}}{m}\ \ \ \  & \mbox{if} \ \ k\le m 
    \end{aligned}
    \right.
    $$
    where $m$ is defined in \ref{def:othresholdm} with respect to $q_a$ similar to Theorem~\ref{theorem:general}
\end{theorem}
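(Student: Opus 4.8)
The plan is to adapt the classical successive-reject (SR) analysis of Audibert--Bubeck--Munos to the causal setting, where the crucial new ingredient is that the initial $T/2$ null interventions endow each action $a$ with a head start of observational samples, and the load-balancing pull rule (always intervening on the action with the smallest $T_a+N_a$) redistributes the interventional budget away from well-observed actions toward the hard ones. First I would set up a good event $\mathcal{G}$ on which all estimates behave. For the observational counts, a Chernoff bound shows that after the $T/2$ rounds of $do()$ one has $T_a = \min_{\bz} T_{a,\bz} \gtrsim q_a \cdot T/2$ for every action simultaneously (this is the step that introduces the $q_a$'s). For the estimate accuracy I would invoke the admissible-sequence concentration of Lemma~\ref{lemma:general_obs_confidencebound} together with the Chernoff-type bound of Lemma~\ref{lemma:chernoff-anytimeconfidencebound}; because that estimator is a product over the $2^{Z_a}$ values of $\bz$, its confidence radius carries the union factor $I_a = 2^{Z_a}$, which is the source of the $I_a$ in the stated bound. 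On $\mathcal{G}$ the interventional empirical means likewise concentrate at rate $1/\sqrt{N_a}$, so the effective accuracy of $\hat{\mu}_a$ scales like $(T_a+N_a)^{-1/2}$.

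The heart of the argument is to lower bound the effective sample count $T_a+N_a$ of each action surviving to a given phase, and to show it equals, up to a constant, $\alpha_k \cdot \frac{T/2-N}{\overline{\log N}}$. I would split the actions into the $m$ hard ones (those ranked $\le m$ by the observation threshold, with small $q_a$) and the $N-m$ easy ones. In the classical SR schedule the action of rank $k$ receives $\tfrac{1}{k}\cdot\tfrac{T/2-N}{\overline{\log N}}$ interventional pulls. The easy actions never need these pulls: since $q_a \ge 1/m$ for $k>m$ by the definition of $m$, their observational head start already gives $T_a \gtrsim (T/2)/m$, which one checks exceeds $\alpha_k\tfrac{T/2-N}{\overline{\log N}}$ because $1+\sum_{i=k+1}^N 1/i \le \overline{\log N}$. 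Consequently the budget $\sum_{i=m+1}^N \tfrac{1}{i}\cdot\tfrac{T/2-N}{\overline{\log N}}$ that SR would have spent on the easy actions is, under the smallest-$T_a+N_a$ rule, diverted to the $m$ hard actions, each receiving an extra $\tfrac{1}{m}\sum_{i=m+1}^N\tfrac{1}{i}$ share. A hard action of rank $k$ therefore accumulates $\big(\tfrac{1}{k}+\tfrac{1}{m}\sum_{i=m+1}^N\tfrac1i\big)\tfrac{T/2-N}{\overline{\log N}} = \alpha_k\tfrac{T/2-N}{\overline{\log N}}$ effective samples, matching the two branches of the definition of $\alpha_k$.

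With the effective counts in hand, I would run the standard SR error decomposition: if the output is not $\varepsilon$-optimal, then at some phase $k$ an $\varepsilon$-bad action must have beaten a good action on its empirical mean, an event of probability at most $\exp\!\big(-c\,(T_a+N_a)\max\{\Delta^{(k)},\varepsilon\}^2\big)$ by the concentration on $\mathcal{G}$. Substituting $T_a+N_a \ge \alpha_k \tfrac{T/2-N}{\overline{\log N}}$ gives an exponent $\ge \tfrac{T/2-N}{128\,\overline{\log N}}\,\alpha_k\max\{\Delta^{(k)},\varepsilon\}^2 \ge \tfrac{T/2-N}{128\,\overline{\log N}\,H_3}$ by the very definition $H_3 = \max_k \alpha_k^{-1}\max\{\Delta^{(k)},\varepsilon\}^{-2}$. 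A union bound over the $\le N$ phases, the $\le N$ surviving actions per phase, and the $I_a$ values of $\bz$ in the observational estimator produces the prefactor $4 I_a N^2$ and completes the proof.

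The step I expect to be the main obstacle is the rigorous version of the budget-redistribution claim: one must argue that the greedy "pull the least-sampled surviving action" rule genuinely realizes the idealized $\alpha_k$ allocation, controlling rounding across phases (the $\lceil\cdot\rceil$ and the $-N$ term), and reconcile the two orderings in play --- the gap ordering $\Delta^{(1)}\le\cdots\le\Delta^{(N)}$ governing $\Delta^{(k)}$ with the $q_a$-ordering defining $m$. Making the effective-count bound hold simultaneously for every surviving action, rather than for a single fixed action, is where the analysis is most delicate; the observational concentration, with its $I_a$ factor and the product-estimator bias controlled via Lemma~\ref{lemma:general_obs_confidencebound}, is technically involved but essentially follows the fixed-confidence analysis already carried out for CCPE-General.
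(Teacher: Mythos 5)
Your proposal follows essentially the same route as the paper's proof: a Chernoff bound giving each action an observational head start of roughly $q_a T/2$ samples, the observation that the least-sampled pull rule concentrates the interventional budget on the at most $m$ under-observed actions (yielding the effective count $\alpha_k\cdot\frac{T/2-N}{\overline{\log N}}$, which the paper establishes in two phase-indexed lemmas for $k\le N-m$ and $k>N-m$), a combined concentration bound of the form $4I_a\exp\{-D_a(t)\epsilon^2/32\}$ using whichever of $T_a$ or $N_a$ is at least half of $D_a$, and the standard successive-reject good-event/elimination argument with a union bound over phases and arms. The technical difficulty you flag (making the budget-redistribution claim rigorous across phases and roundings) is exactly where the paper's two counting lemmas do the work, so your plan is sound and matches the paper's.
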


To show that our algorithm outperforms the classical successive reject and sequential halving algorithm, it is obvious that $H_3\le H_2$, where $H_2 = \max_{k=1,2,\cdots,N}\{k\max\{\Delta^{(k)}, \varepsilon\}^{-2}\}$, since $\alpha_k\ge \frac{1}{k}$.

\begin{proof}
     We also denote $T_a(t), N_a(t)$ as the value of $T_a$ and $N_a$ at the round $t$.
     The main idea is that: Each stage we spend half budget to observe, and spend the remaining budget to supplement the arms which are not observed enough. The main idea of proof is to show that in each stage, each arm in $A_k$ has $T_a(t)+D_a(t) \ge m_k$ times, which leads to a brand-new result.
     
     Denote the set of arm $S = \{a\in \bA\mid q_a<1/m\}$, then $|S|\le m$.
First,  for $a \notin S$, by chernoff bound, it has been observed by $\hat{q}_a\cdot T/2 \ge \frac{T}{4m}$ with probability $1-\delta$, where $\delta = 6N\cdot \exp\{-\frac{T}{24m}\}$. Hence $T_a(T/2)\ge \frac{T}{4m}$ for $a \notin S$.

First we prove the following lemma:
\begin{lemma}\label{lemma:each_stage_number_times}
    After stage $1\le k\le N-m$, all the arms in $A_k$ must have $N_a(t)+T_a(t)\ge m_k$, where $m_k =\sum_{i=1}^k \frac{(N+1-i)(n_i-n_{i-1})}{2m}\ge \lceil \frac{\frac{T}{2}-N}{2\overline{\log N}\cdot m}(1+\sum_{i=N+2-k}^N\frac{1}{i})\rceil$.
\end{lemma}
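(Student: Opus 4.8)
The plan is to run an induction over the elimination phases, using the water-filling nature of the sampling rule (always intervene on the surviving action with least $T_a+N_a$) together with the observation-threshold bound $|S|\le m$, where $S=\{a\in\bA: q_a<1/m\}$. Throughout I would condition on the high-probability event established just before the lemma by a Chernoff bound, namely that every arm $a\notin S$ has been passively observed at least $T/(4m)$ times during the initial $T/2$ rounds of $do()$, so that $T_a(t)\ge T/(4m)$ for all such $a$ at every later round.

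First I would carry out the routine simplification of the per-phase budget. Writing $C=\frac{T/2-N}{\overline{\log N}}$ and $n_0=0$, the identity $(N+1-i)n_i=C$ gives $(N+1-i)(n_i-n_{i-1})=C$ for $i=1$ and $=\frac{C}{N+2-i}$ for $i\ge 2$; summing yields
\[
m_k=\sum_{i=1}^k\frac{(N+1-i)(n_i-n_{i-1})}{2m}=\frac{C}{2m}\Big(1+\sum_{i=N+2-k}^{N}\tfrac1i\Big),
\]
which is exactly the claimed closed form (the ceiling on the number of interventions only increases the actual counts, giving the stated $\ge$). In the same step I would record the bound $m_k\le m_{N-m}\le\frac{T/2-N}{2m}\le\frac{T}{4m}$ for all $k\le N-m$, using $1+\sum_{i=m+2}^N\frac1i\le\overline{\log N}$. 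Consequently every arm $a\notin S$ satisfies $T_a(t)\ge T/(4m)\ge m_k$ automatically, so such arms never constrain the minimum count and never receive interventions while the current minimum is still below $m_k$.

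The core of the argument is the inductive step, phrased as: at the end of phase $i$ every arm in $A_i$ has $T_a+N_a\ge m_i$. Assuming this for $i-1$, all arms of $A_{i-1}$ begin phase $i$ with count at least $m_{i-1}$. The only arms that can lie below $m_i\le T/(4m)$ are those in $S\cap A_{i-1}$, of which there are at most $m$; since the rule always intervenes on the globally least-sampled surviving arm, whenever some arm of $A_{i-1}$ is still below $m_i$ the global minimum is itself below $m_i$, hence is a deficient arm in $S$, and no intervention is wasted on an arm already at level $m_i$. Therefore lifting all of $A_{i-1}$ to level $m_i$ costs at most the total deficit $m\,(m_i-m_{i-1})=\tfrac12(N+1-i)(n_i-n_{i-1})$, which is at most half the budget $\lceil(N+1-i)(n_i-n_{i-1})\rceil$ actually spent in phase $i$; so the budget more than suffices and any leftover interventions only raise counts further. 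Passing to $A_i\subseteq A_{i-1}$ preserves the bound, closing the induction, with base case $i=0$ being $m_0=0$.

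The step I expect to be the main obstacle — and the place where the constant $2m$ in the definition of $m_k$ is calibrated — is this inductive water-filling estimate: one must simultaneously use that (i) at most $m$ arms are ever deficient (precisely the observation-threshold inequality $|S|\le m$ together with the Chernoff guarantee $T_a\ge T/(4m)$ for $a\notin S$, valid because $m_{N-m}\le T/(4m)$), and (ii) the factor $2$ provides enough slack so that the per-phase budget $(N+1-i)(n_i-n_{i-1})$ dominates the worst-case deficit $m(m_i-m_{i-1})$ and absorbs the ceiling rounding. Once these two facts are in place, the induction together with the harmonic-sum bookkeeping is routine.
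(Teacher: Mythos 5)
Your proposal is correct and follows essentially the same route as the paper's proof: condition on the Chernoff event giving $T_a\ge T/(4m)$ for the at most $m$ arms outside $S$, note $m_k\le T/(4m)$, and run the water-filling induction in which each phase's budget $(N+1-k)(n_k-n_{k-1})=2m(m_k-m_{k-1})$ is twice the worst-case deficit over the at most $m$ deficient arms, with the least-first sampling rule guaranteeing no intervention is wasted. Your closed-form derivation of $m_k$ by telescoping is the same identity the paper obtains by Abel summation, and your explicit induction merely makes precise what the paper leaves implicit.
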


\begin{proof}
Let $D_a(t) = T_a(t)+N_a(t)$.
 Denote $m_0 = n_0 = 0$, then 
For $a\notin S$, $T_a(t) \ge \frac{T}{4m}$. Thus number of arm $a$ with $T_a(t)\le \frac{T}{4m}$ is less than $m$. Then the intervention in stage $k$ will only performed on $D_a(t)\le \frac{T}{4m}$ unless all the arms have $D_a(t)\ge \frac{T}{4m}\ge m_k$ times.

If all arms have $D_a(t)\ge \frac{T}{4m}\ge m_k$ times, the lemma holds. If it is not true, the $(N+1-k)(n_k-n_{k-1})$ interventions will performed on at most $m$ arms. Hence all the arms must have $N_a(t)\ge m_{k-1}+ \frac{(N+1-k)(n_k-n_{k-1})}{m}=m_{k-1}+2(m_k-m_{k-1}) \ge m_k$ times after stage $k\le N-m$.

Then $m_k = \sum_{i=1}^k \frac{(N+1-i)(n_i-n_{i-1})}{2m}\ge \sum_{i=1}^{k-1} \frac{n_i}{2m} +\frac{(N+1-k)n_k}{2m}\ge \frac{\frac{T}{2}-N}{\overline{\log N}\cdot 2m}(1+\sum_{i=N+2-k}^N\frac{1}{i})$.
\end{proof}

\begin{lemma}\label{lemma:each_stage_number_times_k_ge_m}
    After stage $k>N-m$, all the arms in $A_k$ have $T_a(t)+N_a(t)\ge m_k'$, where $m_k' = \frac{\frac{T}{2}-N}{2\bar{\log N}}\cdot \alpha_{N+1-k}$. 
\end{lemma}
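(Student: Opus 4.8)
The plan is to extend the inductive argument of Lemma~\ref{lemma:each_stage_number_times} past the threshold $k=N-m$, with the crucial new ingredient being that once $k>N-m$ the active set $A_{k-1}$ contains only $N+1-k\le m$ arms, so in each phase the interventions spread over strictly fewer than $m$ arms and the minimum of $D_a(t):=T_a(t)+N_a(t)$ climbs faster. Writing $C:=\frac{T/2-N}{\overline{\log N}}$ so that $n_k=C/(N+1-k)$, I would phrase the dynamics as a water-filling process: at the start of phase $k$ every arm of $A_{k-1}$ already has $D_a\ge m_{k-1}'$ by the induction hypothesis, and since the algorithm always intervenes on the active arm of least $T_a+N_a$, adding $I_k:=\lceil (N+1-k)(n_k-n_{k-1})\rceil$ interventions raises $\min_{a\in A_{k-1}}D_a$ by at least $I_k/(N+1-k)\ge n_k-n_{k-1}$.

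The base case is supplied directly by Lemma~\ref{lemma:each_stage_number_times}: at $k=N-m$ the active set has exactly $m$ arms and $m_{N-m}=\frac{C}{2}\alpha_{m+1}$, which is precisely the target formula $m_k'=\frac{\frac{T}{2}-N}{2\overline{\log N}}\alpha_{N+1-k}$ evaluated at the boundary index $j=N+1-k=m+1$. For the inductive step I would record the two elementary identities
\begin{align}
 n_k-n_{k-1} = \frac{C}{(N+1-k)(N+2-k)},\qquad \alpha_{N+1-k}-\alpha_{N+2-k}=\frac{1}{(N+1-k)(N+2-k)},\nonumber
\end{align}
where the second holds for every $j=N+1-k\le m$, including the junction $j=m$ that glues together the two branches of $\alpha$ (there one checks $\alpha_m-\alpha_{m+1}=\frac{1}{m}\big(\sum_{i=m+1}^N\frac1i-\sum_{i=m+2}^N\frac1i\big)=\frac{1}{m(m+1)}$). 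Together they give $n_k-n_{k-1}=2(m_k'-m_{k-1}')$, so the per-phase rise of the minimum is exactly twice the required increment. Hence after phase $k$ every arm of $A_k\subseteq A_{k-1}$ satisfies $D_a\ge m_{k-1}'+2(m_k'-m_{k-1}')=2m_k'-m_{k-1}'\ge m_k'$, closing the induction; the factor-two slack built into $m_k'$ absorbs the floor in $I_k$ and the integer rounding, exactly as in Lemma~\ref{lemma:each_stage_number_times}.

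The step I expect to be the main obstacle is justifying the denominator $N+1-k$ in the water-filling bound, i.e. that the phase-$k$ interventions are genuinely confined to the $N+1-k$ active arms rather than being diluted over $m$ arms as in the earlier regime. This rests on reading "perform intervention on the action with least $T_a+N_a$" as a choice restricted to the current active set $A_{k-1}$, and on checking that arms rejected in earlier phases never reabsorb interventions. I would make this explicit and observe that the bound is tightest precisely when all $N+1-k$ active arms sit at a common minimum, whereas any already well-observed arm (one with $T_a\ge T/(4m)$ carried over from the $T/2$ observation rounds) only makes the minimum rise faster. A secondary point to verify is the monotonicity $m_{k-1}'\le m_k'$ used in the last inequality, and to note that the boundary convention $n_0=0$ is irrelevant here since phase $1$ lies in the regime $k\le N-m$ already handled by Lemma~\ref{lemma:each_stage_number_times}.
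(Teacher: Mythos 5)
Your proof is correct and follows essentially the same route as the paper's: both arguments rest on the fact that for $k>N-m$ the active set has $N+1-k\le m$ arms, so each stage raises $\min_{a}(T_a(t)+N_a(t))$ by at least $n_k-n_{k-1}$, and on the same arithmetic identifying the accumulated increments with $\frac{T/2-N}{\overline{\log N}}\cdot\alpha_{N+1-k}$ starting from the base value $m_{N-m}$ supplied by the preceding lemma. The paper telescopes $\sum_{N-m<i\le k}(n_i-n_{i-1})=n_k-n_{N-m}$ in a single display whereas you run a step-by-step induction via $n_k-n_{k-1}=2(m_k'-m_{k-1}')$; this is only a presentational difference, and the two points you flag (the factor-two slack absorbing the integer rounding, and reading the least-$(T_a+N_a)$ selection as restricted to the active set) are indeed treated implicitly in the paper exactly as you anticipated.
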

\begin{proof}
For $a\in A_k$, in stage $k>N-m$, $|A_k|= N-k+1\le m$. All the arms in $A_k$ must have $T_a(t)+N_a(t)\ge m_{k-1}'+(N+1-k)(n_k-n_{k-1})/(N+1-k)= m_{k-1}'+n_k-n_{k-1}$ times, where $m_{N-m}' = m_{N-m}$. 

Thus after stage $k>N-m$, all the arms in $A_k$ must have
\begin{align*}
T_a(t)+N_a(t)&\ge m_{N-m}+\sum_{N-m<i\le k}(n_i-n_{i-1}) \\&= m_{N-m}+(n_k-n_{N-m})\\&\ge \frac{\frac{T}{2}-N}{\overline{\log N}\cdot m}(1+\sum_{i=m+2}^N\frac{1}{i})+1+(\frac{\frac{T}{2}-N}{\overline{\log N}}(\frac{1}{N+1-k}-\frac{1}{m+1}))-1\\
& =  \frac{\frac{T}{2}-N}{\overline{\log N}\cdot m}(\sum_{i=m+1}^N\frac{1}{i})+(\frac{\frac{T}{2}-N}{\overline{\log N}}(\frac{1}{N+1-k}))\\
&= \frac{\frac{T}{2}-N}{\overline{\log N}}(\frac{1}{N+1-k}+\frac{1}{m(\sum_{i=m+1}^N \frac{1}{i})^{-1}})\\
&= \frac{\frac{T}{2}-N}{\overline{\log N}}\cdot \alpha_{N+1-k}\\
&\ge m_k'.
\end{align*}
\end{proof}

\begin{lemma}\label{lemma:fixedbudget_singletimebound}
		In round t, with probability $1-\frac{\delta}{8nt^3}$,
		\begin{align}
			|\hat{\mu}_{obs,a}-\mu_a|<4\sqrt{\frac{1}{T_a(t)}\log\frac{4I_a}{\delta}}.
		\end{align}
	\end{lemma}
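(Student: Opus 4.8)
The plan is to re-run the argument of Lemma~\ref{lemma:general_obs_confidencebound} almost verbatim, replacing every invocation of the anytime Chernoff bound (Lemma~\ref{lemma:chernoff-anytimeconfidencebound}) by an ordinary single-round multiplicative Chernoff bound. Because we now only require concentration at the one fixed round $t$, the spurious $\log(2t)$ factor and the $kZ_a|\bA|$ union-bound factors of the anytime version collapse, leaving only the union over the $I_a = 2^{Z_a}$ possible values of $\bz$; this is exactly what produces the cleaner radius $4\sqrt{T_a(t)^{-1}\log(4I_a/\delta)}$. First I would recall the decomposition: using the admissible-sequence identity Eq.\eqref{adm_seq_formula} for $\mu_a$ and the estimator Eq.\eqref{eq:adm_seq_estimate} for $\hat{\mu}_{obs,a}$, I write both quantities as sums over $\bz$ of $r_{a,\bz}(t)$ (resp.\ $P(Y=1\mid\cdots)$) times a product of $k$ conditional factors $p_{a,\bz,l}(t)$ (resp.\ $P(\bZ_l=\bz_l\mid\cdots)$). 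If $T_a(t)$ is so small that the claimed radius already exceeds $1$, the bound is trivial, so I assume otherwise.

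Next I would establish the single-round concentration. At the fixed round $t$, each $r_{a,\bz}(t)$ and each $p_{a,\bz,l}(t)$ is an empirical Bernoulli mean over $T_{a,\bz}(t)$, resp.\ $n_{a,\bz,l}(t)$, samples. I apply the multiplicative Chernoff inequality to each, choosing the per-event failure probability so that a union bound over the $I_a$ values of $\bz$ (and over the $k$ levels and the $r$-estimate) yields total failure probability $\delta/(8nt^3)$; the $t^3$ and $n$ factors are inserted precisely so that the later union bound over all rounds and all nodes in the proof of Theorem~\ref{thm:fixed_budget_parallel} is summable. This gives, for each conditional factor, a radius of the single-round form $2\sqrt{3\,P(\bZ_l=\bz_l\mid\cdots)\,n_{a,\bz,l}(t)^{-1}\log(4I_a/\delta)}$, which is the non-anytime analogue of Lemma~\ref{lemma:chernoff-anytimeconfidencebound}.

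Then I would run the telescoping/peeling step exactly as in Lemma~\ref{lemma:general_obs_confidencebound}: I express the product estimator minus the true product as a sum of hybrid terms differing in a single factor, bound each term with the concentration above, and apply Cauchy--Schwarz over $\bz_l$ together with $\sum_{\bz_l}p_{a,\bz,l}(t)=\sum_{\bz_l}P(\bZ_l=\bz_l\mid\cdots)=1$ and Lemma~\ref{lemma:T_a,z,l(t)bound}, which converts $n_{a,\bz,l}(t)$ into $2^{\,k-l+1+|\bZ_l|}\,T_{a,\bz}(t)$. The $2^{|\bZ_l|}$ weights cancel against the $2^{|\bZ_l|}$ terms counted in Cauchy--Schwarz, producing a geometric series $\sum_l 2^{-l/2}$ that sums to a constant; bounding $T_{a,\bz}(t)\ge T_a(t)=\min_{\bz}T_{a,\bz}(t)$ then collapses the sum into the stated $4\sqrt{T_a(t)^{-1}\log(4I_a/\delta)}$.

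The hard part will be the constant and probability bookkeeping rather than any new idea: I must verify that replacing the anytime bound by the single-round Chernoff bound and collapsing the union-bound factors really yields the clean constant $4$ and argument $4I_a/\delta$, and in particular that the Cauchy--Schwarz step cancels the $2^{|\bZ_l|}$ factors so the geometric sum accumulates no residual dependence on $k$ or $Z_a$. Everything else is a faithful transcription of the already-established anytime proof of Lemma~\ref{lemma:general_obs_confidencebound}.
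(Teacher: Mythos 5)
Your proposal matches the paper's own proof essentially step for step: the paper likewise handles the trivial small-$T_a(t)$ case, replaces the anytime bound of Lemma~\ref{lemma:chernoff-anytimeconfidencebound} with single-round Hoeffding/Chernoff bounds for $r_{a,\bz}(t)$ and the conditional factors $p_{a,\bz,l}(t)$, and then reruns the telescoping/Cauchy--Schwarz peeling of Lemma~\ref{lemma:general_obs_confidencebound} using Lemma~\ref{lemma:T_a,z,l(t)bound} so the $2^{|\bZ_l|}$ factors cancel and the geometric series yields the constant $4$. The approach is correct and not genuinely different from the paper's.
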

	\begin{proof}
	    When $T_a(t)\ge 12\log \frac{4I_a}{\delta}$, we know this lemma is trivial since $\mu_a, \hat{\mu}_{obs,a} \in [0,1]$. Otherwise, if $t<\frac{6}{q_a}$, define $Q=\frac{6}{q_a}\log(\frac{4I_a}{\delta})$, based on $T_a(t)\ge 12\log\frac{4I_a}{\delta}$, then 
		\begin{align*}
		    P\left(t<\frac{6}{q_a}\log(1/\delta)\right)\le P\left(T_a(Q)\ge 12\log\frac{4I_a}{\delta}\right). 
		\end{align*}
		Thus by Chernoff bound, we know 
		\begin{align*}
		    P\left(T_a(Q)\ge 12\log\frac{4I_a}{\delta}\right)= P\left(\hat{q}_a(Q)\ge 2q_a\right)\le \delta,
		\end{align*}
		where $\hat{q}_a(Q) = \frac{T_a(Q)}{Q}$.

		Hence with probability at least $1-\delta$, now we have $t\ge \frac{6}{q_a}\log(4I_a/\delta)$.
		Also, since $\hat{P}(\bZ_i=z_i,X_i=x_i, i\le l-1)=T_{a,\bz,l}(t)/t$, by Chernoff bound, when $t\ge \frac{6}{q_a}\log (4I_a/\delta)$, with probability
		$1-\exp\{-\frac{P(\bZ_i=z_i,X_i=x_i, i\le l-1)\cdot t}{3}\}\ge 1-\delta$, we have
		\begin{align*}
		    \hat{P}(\bZ_i=z_i,X_i=x_i, i\le l-1)&\le 2P(\bZ_i=z_i,X_i=x_i, i\le l-1).
		\end{align*}
	    
		 Now since
		 
		 \begin{align*}
		     P(\bZ_l = \bz_l\mid \bZ_i = \bz_i, X_i = x_i, i\le l-1)&\ge \frac{ q_a}{P(\bZ_i=z_i, X_i=x_i, i\le l-1)}\\
		     &\ge \frac{ q_a}{2\hat{P}(\bZ_i=z_i, X_i=x_i, i\le l-1)}\\
		     &\ge \frac{q_at}{2T_{a,\bz,l}(t)}\\
		     &\ge \frac{3}{T_{a,\bz,l}(t)}\log \frac{4I_a}{\delta}.
		 \end{align*}

		By Hoeffding's inequality and Chernoff bound, for $a=do(X=x)$, 
		\begin{align*}
			|r_{a,\bz}(t)-P(Y=1\mid \bS=\bs, \bz=\bz)|&\le\sqrt{\frac{1}{2T_{a,\bz}(t)}\log\frac{4I_a}{\delta}},
		\end{align*}
		\begin{align*}
			&|p_{a,\bz,l}(t)-P(\bZ_l=\bz_l\mid \bZ_i=\bz_i, X_i=x_i,i\le l-1)|\\ &\ \ \ \ \ \ \ \  \le \sqrt{\frac{3P(\bZ_l=\bz_l\mid \bZ_i=\bz_i, X_i=x_i,i\le l-1)}{t}\log \frac{4I_a}{\delta}}
		\end{align*}
		at round $2t$ with probability $1-2Z\frac{\delta}{2I_a}=1-\delta$. Hence by Lemma~\ref{lemma:T_a,z,l(t)bound}, Eq \eqref{eq:halfprobability}, we get
		\begin{align}
			&\ \ \ \ \ \hat{\mu}_{obs,a}\\&= \sum_{\bz}r_{a,\bz}(t)\cdot \hat{P}_{a,\bz,k}(t)\nonumber\\
			&\le \sum_{\bz}r_{a,\bz}(t)\cdot \hat{P}_{a,\bz,k-1}(t)\cdot P_t(\bZ_k=\bz_k\mid \bZ_i=\bz_i, X_{i}=x_i, i\le k-1)\nonumber\\&\ \ \ \ \ +\frac{1}{2}\sum_{\bz}r_{a,\bz}(t)\hat{P}_{a,\bz,k-1}(t)\sqrt{\frac{3P(\bZ_k = \bz_k\mid \bZ_i=\bz_i, X_{i}=x_i, i\le k-1)}{2^{|\bZ_k|+1}T_{a,\bz}(t)}\log \frac{4I_a}{\delta}}\nonumber\\
			& \le \sum_{\bz}r_{a,\bz}(t)\cdot \hat{P}_{a,\bz,k-1}(t)\cdot P_t(\bZ_k=\bz_k\mid \bZ_i=\bz_i, X_{i}=x_i, i\le k-1)\nonumber\\&\ \ \ \ +\frac{1}{2}\sum_{\bz}\hat{P}_{a,\bz,k-1}(t)\cdot \sqrt{\frac{3P(\bZ_k = \bz_k\mid \bZ_i=\bz_i, X_{i}=x_i, i\le k-1))}{2^{|\bZ_k|+1}T_{a,\bz}(t)}\log \frac{4I_a}{\delta}}\nonumber\\
			&\le \sum_{\bz}r_{a,\bz}(t)\cdot \hat{P}_{a,\bz,k-1}(t)\cdot P_t(\bZ_k=\bz_k\mid \bZ_i=\bz_i, X_{i}=x_i, i\le k-1)\nonumber\\&\ \ \ \ +\frac{1}{2}\sum_{\bz_k}\cdot \sqrt{\frac{3P(\bZ_k = \bz_k\mid \bZ_i=\bz_i, X_{i}=x_i, i\le k-1))}{2^{|\bZ_k|+1}T_{a,\bz}(t)}\log \frac{4I_a}{\delta}}\nonumber\\
			&\le \sum_{\bz}r_{a,\bz}(t)\cdot \hat{P}_{a,\bz,k-1}(t)\cdot P_t(\bZ_k=\bz_k\mid \bZ_i=\bz_i, X_{i}=x_i, i\le k-1)\nonumber\\
			&\ \ \ \ + \frac{1}{2}\sqrt{\frac{3\cdot 2^{|\bZ_k|}}{2\cdot 2^{|\bZ_k|}T_{a,\bz}(t)}\log \frac{4I_a}{\delta}}  \ \ \ \ \ (\mbox{Cauchy-Schwarz Inequality})\nonumber\\
			&\le \sum_{\bz}r_{a,\bz}(t)\cdot \hat{P}_{a,\bz,k-1}(t)\cdot P_t(\bZ_k=\bz_k\mid \bZ_i=\bz_i, X_{i}=x_i, i\le k-1)\\&\ \ \ \ \ + \frac{1}{2}\sqrt{\frac{3}{2\cdot T_{a,\bz}(t)}\log \frac{4I_a}{\delta}}\nonumber\\
			&\le \cdots \nonumber\\
			&\le \sum_{\bz}r_{a,\bz}(t)P_{a,\bz,k}+\frac{1}{2}\sum_{i=1}^k \sqrt{\frac{3}{2^{i}T_{a,\bz}(t)}\log \frac{4I_a}{\delta}}\nonumber\\
			&\le \mu_a+\frac{1}{2-\sqrt{2}}\sqrt{\frac{3}{T_{a,\bz}(t)}\log \frac{4I_a}{\delta}} + \sqrt{\frac{1}{2T_{a,\bz}(t)}\log\frac{2}{\delta}}.\nonumber\\
			&\le \mu_a+ 4\sqrt{\frac{1}{T_a(t)}\log\frac{4I_a}{\delta}}.\nonumber
		\end{align}
	\end{proof}
Now we prove another lemma to bound the error probability of each stage.

\begin{lemma}
    For an arm $a \in \bA$, $N_a(t)+T_a(t) = D_a(t)$, then we have 
    \begin{align}
        P(|\hat{\mu}_a-\mu_a|>\epsilon)<4I_a\exp\{-D_a(t)(\epsilon^2/32\}.
    \end{align}
\end{lemma}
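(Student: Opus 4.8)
The plan is to control the error of the combined estimator $\hat\mu_a$ by splitting it into its observational contribution (built from the $T_a(t)$ passive samples) and its interventional contribution (built from the $N_a(t)$ intervention samples), and then to exploit $D_a(t)=T_a(t)+N_a(t)$ so that whichever source holds at least half the total budget governs the exponent. First I would record a large-deviation bound for each source. For the observational estimate I would invert Lemma~\ref{lemma:fixedbudget_singletimebound}: reading its guarantee ``$|\hat\mu_{obs,a}-\mu_a|<4\sqrt{T_a(t)^{-1}\log(4I_a/\delta)}$ with probability $\ge 1-\delta$'' as a relation in $\delta$ and substituting $\eta=4\sqrt{T_a(t)^{-1}\log(4I_a/\delta)}$ gives
\begin{align}
\Pr\{|\hat\mu_{obs,a}-\mu_a|>\eta\}\le 4I_a\exp\{-T_a(t)\eta^2/16\}.\nonumber
\end{align}
The factor $I_a=2^{Z_a}$ is precisely the union bound over the $2^{Z_a}$ configurations $\bz$ of the adjustment variables that already appears inside the proof of Lemma~\ref{lemma:fixedbudget_singletimebound} (the telescoping Cauchy--Schwarz step following Eq.~\eqref{eq:halfprobability}). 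For the interventional estimate, $\hat\mu_{I,a}$ is an empirical average of $N_a(t)$ rewards in $[0,1]$, so Hoeffding's inequality yields $\Pr\{|\hat\mu_{I,a}-\mu_a|>\eta\}\le 2\exp\{-2N_a(t)\eta^2\}$, with no $I_a$ factor.

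Next I would pass from these two single-source tails to the merged estimator. Recall from Line~\ref{line:mergeestimate2} that $\hat\mu_a$ is the midpoint of $[L_a^t,U_a^t]=[L_{O,a}^t,U_{O,a}^t]\cap[L_{I,a}^t,U_{I,a}^t]$. On the event that both confidence intervals cover $\mu_a$, the true value and the midpoint both lie in the intersection, so $|\hat\mu_a-\mu_a|$ is at most the smaller of the two half-widths; hence a deviation of $\hat\mu_a$ beyond $\epsilon$ forces a failure of at least one of the two source concentration events at scale $\epsilon$. This reduces $\{|\hat\mu_a-\mu_a|>\epsilon\}$ to the union of the observational and interventional large-deviation events evaluated at $\eta=\epsilon$.

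Finally I would combine with the budget identity. Since $T_a(t)+N_a(t)=D_a(t)$, at least one of $T_a(t)\ge D_a(t)/2$ or $N_a(t)\ge D_a(t)/2$ holds. In the first case the observational tail gives $4I_a\exp\{-T_a(t)\epsilon^2/16\}\le 4I_a\exp\{-D_a(t)\epsilon^2/32\}$; in the second case the interventional tail gives $2\exp\{-2N_a(t)\epsilon^2\}\le 2\exp\{-D_a(t)\epsilon^2\}\le 4I_a\exp\{-D_a(t)\epsilon^2/32\}$, the slack being absorbed into the prefactor $4I_a\ge 2$. Either way the stated bound follows, with the constant $32=16\cdot 2$ arising from squaring the observational radius constant $4$ and from the halving $D_a(t)/2$.

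The main obstacle I anticipate is the reduction in the third paragraph: making precise that the intersection-midpoint estimator inherits the \emph{smaller} of the two confidence radii, so that exceeding $\epsilon$ really entails an $\epsilon$-scale failure of one source rather than only an $(\epsilon-\beta)$-scale failure, while keeping the prefactors clean. This needs the ``both intervals valid'' event to be handled explicitly and a check that the residual case, in which both radii still exceed $\epsilon$, is either vacuous or already dominated by the claimed bound. The remaining steps---the inversion of Lemma~\ref{lemma:fixedbudget_singletimebound} and the constant bookkeeping---are routine.
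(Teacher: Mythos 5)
Your proposal follows essentially the same route as the paper's proof: invert Lemma~\ref{lemma:fixedbudget_singletimebound} to get the observational tail $4I_a\exp\{-T_a(t)\epsilon^2/16\}$, use Hoeffding for the interventional tail, and case-split on whether $T_a(t)$ or $N_a(t)$ is at least $D_a(t)/2$, with the same $16\cdot 2=32$ bookkeeping. The one step you flag as an obstacle---relating the deviation of the merged midpoint estimator $\hat{\mu}_a$ to the deviations of the two source estimators---is in fact skipped entirely by the paper, whose proof applies each tail bound directly to $\hat{\mu}_a$ as if it were the corresponding single-source estimator, so your treatment is if anything more explicit about where the argument is thin.
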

\begin{proof}
    We know $N_a(t)\ge \frac{D_a(t)}{2}$ or $T_a(t)\ge \frac{N_a(t)}{2}$. When $N_a(t)\ge \frac{N_a(t)}{2}$, by Hoeffding's inequality, we know that
    \begin{align*}
        P(|\hat{\mu}_a-\mu_a|>\epsilon)<2\exp\{-2N_a(t)(\epsilon/2)^2\}<2\exp\{-D_a(t)(\epsilon/2)^2\}.
    \end{align*}

    When $T_a(t)\ge \frac{D_a(t)}{2}$, by Lemma~\ref{lemma:fixedbudget_singletimebound}, we know  \begin{align*}
        P(|\hat{\mu}_a-\mu_a|>\epsilon)<4I_a\exp\{-N_a(t)\epsilon^2/16\}<4I_a\exp\{-D_a(t)\epsilon^2/32\}.
    \end{align*}
    
    Then we complete the proof.
\end{proof}

Hence the event that 
\begin{align*}
    \zeta = \left\{\forall i \in \{1,2,\cdots,N\}, \forall a \in \bA_i, |\hat{\mu}_{a}-\mu_a| < \frac{1}{2}\max\{\Delta^{(N+1-i)},\varepsilon\}\right\}
\end{align*}
doesn't happen within probability at most 
\begin{align*}
    &\ \ \ \ \sum_{i=1}^n \sum_{a \in \bA_i} 4I_a\exp\{-m_i(\frac{\max\{(\Delta^{(N+1-i)})^2, \varepsilon\}}{2})^2/32\}\\&\le \sum_{i=1}^n \sum_{a \in \bA_i} 4I_a\exp\left\{-\alpha_{N+1-k}\cdot \max\{(\Delta^{(N+1-i)}), \varepsilon\}^2\frac{\frac{T}{2}-N}{128\overline{\log N}}\right\}\\
    &\le 4I_aN^2\exp\left\{-\frac{\frac{T}{2}-N}{128\overline{\log N}H_3}\right\},
\end{align*}
where $H_3 = \max_{i=1,2,\cdots,n}\{\alpha_{i}^{-1}(\max\{\Delta^{(i)},\varepsilon\})^{-2}\}$. 

Now we prove that under event $\zeta$, the algorithm output a $\varepsilon$-optimal arm. 

For each stage $k$, we prove that one of the following condition will be satisfied:

(1). All arms in $A_{k-1}$ are $\varepsilon-$optimal.

(2). Stage $k$ eliminate an non-optimal arm $a_k\neq a^*$

In fact, assume (1) does not hold, then there exists at least one arm which is not $\varepsilon-$optimal. Since $|A_{k-1}| = N+1-k$, there must exist an arm $a \in \bA_{k-1}$ with $\mu_{a^*}-\mu_{a}\ge \max\{\varepsilon, \Delta^{(N+1-k)}\}$. Hence because of event $\zeta$,  after stage $k$, all arms in $A_k$ satisfy $\hat{\mu}_{a}-\mu_a< \frac{1}{2}\max\{\Delta^{(N+1-k)}, \varepsilon\}$. Hence 

\begin{align*}
    \hat{\mu}_{a}&\le \mu_a + \frac{1}{2}\max\{\Delta^{(N+1-k)}, \varepsilon\}\\&\le \mu_{a^*}-\max\{\Delta^{(N+1-k)},\varepsilon\}+\frac{1}{2}\max\{\Delta^{(N+1-k)}, \varepsilon\}\\&
    < \hat{\mu}_{a^*}+\max\{\Delta^{(N+1-k)}, \varepsilon\}-\max\{\Delta^{(N+1-k)},\varepsilon\}\\&=\hat{\mu}_{a^*}.
\end{align*}
So the optimal arm $a^*$ will not be eliminated.

Hence if (2) always happen, the remaining arm will be the optimal arm. Otherwise, if (1) happens, the algorithm will return an $\varepsilon$-optimal arm. Hence we complete the proof.
\end{proof}


}
\end{document}